\documentclass{article}

\usepackage[utf8]{inputenc} 
\usepackage[T1]{fontenc}    
\usepackage[margin=0.85in]{geometry} 
\usepackage{amsmath}
\usepackage{amssymb}
\usepackage{amsthm}
\usepackage{multirow}
\usepackage{graphicx}
\usepackage{hyperref}       
\usepackage{url}            
\usepackage{booktabs}       
\usepackage{amsfonts}       
\usepackage{nicefrac}       
\usepackage{microtype}      
\usepackage[dvipsnames]{xcolor}         
\usepackage{natbib}
\usepackage{etoc}

\usepackage{algorithm}
\usepackage[noend]{algpseudocode}
\title{Proximal basin hopping: global optimization with guarantees}

\author{%
  Guillaume Lauga \\
  LJAD \\
  Université Côte d'Azur \\
  France \\
  \texttt{guillaume.lauga@univ-cotedazur.fr}
  \and
  Cesare Molinari \\
  MALGA \\
  Università di Genova \\
  Italy \\
  \texttt{cecio.molinari@gmail.com}
  \and
  Samuel Vaiter \\
  CNRS, LJAD \\
  Université Côte d'Azur \\
  France \\  
  \texttt{samuel.vaiter@univ-cotedazur.fr}
}
\newcommand{\Id}{\mathrm{Id}}
\newcommand{\RR}{\mathbb{R}}
\newcommand{\prox}{\mathrm{prox}}
\newcommand{\Att}{\mathrm{Att}}

\DeclareMathOperator*{\argmin}{arg\,min}
\newtheorem{lemma}{Lemma}

\newtheorem{proposition}{Proposition}
\newtheorem{assumption}{Assumption}
\newtheorem{theorem}{Theorem}
\newtheorem{fact}{Fact}
\newtheorem{remark}{Remark}
\begin{document}

\maketitle

\begin{abstract}
  Global optimization is a challenging problem, with plenty of algorithms displaying empirical success, but scarce theoretical backing. In this work, we propose a new theoretical framework called \textbf{Proximal Basin Hopping} (PBH), carefully tailored to combine \textit{proximal optimization} and \textit{local minimization}. We use it to construct a practical algorithm that converges to the global minimizer with high probability, when using a \textit{finite amount of samples}. Proximal Basin Hopping outperforms well known algorithms with theoretical backing on standard synthetic hard functions, and real problems such as fitting scaling laws for deep learning. Furthermore, the higher the dimension, the better the performance gap.%
\end{abstract}

\section{Introduction} Finding global minimizers of highly non-convex functions is a challenging but crucial task in various fields such as computational chemistry and biology \citep{prentiss2008protein,alvarez2025review}, engineering design \citep{arora1995global,englander2020hopping} and of course machine learning \citep{goodfellow2016deep}. We are interested in solving the following optimization problem
\begin{equation} \label{eq:problem}
    \mathrm{Find } \, \widehat{x} \in \argmin_{x \in \RR^d} f(x).
\end{equation}
where $f$ is a non-convex, continuous function. We will assume that $f$ possesses one global minimizer and some regularity that can be exploited to compute some descent steps (or find local minimizers directly). To tackle this problem, we propose a hybrid approach that sits at the intersection between proximal algorithms \citep{parikh2014proximal} and basin hopping algorithms \citep{wales1997basinhopping}.  \footnotetext{\textbf{Acknowledgements.} This work has been supported by the French government, through the 3IA Cote d’Azur Investments in the project managed by the National Research Agency (ANR) with the reference number ANR-23-IACL-0001, the ANR project PRC MAD ANR-24-CE23-1529 and the support of the “France 2030” funding ANR-23-PEIA-0004 (PDE-AI).}

Standard first-order algorithms can only hope to obtain local solutions to Problem \eqref{eq:problem}, which has prompted many to propose global optimization techniques, most of which are heuristic and do not possess strong convergence guarantees. Global optimization algorithms perform by combining exploration (through random perturbation of the current guesses) and exploitation steps (through function evaluation and/or local minimization) to reach a solution. The exploration requires a number of samples going to infinity to fit an underlying algorithm with increasing precision \citep{fornasier2024consensus,zhang2024inexact}, or a number of samples growing exponentially fast with the dimension to cover the space sufficiently enough \citep{hansen1992global,bouttier2020regret} to reach acceptable accuracy. 
This becomes rapidly cumbersome, even for problems of (nowadays) relatively small dimension (in the few hundreds) \citep{papenmeier2025understanding}. On the other hand, the inexactness of the exploitation steps through local minimization is difficult to account for in the theoretical analysis \citep{gyorgy2011efficient}. Hence, we propose a new framework opening the path to quantify the effect of sample size and inexactness on the convergence to the global minimizer.
\begin{figure}[t] \centering
    \includegraphics[trim={0em 0em 0em 2em}, clip, width=0.35\textwidth]{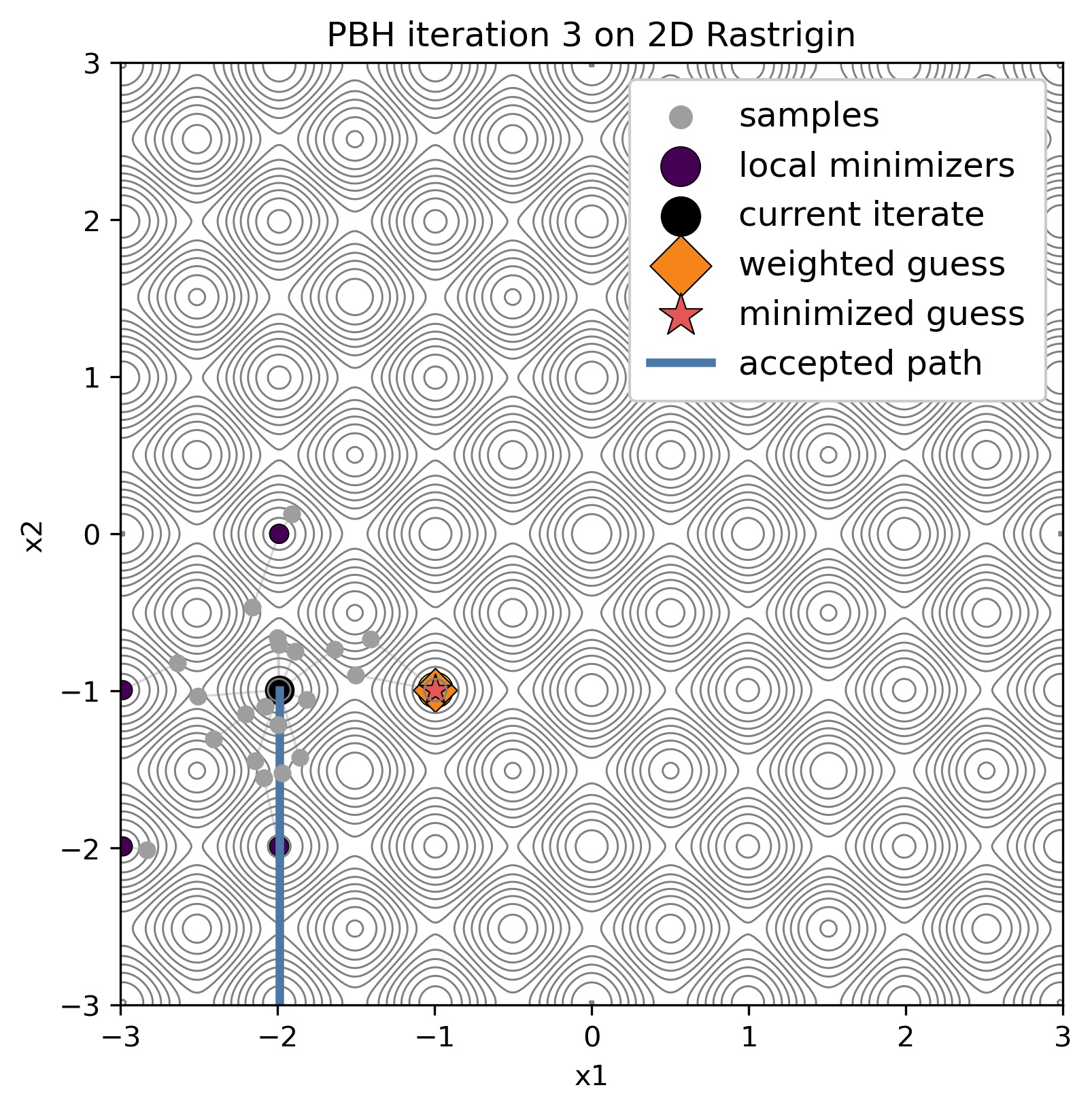}
    \hspace{1em}
    \includegraphics[trim={0em 0em 0em 2em}, clip, width=0.35\textwidth]{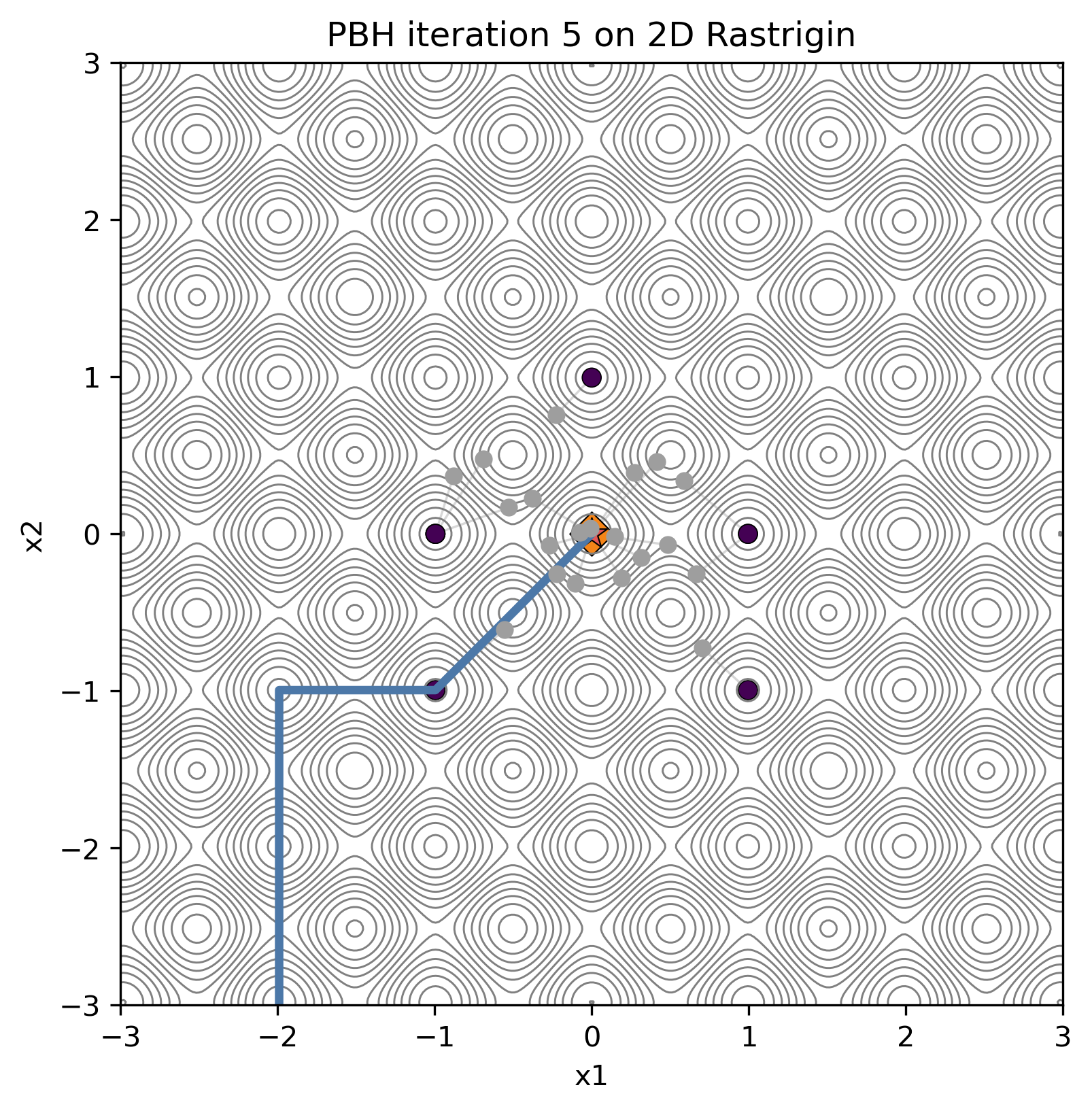}
    \caption{Proximal Basin Hopping on the 2D-Rastrigin function at iteration 3 (left) and iteration 5 (right), where convergence to the global minimizer, $x_\ast = (0,0)$, has occurred. From the current iterate, we draw several samples, that are then sent to their nearest achievable local minimizer, then weighted with respect to the achieved function value giving a new guess that is again sent to its nearest achievable local minimizer (if it is not there already).  
    \label{fig:loc_PBH}}
\end{figure}
\paragraph{Contributions.} The method we propose, which we call \textbf{Proximal Basin Hopping (PBH)} is derived from an ideal operator $\prox_{T_f}^\gamma f (x)$,
where $T_f$ is a deterministic local solver, yielding for (almost) any starting point its achievable local minimizer, and $\prox$ denotes a proximal operation. This operator mimics $T_f(\prox_{f \circ T_f}(x))$ from which we gradually relax the required computations, in order to construct an algorithm using a finite amount of samples and possibly inexact local minimization (see Figure \ref{fig:loc_PBH} for an illustration). We obtain thus four operators, from most ideal to most practical:
\begin{enumerate}
    \item Ideal PBH: we make sense of the $\approx$ in Lemma \ref{lm:Tf_prox},
\begin{equation*}
    S_f^\gamma(x) := \prox_{T_f}^\gamma f (x) \approx T_f\!\left(\prox_{\gamma(f\circ T_f)}(x)\right).
\end{equation*}
\item Exact expectation PBH: the proximal operator is replaced by its zeroth-order approximation
\begin{equation*}
  S_f^\gamma(x)\approx_{\delta} T_f\left(\frac{\mathbb{E}_{y \sim \mathcal{N}(x,\delta \gamma\Id)}[T_f(y) \exp(-f\left(T_f(y) \right)/\delta)]}{\mathbb{E}_{y \sim \mathcal{N}(x,\delta \gamma\Id)}[\exp(-f\left(T_f(y)\right)/\delta)]}\right).
\end{equation*}
\item Approximated expectation PBH: the zeroth-order approximation of the $\prox$ is estimated using $N$ samples. For $i=1,\ldots,N, \, y_i \sim \mathcal{N}(x,\delta \gamma\Id)$
\begin{equation*}
  S_f^\gamma(x)\approx_{\delta}^N T_f\left(\frac{\sum_{i=1}^N T_f(y_i) \exp(-f\left(T_f(y_i) \right)/\delta)}{\sum_{i=1}^N \exp(-f\left(T_f(y_i) \right)/\delta)}\right).
\end{equation*}
\item Approximated expectation \& inexact local solving PBH: 
\begin{equation*}
  S_f^\gamma(x)\approx_{\delta,\varepsilon}^N T_f^\varepsilon \left(\frac{\sum_{i=1}^N T_f^\varepsilon(y_i) \exp(-f\left(T_f^\varepsilon(y_i) \right)/\delta)}{\sum_{i=1}^N \exp(-f\left(T_f^\varepsilon(y_i) \right)/\delta)} \right),
\end{equation*}
\end{enumerate}
By allowing inexact computations of the proximal operator and of the local solver, \textbf{we obtain a practical algorithm, that can recover the global minimizer.} This relies on a structural assumption on $f$: \textit{the global minimizer is sufficiently different in function value from local minimizers}. %

We present extensive experiments on hard-to-optimize functions in moderate dimension ($2$ to $200$), and apply our algorithm to the problem of fitting scaling laws on real data from \citep{shukorscaling}. When the dimension is low (between $1$ and $10$) our algorithm is competitive with basin hopping, with respect to CPU time. It becomes conclusively better from moderate to high dimension.

\paragraph{Summary of theoretical results.}
Our goal is to show the convergence of our most practical algorithm. Such a result is obtained building up on the proofs of convergence of more ideal algorithms, by trying to emulate the same behavior: the second operator induces convergence because it approximates the first operator. The third operator induces convergence by emulating the second, and so on.
\begin{itemize}
    \item[1.] Ideal PBH: convergence holds through the properties of the potential $V_x(M):=1/(2\gamma) \mathrm{dist}(x,\overline{\Att(M)})^2 + f(M)$ and of its minimizers which are the elements of $S_f^\gamma(x)$: it is sufficiently well-behaved for finite time convergence to occur.
\item[2.] Exact expectation PBH: by proving convergence of the expectation onto the minimizers of $V_x(\cdot)$ when $\delta$ goes to $0$, we can reuse the convergence arguments of the first algorithm to guarantee convergence to the global minimizer (but not in finite time).
\item[3.] Approximated expectation PBH: first by standard arguments, bringing $N$ to infinity allows us to recover the setting of the second algorithm. In a following analysis, we show that with high probability, we only need a finite $N$ to converge.
\item[4.] Approximated expectation \& inexact local solving PBH: 
we highlight simple conditions on the inexactness that should hold to obtain convergence.
\end{itemize}

\section{Structural assumptions on $f$ and the local solver}
In this section, we present in detail the structural assumptions we make on $f$ and on the local solver $T_f$ in order to guarantee convergence of our algorithms.
We make assumptions on $f$ that are either standard in optimization, or reasonable given our goal of reaching the global minimizer.
\begin{assumption} \label{ass:f_properties1}
    The function $f:\RR^d \to \RR \cup \{+\infty\}$ is proper, continuous, and coercive.
    We define the family of sets of local minimizers of $f$ as
\begin{align}
\mathcal{M}:= \Big\{ M ~\mathrm{connected} : & \exists ~\mathcal{V}~\mathrm{(open)} \subset \RR^d,\ 
 M \subsetneq \mathcal{V}, \nonumber\\
& \ M=\argmin_{x\in \mathcal{V}} f(x),\forall x \in \mathcal{V}\setminus M,\ f(x)>f(M) \Big\}.
\end{align}
We assume that every
$M\in\mathcal M$ is Borel, bounded, and only finitely many elements of $\mathcal M$
intersect each compact subset of $\mathbb R^d$.
\end{assumption}
This definition of $\mathcal{M}$ prevents redundancy, and ensures that the $M$ are pairwise disjoints. Thus, $\mathcal{M}$ consists of possibly infinitely many disjoint, connected components of local minimizers $M\in \RR^d$, but finitely many on any compact set. In other words, $f$ is "tame" (see App. \ref{app:ass_f} for some examples). %
In the basin hopping literature, the standard hypothesis is for $f$ to have finitely many of these components\footnote{Finite does not mean small, the number of minimizers grows exponentially with the dimension for the Lennard-Jones cluster problem, introduced in the seminal paper of basin hopping \citep{wales1997basinhopping}.}. This case is easily covered by our framework. We take the liberty of writing $f(M)<f(Q)$ for any component $Q$ and $M$ in $\mathcal{M}$ as $f$ is constant on each connected component.  We will denote the union of all elements of $\mathcal{M}$ as $\mathcal{M}_\ast = \bigcup_{M \in \mathcal{M}} M$. We also need separation between the global minimizer and other local minima:
\begin{assumption} \label{ass:f_properties3}
    The function $f$ has a unique global minimizer $\left\{x_\ast\right\}$. $\exists ~ \mu >0$ such that for all $M \in \mathcal{M}\setminus \{x_\ast\}$, $f_{\min} + \mu< f(M)$.
\end{assumption}
This assumption is relatively strong by preventing arbitrary accumulation of almost global local minimizers, but also in line with previous assumptions of the literature such as \citep[Assumption 3]{zhang2024inexact} which requires that no critical point $x\neq x_\ast$ such that $f(x)<f_{\min}+\mu$ exists. %
\begin{assumption} \label{ass:Tf}
    The local solver $T_f:\RR^d \to \RR^d$ is measurable, single-valued, and outputs almost everywhere a local minimizer, i.e., $T_f(x) \in \mathcal{M}_\ast$ a.e..
\end{assumption}

$T_f$ maps any point to its unique achievable local minimizer. We define this achievability with the attractors of the local minimizers, which we denote by $\Att$ 
\begin{equation}
    (\forall M \in \mathcal{M}), \quad \mathrm{Att}(M) := \left\{ y \in \RR^d, ~ T_f(y) \in M\right\}.
\end{equation} 
Obviously the geometry of $\Att$ depends a lot on the solver \citep{asenjo2013visualizing,levy2018attraction,levy2026analyzing}, we can operate under the assumption that these sets are open or closed but well-behaved\footnote{See \citep{levy2018attraction,levy2026analyzing} for a more exhaustive discussion on basins of attraction.}. Furthermore, under standard assumptions the attraction sets of critical points which are not local minimizers are of measure $0$ \citep{lee2016gradient}. As a consequence, the sets $\Att(M)$ form a measurable partition of $\RR^d$ up to a null set (Lemma \ref{lm:null_set}).
\begin{assumption} \label{ass:Tf_continuity}
For every $M \in\mathcal M$, the restriction of $T_f$ to $\mathrm{int}(\Att(M))$
is continuous.
\end{assumption}
This assumption is verified for a gradient flow, or a finitely many gradient steps, if the gradient is continuous.
See App. \ref{app:ass_f} for a detailed discussion when $T_f$ is built with gradient descent.

We make one last structural assumption on the behavior of $M \mapsto \mathrm{dist}(x,\overline{\Att(M)})$ for $x \in \RR^d$.
\begin{assumption}\label{ass:Vx_compact_argmin}
For every $x\in\mathbb{R}^d$ and $\gamma>0$, the function
\begin{equation}
V_x:M\in\mathcal{M} \mapsto f(M)+\frac{1}{2\gamma}\mathrm{dist}(x,\overline{\Att(M)})^2
\end{equation}
has a nonempty bounded set of minimizers on $\mathcal{M}$. Furthermore, 
\begin{equation}
    \mathrm{dist}(x,\Att(M)^\circ)=\mathrm{dist}(x,\overline{\Att(M)})=\mathrm{dist}(x,\Att(M)).
\end{equation}
We define:
\begin{equation}
    S_f^\gamma(x) := \prox_{T_f}^\gamma f (x) := \argmin_{M \in \mathcal{M}} V_x(M).
\end{equation}
\end{assumption} 
This assumption is relatively strong (it implies well-posedness of $S_f^\gamma$) but also reasonable: it holds for gradient flow. The coercivity of $f$ yields the coercivity of $V_x$, and the assumption that $f$ has finitely many components of local minimizers on any bounded set imply the boundedness of the $\argmin$. The non-emptiness is always true if $f$ is $C^1$ and $T_f$ consists of finitely many gradient steps as it is continuous, yielding closedness of $\Att(M)$. For a general gradient flow, $\Att(M)$ is open: take for instance $f:=\Vert \cdot \Vert^2/2$. Hence, this assumption. Finally, we define for $x\in \RR^d$, \begin{equation} \label{eq:fact1}
    \alpha_x := \inf_{M \in \mathcal M} V_x(M), \, M = \bigcup_{M \in S_f^\gamma(x)}M.
\end{equation}
\section{Proposed method}
We first consider our ideal operator. Its study is of great interest as it will provide the optimization problem which we hope our relaxed algorithm will emulate with the proper parameter $\delta$. For any entry $x\in\RR^d$, the operator $S_f^\gamma$ identifies nearby local minimizers. All the proofs of this section are deferred to App. \ref{app:proofs_ideal_pbh}.
Under the assumption that $x$ does not see $\Att(M)$ only through its boundary, we can prove that $S_f^\gamma(x) = T_f(\prox_{f \circ T_f}(x))$, which is a helpful characterization in order to approximate $S_f^\gamma$ later.
\begin{lemma} \label{lm:Tf_prox} Suppose that Assumptions \ref{ass:f_properties1}, \ref{ass:Tf}, and \ref{ass:Vx_compact_argmin} hold. Set $\gamma>0$. We have for all $x\in \RR^d$ such that for every
$
M\in S_f^\gamma(x), \, \mathrm{proj}_{\overline{\Att(M)}}(x)\cap \Att(M)\neq\varnothing,
$
that $S_f^\gamma(x) = T_f(\prox_{f \circ T_f}(x))$.
\end{lemma}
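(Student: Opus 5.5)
The plan is to compare the two minimization problems directly. Let $g(y):=\frac{1}{2\gamma}\|x-y\|^2+f(T_f(y))$, so that $\prox_{f\circ T_f}(x)=\argmin_y g(y)$ (with the step $\gamma$ implicit). By Assumption \ref{ass:Tf} and Lemma \ref{lm:null_set}, for almost every $y$ we have $T_f(y)\in\mathcal M_\ast$, hence $y\in\Att(M)$ for a unique $M\in\mathcal M$, and $f(T_f(y))=f(M)$. Define the set $G$ of points $y$ for which some $M$ with $y\in\Att(M)$ exists; on $G$ we have the partition
\[
\inf_{y\in G} g(y)=\inf_{M\in\mathcal M}\Big(f(M)+\tfrac{1}{2\gamma}\inf_{y\in\Att(M)}\|x-y\|^2\Big)=\inf_{M\in\mathcal M}\Big(f(M)+\tfrac{1}{2\gamma}\mathrm{dist}(x,\Att(M))^2\Big)=\alpha_x .
\]
The last equality uses the identity $\mathrm{dist}(x,\Att(M))=\mathrm{dist}(x,\overline{\Att(M)})$ from Assumption \ref{ass:Vx_compact_argmin}.

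\textbf{Step 1 ($\supseteq$).} Take $M\in S_f^\gamma(x)$. By hypothesis there is $p\in\mathrm{proj}_{\overline{\Att(M)}}(x)\cap\Att(M)$. Then $T_f(p)\in M$ and $g(p)=f(M)+\frac{1}{2\gamma}\mathrm{dist}(x,\overline{\Att(M)})^2=V_x(M)=\alpha_x$. To conclude that $p$ minimizes $g$, we need $g\ge\alpha_x$ everywhere, not just on $G$. Points outside $G$, where $T_f(y)\notin\mathcal M_\ast$, must be handled separately. I would argue that $f(T_f(y))$ is still at least the value of some reachable component, or else interpret $\prox$ up to null sets (an essential infimum). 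This is the delicate point, and I expect it to be the main obstacle: the statement implicitly treats $f\circ T_f$ as taking the value $f(M)$ on $\Att(M)$. I would adopt that convention, noting that the exceptional set is null and hence irrelevant for the zeroth-order approximations used later. With this convention, $p\in\prox_{f\circ T_f}(x)$ and $T_f(p)\in M$, so every component $M\in S_f^\gamma(x)$ is hit by $T_f(\prox_{f\circ T_f}(x))$.

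\textbf{Step 2 ($\subseteq$).} Let $y^\ast\in\prox_{f\circ T_f}(x)$ and let $M$ be the component with $y^\ast\in\Att(M)$. Then
\[
\alpha_x=g(y^\ast)=f(M)+\tfrac{1}{2\gamma}\|x-y^\ast\|^2\ge f(M)+\tfrac{1}{2\gamma}\mathrm{dist}(x,\Att(M))^2=V_x(M)\ge\alpha_x .
\]
Equality therefore holds throughout, so $M\in S_f^\gamma(x)$ and $T_f(y^\ast)\in M$. The identity $\min g=\alpha_x$ used here follows from Step 1, which exhibits an attaining point since $S_f^\gamma(x)\neq\varnothing$ by Assumption \ref{ass:Vx_compact_argmin}.

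Finally, I would read the equality $S_f^\gamma(x)=T_f(\prox_{f\circ T_f}(x))$ at the level of components. Step 2 shows that the image lies in the union of the minimizing components. Step 1 shows that each minimizing component is reached, so the components hit by the image are exactly those of $S_f^\gamma(x)$.
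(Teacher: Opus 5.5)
Your proposal is correct and follows essentially the same route as the paper. You identify $\min_y \frac{1}{2\gamma}\|y-x\|^2+f(T_f(y))$ with $\min_{M\in\mathcal M} V_x(M)$ by splitting $\RR^d$ along the attractor sets, and then show that the proximal points are projections of $x$ onto the $\overline{\Att(M)}$, $M\in S_f^\gamma(x)$, which $T_f$ sends back into those components. Your version is more careful than the paper's in three places: the paper silently ignores the null set where $T_f(y)\notin\mathcal M_\ast$ and so implicitly uses the convention you state (without it the statement can fail, since Assumption \ref{ass:Tf} leaves $T_f$ unconstrained there); it writes the prox as the full union of projections, whereas your Steps 1--2 give $\bigcup_{M\in S_f^\gamma(x)}\bigl(\mathrm{proj}_{\overline{\Att(M)}}(x)\cap\Att(M)\bigr)$; and you read the conclusion at the level of components.
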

This operator $S_f^\gamma$ identifies the achievable minimizers to which $x$ can be sent. The selection of the minimizer depends on the value of $\gamma$. Provided that $\gamma$ is large enough, $S_f^\gamma$ recovers the global minimizer in one iteration.
\begin{lemma} \label{lm:gamma_ast}Suppose that Assumptions \ref{ass:f_properties1}, \ref{ass:f_properties3}, and \ref{ass:Vx_compact_argmin} hold. For all $x\in\RR^d$, there exists $\gamma_x>0$ such that $
        \{x_\ast\} \in S_f^\gamma(x)       
   $ for every $\gamma \geq \gamma_x$.
\end{lemma}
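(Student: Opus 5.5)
We fix $x\in\RR^d$ and show that for $\gamma$ large enough, $\{x_\ast\}$ attains the infimum of $V_x$ over $\mathcal M$. Since $\{x_\ast\}\in\mathcal M$, it suffices to show $V_x(\{x_\ast\})\le V_x(M)$ for every $M\in\mathcal M\setminus\{\{x_\ast\}\}$. Set $D:=\mathrm{dist}(x,\overline{\Att(\{x_\ast\})})$. This quantity is finite, because $\Att(\{x_\ast\})$ is nonempty: Assumption \ref{ass:Vx_compact_argmin} applied at $x_\ast$ forces a nonempty attractor, and in any case $x_\ast$ itself is sent to $x_\ast$ by a reasonable solver. By Assumption \ref{ass:f_properties3}, every competitor satisfies $f(M)>f_{\min}+\mu$. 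Since distances are nonnegative, for every $M\neq\{x_\ast\}$ we get
\begin{equation*}
V_x(M)\;\ge\; f(M)\;>\; f_{\min}+\mu .
\end{equation*}
On the other hand, $V_x(\{x_\ast\})=f_{\min}+\frac{1}{2\gamma}D^2$.

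We then define $\gamma_x:=D^2/(2\mu)$, or any positive number if $D=0$. For $\gamma\ge\gamma_x$ this gives $\frac{1}{2\gamma}D^2\le\mu$, hence
\begin{equation*}
V_x(\{x_\ast\})\le f_{\min}+\mu< V_x(M)\qquad\text{for all } M\in\mathcal M\setminus\{\{x_\ast\}\}.
\end{equation*}
So $\{x_\ast\}$ is the strict (indeed unique) minimizer of $V_x$, and $\{x_\ast\}\in S_f^\gamma(x)$. The infimum is attained because it is the value at $\{x_\ast\}$, and this agrees with the nonemptiness in Assumption \ref{ass:Vx_compact_argmin}.

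The only delicate step is ensuring that $D<\infty$, i.e.\ that $\Att(\{x_\ast\})\neq\varnothing$. I would argue this as follows. By Assumption \ref{ass:Vx_compact_argmin} the argmin of $V_{x_\ast}$ is nonempty, so some $M$ has $V_{x_\ast}(M)<\infty$, and every $M$ with empty attractor has $V=+\infty$. More directly, I would invoke the implicit convention that $T_f(x_\ast)=x_\ast$ (a local solver fixes minimizers). If that is unavailable, a fallback is Lemma \ref{lm:null_set}: the attractors partition $\RR^d$ up to a null set. Then I would use a small ball around $x_\ast$ together with continuity (Assumption \ref{ass:Tf_continuity}) to argue that a positive-measure set is mapped to $x_\ast$. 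I expect this nonemptiness point to be the main obstacle; the rest is the two-line comparison above.
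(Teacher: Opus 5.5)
Your comparison step is correct and is essentially the paper's argument. The paper compares $V_x(\{x_\ast\})$ with $V_x(M)$ and uses $f(M)-f_{\min}>\mu$ to get the sufficient condition $\gamma\geq\bigl(\mathrm{dist}(x,\overline{\Att(x_\ast)})^2-\mathrm{dist}(x,\overline{\Att(M)})^2\bigr)/(2\mu)$ for all $M$. You drop the nonnegative term $\mathrm{dist}(x,\overline{\Att(M)})^2$ and get the cruder threshold $D^2/(2\mu)$, which is exactly the bound the paper itself uses in the proof of Theorem \ref{th:ideal_pbh}. Both arguments only give sufficiency (the paper's second ``$\Leftrightarrow$'' is really ``$\Leftarrow$''), and that is all the lemma needs.

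The problem is the step you flag yourself, $D<\infty$, i.e.\ $\Att(\{x_\ast\})\neq\varnothing$. None of your three justifications works:
\begin{itemize}
\item Assumption \ref{ass:Vx_compact_argmin} at $x_\ast$ only guarantees that \emph{some} $M$ has $V_{x_\ast}(M)<\infty$, not that $\{x_\ast\}$ does.
\item $T_f(x_\ast)=x_\ast$ is not part of any assumption.
\item The fallback relies on Assumptions \ref{ass:Tf} and \ref{ass:Tf_continuity}, which are not hypotheses of the lemma. Even granting them, Lemma \ref{lm:null_set} allows an empty cell. Continuity of $T_f$ on $\mathrm{int}(\Att(M))$ also cannot produce points of $\Att(\{x_\ast\})$ near $x_\ast$; that would need a local stability property of the solver.
\end{itemize}

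In fact nonemptiness cannot be derived at all. Take a coercive double-well $f$ on $\RR$ with global minimizer $0$, a second local minimizer $m$, and the solver $T_f\equiv m$. With the convention $\mathrm{dist}(x,\varnothing)=+\infty$, one has $V_x(\{m\})=f(m)$ and $V_x(\{0\})=+\infty$, so $S_f^\gamma(x)=\{\{m\}\}$. All of Assumptions \ref{ass:f_properties1}--\ref{ass:Vx_compact_argmin} hold, yet $\{x_\ast\}\notin S_f^\gamma(x)$ for every $\gamma$.

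So the lemma needs $\Att(\{x_\ast\})\neq\varnothing$ as an additional hypothesis. The paper's proof uses it silently, since its threshold is $+\infty$ otherwise. The proof of Theorem \ref{th:exact_expect_pbh} invokes a stronger form explicitly: $\mathbb{B}(x_\ast,r)\subset\Att(x_\ast)$, with ``$x_\ast$ is an isolated local minimizer, reached by $T_f$''. You were right to single out this point; the correct repair is to state it as an assumption rather than try to prove it.
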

As we will see, this relationship between $\gamma$ and the elements of $S_f^\gamma$ is predictive of the convergence of our more practical algorithms.
\begin{theorem}{\textbf{Convergence of ideal proximal basin hopping.}} \label{th:ideal_pbh}
    Suppose that Assumptions \ref{ass:f_properties1}, \ref{ass:f_properties3}, and \ref{ass:Vx_compact_argmin} hold. Let $x_0 \in \RR^d$, $\gamma_0>0$, $\eta>1$, iterate for $k=0,1,\ldots$
    \begin{align}
        x_{k+1} & \in  \argmin_{z \in \mathcal{M}_{x_k}} f(z) \\
        \gamma_{k+1} &= \eta\gamma_k\mathbf{1}_{\left\{f(x_{k+1}) = f(x_k)\right\}} + \gamma_k \mathbf{1}_{\left\{f(x_{k+1}) < f(x_k)\right\}} 
    \end{align}
    Then, $\{x_k\}_{k\in\mathbb{N}}$ converges to $x_\ast$ in finitely many iterations.%
\end{theorem}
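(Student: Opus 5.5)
The plan is to interpret $\mathcal{M}_{x_k}$ as the set of minimizers of $V_{x_k}$ with parameter $\gamma_k$, that is, the union of the components in $S_f^{\gamma_k}(x_k)$ as in \eqref{eq:fact1}, so that $x_{k+1}$ is a point of least $f$-value among them. The proof then has three steps: a descent property, finiteness of the number of strict decreases, and a bound on the number of stalls through Lemma \ref{lm:gamma_ast}.

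\emph{Descent.} Take $k\ge 1$, so that $x_k$ lies in some component $M_k\in\mathcal{M}$. Since $x_k\in M_k$ and $T_f$ maps $M_k$ into itself (a local minimizer is its own achievable minimizer), we have $x_k\in\Att(M_k)$. Hence $V_{x_k}(M_k)=f(M_k)=f(x_k)$. Any $M\in S_f^{\gamma_k}(x_k)$ therefore satisfies
\begin{equation*}
f(M)\le f(M)+\tfrac{1}{2\gamma_k}\mathrm{dist}(x_k,\overline{\Att(M)})^2\le f(x_k),
\end{equation*}
and so $f(x_{k+1})\le f(x_k)$. The sequence $f(x_k)$ is thus nonincreasing. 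By coercivity of $f$, all iterates lie in the compact sublevel set $\{f\le f(x_1)\}$. Assumption \ref{ass:f_properties1} says only finitely many components meet this compact set, so $f(x_k)$ takes finitely many values. Consequently the case $f(x_{k+1})<f(x_k)$ occurs only finitely many times. The step $k=0$ is handled separately: $x_1$ is simply some local minimizer.

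\emph{Stalls.} Suppose, for contradiction, that $x_k\neq x_\ast$ for all $k$. By the previous step there is a $K$ after which $f(x_{k+1})=f(x_k)$ always, so $\gamma_k=\eta^{k-K}\gamma_K\to\infty$. I would like to apply Lemma \ref{lm:gamma_ast} at a fixed point, but the iterates $x_k$ may move among components of equal value. To deal with this I will make Lemma \ref{lm:gamma_ast} uniform over the finite family of relevant components. Since $x_\ast\in\Att(\{x_\ast\})$, we have
\begin{equation*}
V_x(\{x_\ast\})\le f_{\min}+\tfrac{1}{2\gamma}\|x-x_\ast\|^2,
\end{equation*}
while $V_x(M)>f_{\min}+\mu$ for every $M\neq\{x_\ast\}$ by Assumption \ref{ass:f_properties3}. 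Hence $\gamma\ge \|x-x_\ast\|^2/(2\mu)$ already forces $S_f^\gamma(x)=\{\{x_\ast\}\}$, which gives an explicit $\gamma_x=\|x-x_\ast\|^2/(2\mu)$. This bound is uniform on the bounded set containing the iterates, namely the finitely many components in the sublevel set, each bounded by assumption. Once $\gamma_k\ge \sup_k\|x_k-x_\ast\|^2/(2\mu)$, the next iterate is $x_{k+1}=x_\ast$, which contradicts the supposition. The same argument also removes the need to assume $x_k\neq x_\ast$ indefinitely. Finally, when $x_k=x_\ast$, the descent inequality together with the uniqueness in Assumption \ref{ass:f_properties3} gives $x_{k+1}=x_\ast$, so the sequence stays at $x_\ast$ from then on: convergence happens in finitely many iterations.

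The main obstacle is making the $\gamma$-threshold uniform along the trajectory, because Lemma \ref{lm:gamma_ast} as stated is pointwise in $x$. The explicit bound $\gamma_x=\|x-x_\ast\|^2/(2\mu)$, combined with boundedness of the iterates from coercivity and local finiteness, resolves it. A second, more technical point is justifying $x_k\in\Att(M_k)$, i.e.\ that $T_f$ fixes local minimizers. If $T_f$ is only defined a.e., I would instead use $\mathrm{dist}(x_k,\overline{\Att(M_k)})=0$, which follows from $M_k\subset\overline{\Att(M_k)}$ and is enough for the descent inequality.
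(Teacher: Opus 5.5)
Your proposal is correct and follows essentially the same route as the paper's proof: descent plus coercivity confine the iterates to a bounded set, local finiteness of $\mathcal{M}$ bounds the number of strict decreases, and the gap $\mu$ of Assumption \ref{ass:f_properties3} gives a uniform threshold on $\gamma$ beyond which $x_\ast$ is selected, so the geometric growth of $\gamma_k$ bounds the number of stalls. The differences are cosmetic: you use $\|x_k-x_\ast\|$ where the paper uses the slightly sharper $d_f=\sup_k\mathrm{dist}(x_k,\overline{\Att(x_\ast)})$, and you justify the descent inequality $f(x_{k+1})\le f(x_k)$ explicitly (via $x_k\in\overline{\Att(M_k)}$), which the paper simply asserts holds ``by construction''.
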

\subsection{Proximal basin hopping with exact expectation.} Equipped with the proximal interpretation of $S_f^\gamma$, we introduce a relaxation by replacing the proximal operator by its zeroth-order approximation of parameter $\delta>0$ \citep{osher2023hamilton,lauga2026prox}, playing the role of the temperature, for all $x\in \RR^d$
\begin{equation}
  S_f^\gamma(x)\approx_{\delta} T_f\left(\frac{\mathbb{E}_{y \sim \mathcal{N}(x,\delta \gamma\Id)}[T_f(y) \exp(-f\left(T_f(y) \right)/\delta)]}{\mathbb{E}_{y \sim \mathcal{N}(x,\delta \gamma\Id)}[\exp(-f\left(T_f(y)\right)/\delta)]}\right). \label{eq:exact_expect_pbh}
\end{equation}
With respect to the ideal operator, we want to understand how the approximated proximal operator actually converges to a solution of $\argmin V_x$ when $\delta$ goes to $0$. %
In the small $\delta$ regime, this approximation behaves closely to the ideal $S_f^\gamma$, with the only caveat that if $S_f^\gamma$ is set-valued, the approximation sits near the convex hull of this set.  As before, proofs are deferred to App. \ref{app:proofs_exact_expect_pbh}.

We have this first result, derived from large deviation theory \citep{denhollander2000large} and the properties of the Gaussian distribution. It is the foundation of our analysis.

\begin{lemma} \label{lm:delta_Fm}Suppose that Assumptions \ref{ass:f_properties1}, \ref{ass:Vx_compact_argmin} hold. 
    Let $x\in\mathbb{R}^d$, $\gamma>0$, $\delta>0$, and let $
Y_\delta \sim \mathcal{N}(x,\delta\gamma I_d)$. 
Let $M \in \mathcal M$. %
Then the family $(Y_\delta)_{\delta>0}$ satisfies, for each $M$,
\begin{equation*}
\lim_{\delta\downarrow 0}\delta \log \mathbb{P}(Y_\delta\in \Att(M)) = -\frac{1}{2\gamma}\mathrm{dist}(x,\overline{\Att(M)})^2.
\end{equation*}
\end{lemma}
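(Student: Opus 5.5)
The plan is to prove the two matching bounds separately, writing $A:=\Att(M)$, $D:=\mathrm{dist}(x,\overline{A})$ and $\sigma^2=\delta\gamma$. Throughout we use the Gaussian density $(2\pi\delta\gamma)^{-d/2}\exp(-\|y-x\|^2/(2\delta\gamma))$ and note that the normalizing prefactor contributes $\delta\cdot\tfrac d2\log(2\pi\delta\gamma)\to 0$. Equivalently, we will be verifying the large deviation principle for $Y_\delta$ with rate function $I(y)=\|y-x\|^2/(2\gamma)$ (Schilder/Cramér for Gaussians) on the specific set $A$. The standard LDP bounds give $-\inf_{A^\circ} I\le\liminf$ and $\limsup\le -\inf_{\overline A} I$. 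Hence the whole content is that $\inf_{A^\circ}I=\inf_{\overline A}I$, which is exactly what Assumption \ref{ass:Vx_compact_argmin} supplies through $\mathrm{dist}(x,\Att(M)^\circ)=\mathrm{dist}(x,\overline{\Att(M)})$. Rather than invoke the general LDP, I would do both bounds by hand to keep the argument self-contained.

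\emph{Upper bound.} Since $A\subset\overline A$, we have $\|y-x\|\ge D$ for every $y\in A$. Fix $\eta>0$ and split the integrand as
\[
\exp\!\left(-\frac{\|y-x\|^2}{2\delta\gamma}\right)\le \exp\!\left(-\frac{(1-\eta)D^2}{2\delta\gamma}\right)\exp\!\left(-\frac{\eta\|y-x\|^2}{2\delta\gamma}\right).
\]
Integrating the second factor over $\RR^d$ gives $(2\pi\delta\gamma/\eta)^{d/2}$. This yields $\delta\log\mathbb P(Y_\delta\in A)\le -(1-\eta)D^2/(2\gamma)+o(1)$, and letting $\eta\to0$ gives the $\limsup$ bound. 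If $A=\varnothing$ (so $D=+\infty$), both sides are $-\infty$ trivially. If $A$ has measure zero but is nonempty, the assumption forces $A^\circ\neq\varnothing$ whenever $D<\infty$, so this case does not arise.

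\emph{Lower bound.} This is the main step, and it is where the equality of distances to the interior and to the closure is essential. Given $\varepsilon>0$, use $\mathrm{dist}(x,A^\circ)=D$ to pick $z\in A^\circ$ with $\|z-x\|<D+\varepsilon$, and then $r\in(0,\varepsilon)$ with $B(z,r)\subset A^\circ\subset A$. Then
\[
\mathbb P(Y_\delta\in A)\ge \mathbb P(Y_\delta\in B(z,r))\ge \mathrm{vol}(B(z,r))\,(2\pi\delta\gamma)^{-d/2}\exp\!\left(-\frac{(D+2\varepsilon)^2}{2\delta\gamma}\right),
\]
since every $y\in B(z,r)$ satisfies $\|y-x\|\le D+2\varepsilon$. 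Taking $\delta\log$, the volume and prefactor terms vanish as $\delta\to0$, so $\liminf\ge -(D+2\varepsilon)^2/(2\gamma)$. Sending $\varepsilon\to0$ closes the gap with the upper bound and proves the limit.

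The only delicate point is the lower bound. Without an interior point approaching the nearest point of $\overline A$, the probability could be zero or exponentially smaller; for instance, $A$ could be a measure-zero set or a thin cusp. The assumption $\mathrm{dist}(x,\Att(M)^\circ)=\mathrm{dist}(x,\overline{\Att(M)})$ rules this out, which is why it is included among the structural hypotheses.
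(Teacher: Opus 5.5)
Your proof is correct and follows the paper's argument: the paper applies the small-noise Gaussian large deviation bounds (infimum over $\Att(M)^\circ$ for the $\liminf$, infimum over $\overline{\Att(M)}$ for the $\limsup$) and closes the gap with the distance equality in Assumption \ref{ass:Vx_compact_argmin}. You use the same structure but prove the two bounds directly with elementary Gaussian estimates instead of citing the LDP, which makes the argument self-contained.
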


One can see that weighting this probability by $e^{-f(M)/\delta}$ will make the potential $V_x$ appear on the right-hand side.
Hence, we can show that the ratio of expectations in Eq. \eqref{eq:exact_expect_pbh} converges to the set of minimizers of $V_x$ when $\delta$ goes to $0$. This is easy to obtain if the number of minimizing components is finite (App. \ref{app:proofs_exact_expect_pbh}, Proposition \ref{prop:concentration_finite}).
As we have infinite number of local-minimizer components, the crucial part is to make sure that almost all the mass is in a bounded set where the finite-number-of-components argument will help us complete the proof\footnote{An easy way to make sure that all the mass is in a compact set is to truncate the attained minimizers if they have a norm above a threshold $R$, thus falling back on the finite minimizers case.}. One can see that $f$ cannot be too flat in all directions for this property to hold, otherwise it would allow some of the mass to escape to infinity. Coercivity is not enough to contain the mass, hence we make an additional growth assumption on $f$.
\begin{assumption}\label{ass:growth}
    There exist $C_1,C_2,C_3>0$ and $0<\alpha<\beta$ such that 
    \begin{equation}
        (\forall z \in \RR^d), \qquad C_1\Vert z\Vert^\alpha -C_2\leq f(z) \leq C_3(1+\Vert z \Vert)^\beta. 
    \end{equation}
\end{assumption}
First, we show that a measure induced by the exponential weights goes to $1$ when $\delta \downarrow 0$ on bounded sets of $\mathcal{M}_\ast$.
\begin{proposition} \label{prop:measure_concentration} Suppose that Assumptions \ref{ass:f_properties1}, \ref{ass:f_properties3}, \ref{ass:Tf}, \ref{ass:Vx_compact_argmin}, and \ref{ass:growth} hold.
Let $x\in\RR^d$, $\gamma,\delta>0$, and $Y_\delta\sim\mathcal N(x,\delta\gamma I_d)$. For any Borel $B \in \mathcal{M}_\ast$, we define
\begin{equation}
\nu_{x,\delta}(B):=
\mathbb E \left[e^{-f(T_f(Y_\delta))/\delta}\mathbf 1_{\{T_f(Y_\delta)\in B\}}\right],
\qquad
\mu_{x,\delta}(B):=\frac{\nu_{x,\delta}(B)}{\nu_{x,\delta}(\mathcal{M}_\ast)},
\end{equation}
For any bounded set $K \subset \mathcal{M}_\ast$, $
\mu_{x,\delta}(K)\xrightarrow[\delta\downarrow 0]{} 1.
$
\end{proposition}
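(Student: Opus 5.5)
Fix a bounded $K\subset\mathcal M_\ast$ and choose a radius $R$ large enough that the finitely many components meeting a large ball $B(0,R_0)$ (Assumption \ref{ass:f_properties1}) include one element $M_0\in S_f^\gamma(x)$ and that $K$ contains every component lying in $B(0,R_0)$. This is the natural reading: otherwise $K$ could miss the minimizers of $V_x$ and the claim would fail, so I will prove it for $K\supseteq \mathcal M_\ast\cap B(0,R_0)$ with $R_0$ large. We have $\mu_{x,\delta}(K)=1-\nu_{x,\delta}(K^c)/\nu_{x,\delta}(\mathcal M_\ast)$, so the task is to show that $\nu_{x,\delta}(K^c)/\nu_{x,\delta}(\mathcal M_\ast)\to0$. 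The approach compares a lower bound on the denominator with an upper bound on the tail numerator, both on the exponential scale $e^{-c/\delta}$.

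\textbf{Lower bound.} Since $\nu_{x,\delta}(\mathcal M_\ast)\ge \nu_{x,\delta}(M_0)=e^{-f(M_0)/\delta}\,\mathbb P(Y_\delta\in\Att(M_0))$, Lemma \ref{lm:delta_Fm} gives
\[
\liminf_{\delta\downarrow0}\delta\log\nu_{x,\delta}(\mathcal M_\ast)\ge -V_x(M_0)=-\alpha_x .
\]
So for any $\epsilon>0$ and small $\delta$, the denominator is at least $e^{-(\alpha_x+\epsilon)/\delta}$.

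\textbf{Upper bound on the tail.} On the event $\{T_f(Y_\delta)\in K^c\}$ the output $z=T_f(Y_\delta)$ satisfies $\|z\|\ge R_0$, so Assumption \ref{ass:growth} gives $f(z)\ge C_1R_0^\alpha-C_2$. This bound alone does not use the Gaussian decay, so I split the event according to $Y_\delta$:
\[
\nu_{x,\delta}(K^c)\le e^{-(C_1R_0^\alpha-C_2)/\delta}\,\mathbb P(\|Y_\delta-x\|\le r)+\mathbb E\bigl[e^{-f(T_f(Y_\delta))/\delta}\mathbf 1_{\{\|Y_\delta-x\|>r\}}\bigr].
\]
\begin{itemize}
\item For the first term, choose $R_0$ so that $C_1R_0^\alpha-C_2>\alpha_x+1$; it is then negligible against the lower bound.
\item For the second term, use $f\ge f_{\min}$ (finite, by coercivity and continuity) together with the Gaussian tail $\mathbb P(\|Y_\delta-x\|>r)\le e^{-(r^2/(2\gamma)-\epsilon)/\delta}$ for small $\delta$. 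Taking $r^2/(2\gamma)>\alpha_x-f_{\min}+1$ makes the second term at most $e^{-(\alpha_x+1/2)/\delta}$.
\end{itemize}
The constant $r$ must be fixed before $R_0$, but $R_0$ is free, so the order of choices is consistent. In this scheme the upper growth bound $C_3(1+\|z\|)^\beta$ is not even needed.

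\textbf{Main obstacle.} The delicate point is that the exceptional set, where $T_f(Y_\delta)$ lands far away even though $Y_\delta$ is close to $x$, is controlled only through the value of $f$ at the output. That value is bounded below through the growth assumption precisely because $T_f$ outputs true local minimizers (Assumption \ref{ass:Tf}); on the null set where $T_f\notin\mathcal M_\ast$ the weight does not matter, since the Gaussian law is absolutely continuous. If one insists on an arbitrary bounded $K$, I would first reduce to the case above by noting that every component meeting $B(0,R_0)$ but not in $K$ has $V_x>\alpha_x$. Its contribution then vanishes relative to the denominator by Lemma \ref{lm:delta_Fm} applied to finitely many components, and this step needs $K$ to contain $S_f^\gamma(x)$. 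Combining the lower bound, the two tail terms and, if needed, this finite reduction gives $\mu_{x,\delta}(K)\to1$.
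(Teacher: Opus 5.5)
Your argument is correct and is essentially the paper's: you take a lower bound $\liminf_{\delta\downarrow 0}\delta\log\nu_{x,\delta}(\mathcal M_\ast)\ge-\alpha_x$ from Lemma~\ref{lm:delta_Fm} at a minimizer of $V_x$, and set it against an upper bound on $\nu_{x,\delta}(K^\complement)$ that comes from $f$ being large at outputs outside a big ball (the paper gets this from coercivity via Lemma~\ref{lm:Vx}, you from the lower growth bound) together with a Gaussian tail and $f\ge f_{\min}$. Your reading $K\supseteq\mathcal M_\ast\cap\mathbb{B}(0,R_0)$ is exactly what the paper's proof establishes, since it only treats the set $K_\eta$ of Lemma~\ref{lm:Vx} and the literal claim fails e.g.\ for $K=\varnothing$; your split on $\|Y_\delta-x\|\le r$ is also unnecessary, because directly $\nu_{x,\delta}(K^\complement)\le e^{-(C_1R_0^\alpha-C_2)/\delta}$.
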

In addition, we can also ensure that the mass outside bounded sets does not escape too far (App. \ref{app:proofs_exact_expect_pbh}, Proposition \ref{prop:gaussian_tail_poly}).
Finally, we obtain our convergence result.%
\begin{proposition}\label{prop:barycenter_in_convex_hull} Suppose that Assumptions \ref{ass:f_properties1}, \ref{ass:f_properties3}, \ref{ass:Tf}, \ref{ass:Vx_compact_argmin}, and \ref{ass:growth} hold. Let $x\in \RR^d$, $\delta,\gamma>0$ and $Y_\delta \sim \mathcal{N}(x,\delta \gamma \Id)$.
Denote by $\mathrm{conv}(\mathcal{M}_x)$ the convex hull of $\mathcal{M}_x$, the set of minimizers of $V_x$ (Eq. \ref{eq:fact1}). We have 
\begin{equation}
    \mathrm{dist}\left( \frac{\mathbb{E}[T_f(Y_\delta)e^{-f(T_f(Y_\delta))/\delta}]}{\mathbb{E}[e^{-f(T_f(Y_\delta))/\delta}]},\mathrm{conv}(\mathcal{M}_x)\right) \xrightarrow[\delta \downarrow 0]{} 0
\end{equation}
In particular, if $\mathcal{M}_x = \{m_{x,\ast}\}$, i.e., an isolated local minimizer, then,
\begin{equation}
    \frac{\mathbb{E}[T_f(Y_\delta)e^{-f(T_f(Y_\delta))/\delta}]}{\mathbb{E}[e^{-f(T_f(Y_\delta))/\delta}]} \xrightarrow[\delta \downarrow 0]{} m_{x,\ast}.
\end{equation}
\end{proposition}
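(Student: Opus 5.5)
The barycenter in the statement is $\int_{\mathcal M_\ast} m\,\mu_{x,\delta}(dm)$, the mean of $T_f(Y_\delta)$ under the tilted probability measure $\mu_{x,\delta}$ of Proposition~\ref{prop:measure_concentration}; this uses Assumption~\ref{ass:Tf} to neglect the null set where $T_f\notin\mathcal M_\ast$. Since $\mathrm{conv}(\mathcal M_x)$ is convex, it suffices to show two things: the mass of $\mu_{x,\delta}$ concentrates near $\mathcal M_x$, and the first moment coming from far regions vanishes. The plan is therefore to split the barycenter into a bounded part and a tail part.

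\textbf{Step 1 (truncation).} Fix $R$ large, with $\mathcal M_x\subset B(0,R)$; this is possible because the set of minimizers of $V_x$ is bounded by Assumption~\ref{ass:Vx_compact_argmin}. Write the barycenter as
\[
\int_{\|m\|\le R} m\,d\mu_{x,\delta}+\int_{\|m\|>R} m\,d\mu_{x,\delta}.
\]
The tail term is bounded by $\int_{\|m\|>R}\|m\|\,d\mu_{x,\delta}$. I will show this goes to $0$ as $\delta\downarrow0$ by invoking the tail estimate announced above, Proposition~\ref{prop:gaussian_tail_poly}. Its mechanism is as follows. By Assumption~\ref{ass:growth}, points with $\|m\|>R$ have $f(m)\ge C_1R^\alpha-C_2$, so the numerator weight is at most $e^{-(C_1\|m\|^\alpha-C_2)/\delta}$. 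The Gaussian moment $\mathbb E\|T_f(Y_\delta)\|$ restricted to this region is controlled by a polynomial in $\delta$ times this exponential. The denominator is bounded below by $e^{-(\alpha_x+o(1))/\delta}$ using Lemma~\ref{lm:delta_Fm} applied to any $M\in\mathcal M_x$. For $R$ large enough that $C_1R^\alpha-C_2>\alpha_x$, the ratio decays exponentially.

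\textbf{Step 2 (finite concentration).} In $K=\overline{B(0,R)}$ only finitely many components $M_1,\dots,M_p$ intersect $K$, by Assumption~\ref{ass:f_properties1}. For a component $M\notin S_f^\gamma(x)$ we have $V_x(M)>\alpha_x$, and because there are finitely many of them the gap is uniform: $V_x(M)\ge\alpha_x+\eta$ for some $\eta>0$. Lemma~\ref{lm:delta_Fm} gives
\[
\nu_{x,\delta}(M)\le e^{-f(M)/\delta}\,\mathbb P(Y_\delta\in\Att(M))=e^{-(V_x(M)+o(1))/\delta}.
\]
For each $M\in S_f^\gamma(x)$, the same lemma gives $\nu_{x,\delta}(M)\ge e^{-(\alpha_x+o(1))/\delta}$. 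Hence $\mu_{x,\delta}(M)\to0$ for every non-optimal $M$ meeting $K$. This is Proposition~\ref{prop:concentration_finite} in the finite setting. Combining it with Proposition~\ref{prop:measure_concentration}, which gives $\mu_{x,\delta}(K)\to1$, we get $\mu_{x,\delta}(\mathcal M_x)\to1$.

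\textbf{Step 3 (conclusion).} On $\mathcal M_x$ the normalized restricted measure has its mean in $\overline{\mathrm{conv}}(\mathcal M_x)$. The components are bounded, so this hull is compact; it coincides with $\mathrm{conv}$ of the closure, which I will handle by taking closures if needed. The leftover mass in $K\setminus\mathcal M_x$ is $o(1)$ and multiplies vectors of norm at most $R$. The tail was handled in Step 1. Renormalization changes the mean only by $o(1)$, so the distance to $\mathrm{conv}(\mathcal M_x)$ tends to $0$. If $\mathcal M_x=\{m_{x,\ast}\}$, the hull is a point, and the second claim follows.

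\textbf{Main obstacle.} The hardest part is Step 1. Proposition~\ref{prop:measure_concentration} controls only the mass outside $K$, not the first moment, so the uniform control of $\|T_f(Y_\delta)\|e^{-f(T_f(Y_\delta))/\delta}$ over infinitely many components is where the growth assumption must really be used. I would first try the crude bound obtained by summing over dyadic shells $2^jR<\|m\|\le 2^{j+1}R$: the weight in each shell is at most $2^{j+1}R\,e^{-(C_1(2^jR)^\alpha-C_2)/\delta}$, and the series converges with room to spare against the denominator bound.
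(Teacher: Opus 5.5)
Your argument is correct, but it is organized differently from the paper's.

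The paper applies Jensen's inequality to the convex function $z\mapsto\mathrm{dist}(z,\mathrm{conv}(\mathcal M_x))$. It splits into $(\mathcal M_x)_\varepsilon\cap\mathbb B(0,R)$, $\mathbb B(0,R)\setminus(\mathcal M_x)_\varepsilon$ and $\mathbb B(0,R)^\complement$. The middle piece is handled by Lemma~\ref{lm:inside_ball} (Fact~\ref{fact:1} plus Lemma~\ref{lm:delta_Fm}), the tail by Proposition~\ref{prop:gaussian_tail_poly}, and then $\varepsilon\to0$.

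You instead use that $\mu_{x,\delta}$ charges whole components, only finitely many of which meet the ball. This gives $\mu_{x,\delta}(\mathcal M_x)\to1$ directly, and you place the conditional mean in the closed hull. This drops the $\varepsilon$-limit. Your explicit renormalization estimate also avoids the paper's bound $\mathrm{dist}(z,\mathrm{conv}(\mathcal M_x))\le\|z\|$, which tacitly needs $0\in\mathrm{conv}(\mathcal M_x)$.

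The substantive difference is the tail, where the mechanism you describe is not the paper's. The paper bounds the weight by $e^{-f_{\min}/\delta}$ and gets exponential smallness from the Gaussian tail of $Y_\delta$. To do so it converts $\{\|T_f(Y_\delta)\|>R\}$ into $\{\|Y_\delta\|>r(R)\}$, using both growth bounds and the descent property $f(T_f(y))\le f(y)$. That property is used in Lemma~\ref{lm:Tf_bounded} but is not among the stated assumptions. You get the smallness from the weight itself. For $\delta<C_1\alpha R^\alpha$ the map $s\mapsto s\,e^{-(C_1s^\alpha-C_2)/\delta}$ is decreasing on $[R,\infty)$, so $\|m\|e^{-f(m)/\delta}\le R\,e^{-(C_1R^\alpha-C_2)/\delta}$ for every $\|m\|>R$. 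Your dyadic sum gives the same bound. In this form no Gaussian moment of $T_f$ is needed.

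So your route is more elementary and uses only the lower half of Assumption~\ref{ass:growth}. The paper's route is more modular: each piece is a lemma with an explicit exponential rate, and its tail exponent involves $\gamma$ and $\|x\|$.

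One small fix: get $\mu_{x,\delta}(\overline{\mathbb B}(0,R))\to1$ from your own Step 1, via $R\,\mu_{x,\delta}(\{\|m\|>R\})\le\int_{\|m\|>R}\|m\|\,d\mu_{x,\delta}$. Do not take it from Proposition~\ref{prop:measure_concentration}, whose literal claim for \emph{any} bounded $K$ is too strong; it fails when $K$ is a single non-optimal component.
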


This result is almost equivalent to that of the finite case: the weighted barycenter onto which $m_\delta$ converges is inside the convex hull (App. \ref{app:proofs_exact_expect_pbh}, Proposition \ref{prop:concentration_finite}). In general, $m_\delta$ does not need to lie inside the convex hull as some of the mass still lies outside the set of interest as long as $\delta >0$.

The convergence now relies on playing with the $\gamma$ parameter to identify competing minimizers (Remark \ref{remark:2}) and enable concentration on one element. If, in some iterations, several components are tied, it is not really a problem as long as it does not happen indefinitely. And we can guarantee that it does not by imposing a simple rule: if the new guess is of higher function value than the current one, we increase $\gamma$. However, with respect to the first ideal algorithm, we need $\gamma$ to grow comparatively slower than $\delta$, in order to avoid breaking our concentration result, hence we decrease $\delta$ at each iteration (while $\gamma$ does not change if progress is made). Note that we do not reject equivalent samples in terms of function values for the sake of exploration.
 \begin{algorithm}[H]
\caption{Exact expectation proximal basin hopping}
\label{alg:exact_expect_pbh}
\begin{algorithmic}[1]
\Require $x_0 \in \mathbb{R}^d$, $1<\eta_1<\eta_2$,
$\gamma_0>0$, $\delta_0>0$ (small)
\For{$k=0,1\dots,$}
    \State $m_{\delta_k,\gamma_k} \gets \frac{\mathbb{E}[T_f(Y)e^{-f(T_f(Y))/\delta_k}]}{\mathbb{E}[e^{-f(T_f(Y))/\delta_k}]}$, $Y \sim \mathcal{N}(x_k,\delta_k \gamma_k)$
    \State $x_{+} \gets T_f\left( m_{\delta_k,\gamma_k}\right)$
    \If{$f(x_{+})>f(x_k)$}
        \State $x_{k+1} \gets x_{k}$,$\gamma_{k+1} \gets \eta_1 \gamma_k$
    \ElsIf{$f(x_+)=f(x_k)$}
        \State $x_{k+1} \gets x_+$, $\gamma_{k+1} \gets \eta_1 \gamma_k$
    \Else
        \State $x_{k+1} \gets x_+$,$\gamma_{k+1} \gets \gamma_k$
    \EndIf
    \State $\delta_{k+1} \gets \frac{\delta_k}{\eta_2}$
\EndFor
\State \Return $x_{\mathrm{end}}$
\end{algorithmic}
\end{algorithm}

\begin{theorem}{\textbf{Convergence of exact expectation PBH.}} \label{th:exact_expect_pbh} Suppose that Assumptions \ref{ass:f_properties1}, \ref{ass:f_properties3}, \ref{ass:Tf}, \ref{ass:Vx_compact_argmin}, and \ref{ass:growth} hold. Then, the iterates $\{x_k\}_{k\in \mathbb N}$ of Algorithm \ref{alg:exact_expect_pbh} converge to $x_\ast$ when $k\to +\infty$. 
\end{theorem}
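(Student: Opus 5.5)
The strategy is to track the sequence of values $f(x_k)$, which by construction is nonincreasing. It takes values in the set $\{f(M): M\in\mathcal M\}$ as soon as $x_k\in\mathcal M_\ast$, that is, after the first application of $T_f$ (Assumption \ref{ass:Tf}). By coercivity (Assumption \ref{ass:f_properties1}), the sublevel set $\{f\le f(x_1)\}$ is compact. Since only finitely many components of $\mathcal M$ meet it, $f(x_k)$ takes only finitely many values. Hence there is an index $k_0$ after which $f(x_k)$ is constant, equal to some $f(\bar M)$. It remains to show that $f(\bar M)=f_{\min}$. Once this holds, Assumption \ref{ass:f_properties3} forces $x_k=x_\ast$ for all $k\ge k_0$, because any iterate with $f(x_k)<f_{\min}+\mu$ must lie in $\{x_\ast\}$.

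Suppose for contradiction that $f(\bar M)>f_{\min}$, so $f(\bar M)\ge f_{\min}+\mu$. For $k\ge k_0$ no strict decrease occurs, so every iteration increases $\gamma$: $\gamma_k=\gamma_{k_0}\eta_1^{k-k_0}\to\infty$. Meanwhile $\delta_k=\delta_0\eta_2^{-k}$ and $\delta_k\gamma_k\to 0$ because $\eta_1<\eta_2$. The iterates $x_k$ for $k\ge k_0$ lie in the finitely many components of value $f(\bar M)$ inside a compact set, so they stay bounded. Take $R$ large enough that this set is contained in $B(0,R)$. By Lemma \ref{lm:gamma_ast}, made uniform over the bounded set $x\in B(0,R)$, there is $\bar\gamma$ with $S_f^\gamma(x)=\{\{x_\ast\}\}$ for all $\gamma\ge\bar\gamma$ and all such $x$.

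Uniformity holds for the following reason. $\mathrm{dist}(x,\overline{\Att(x_\ast)})\le D$ on the ball, so $V_x(\{x_\ast\})\le f_{\min}+D^2/(2\gamma)<f_{\min}+\mu\le f(M)\le V_x(M)$ for every other $M$ once $\gamma>D^2/(2\mu)$. Uniqueness gives $\mathcal M_{x_k}=\{x_\ast\}$ for $k$ large. I would then invoke Proposition \ref{prop:barycenter_in_convex_hull} to get $m_{\delta_k,\gamma_k}\to x_\ast$, and conclude $T_f(m_{\delta_k,\gamma_k})=x_\ast$. This last step uses the continuity of $T_f$ on $\mathrm{int}(\Att(\{x_\ast\}))$ (Assumption \ref{ass:Tf_continuity}) together with $x_\ast\in\mathrm{int}\,\Att(\{x_\ast\})$, which needs to be checked or assumed. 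Since $x_\ast$ is strictly better than $\bar M$, this is a strict decrease, contradicting the choice of $k_0$.

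The main obstacle is that Proposition \ref{prop:barycenter_in_convex_hull} is stated for fixed $x$ and fixed $\gamma$ with $\delta\downarrow 0$. Here, by contrast, $x_k$ varies and $\gamma_k\to\infty$ jointly with $\delta_k\to0$. I would handle this by redoing the large-deviation estimate of Lemma \ref{lm:delta_Fm} and Proposition \ref{prop:measure_concentration} with the effective variance $\sigma_k^2=\delta_k\gamma_k\to0$. The Gaussian mass reaching $\Att(\{x_\ast\})$ is at least $\exp(-D^2/(2\delta_k\gamma_k)+o(1/\delta_k))$, and it carries weight $e^{-f_{\min}/\delta_k}$. Every other component carries weight at most $e^{-(f_{\min}+\mu)/\delta_k}$. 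The ratio is therefore bounded by $\exp\big(-(\mu-D^2/(2\gamma_k))/\delta_k+o(1/\delta_k)\big)\to0$, uniformly over $x_k$ in the finite set of possible iterates. The tail control from Assumption \ref{ass:growth} (Proposition \ref{prop:gaussian_tail_poly}) is then used exactly as in the fixed-parameter proof to show that components far away contribute nothing to the barycenter. Because the possible $x_k$ after $k_0$ form a finite set of points of $\mathcal M_\ast$ (or a bounded union of components), the convergence is uniform, so the contradiction argument goes through.
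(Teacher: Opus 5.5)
Your argument is correct, up to two hypotheses that the paper also uses tacitly (listed at the end). Its skeleton is the paper's:
\begin{itemize}
\item monotone values and bounded iterates;
\item repeated non-improving steps pushing $\gamma_k$ past $d_f^2/(2\mu)$, so that $\{x_\ast\}$ is the unique minimizer of $V_x$ uniformly on the bounded set;
\item concentration of the barycenter at $x_\ast$;
\item a ball $\mathbb{B}(x_\ast,r)\subset\Att(\{x_\ast\})$ to get $x_+=x_\ast$.
\end{itemize}
Your contradiction on the eventual constant value is a cleaner packaging of the paper's two scenarios.

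The genuine difference is the concentration step. The paper applies Proposition~\ref{prop:barycenter_in_convex_hull} at a fixed pair $(x,\gamma)$ and then says convergence happens ``as soon as $\delta$ is small enough, which happens eventually''. But the threshold $\bar\delta$ depends on $(x_k,\gamma_k)$, and $\gamma_k$ is multiplied by $\eta_1$ at every failed step. So this alone does not show that $\delta_k$ ever drops below the current threshold. Your direct estimate with effective variance $\sigma_k^2=\delta_k\gamma_k$ compares the mass of $\Att(\{x_\ast\})$, weighted by $e^{-f_{\min}/\delta_k}$, against a total weight of at most $e^{-(f_{\min}+\mu)/\delta_k}$ for everything else. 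This estimate is uniform along the iteration. It shows that growth of $\gamma_k$ only helps, since the exponent is $-(\mu-O(1/\gamma_k))/\delta_k$. In effect it needs neither $V_x$ nor Proposition~\ref{prop:barycenter_in_convex_hull}, only the gap $\mu$. What the paper's route buys instead is the proximal picture (concentration onto $\argmin V_x$, or its convex hull when there are ties), which it reuses for the sampled variants.

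Some tightening is needed.
\begin{itemize}
\item Drop Assumption~\ref{ass:Tf_continuity}. It is not a hypothesis of this theorem, and it is not needed: $m_k\in\mathbb{B}(x_\ast,r)\subset\Att(\{x_\ast\})$ already gives $T_f(m_k)=x_\ast$ by the definition of $\Att$.
\item Uniformity in $x_k$ needs a different justification. The iterates after $k_0$ need not form a finite set, since components can be continua, and the $o(1/\delta)$ in Lemma~\ref{lm:delta_Fm} is pointwise in $x$. Get uniformity from the explicit bound $\mathbb{P}(Y_k\in\mathbb{B}(x_\ast,r))\geq\lambda_d(\mathbb{B}(0,r))\,(2\pi\sigma_k^2)^{-d/2}\exp\bigl(-(D'+r)^2/(2\sigma_k^2)\bigr)$, where $D'\geq\sup_k\Vert x_k-x_\ast\Vert$.
\item Proposition~\ref{prop:gaussian_tail_poly} is more than you need, and it is itself a fixed-parameter statement. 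Every non-global component has value at least $f_{\min}+\mu$. So the polynomial bound on $\Vert T_f(y)\Vert$ from the proof of Lemma~\ref{lm:Tf_bounded} directly gives $\Vert m_k-x_\ast\Vert\lesssim\sigma_k^{d}e^{-\mu/(2\delta_k)}\to0$ once $\gamma_k\geq(D'+r)^2/\mu$.
\end{itemize}

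The two tacit hypotheses you share with the paper are these.
\begin{itemize}
\item $x_\ast\in\mathrm{int}\,\Att(\{x_\ast\})$. You flag this; the paper only asserts it.
\item $T_f(m_k)\in\mathcal M_\ast$ at the specific points visited. Assumption~\ref{ass:Tf} is an a.e.\ statement and does not give this for the deterministic point $m_k$. Your finitely-many-values step relies on it, as does the paper's ``we visit finitely many $M\in\mathcal M$''.
\end{itemize}
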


\subsection{Proximal basin hopping with approximated expectation.} In practice, we estimate our operator $S_f^\gamma$ with a certain number of samples. It immediately begs the question, can this be sufficient to obtain convergence with reasonably high probability? The answer is yes, and we will show it by studying the case where $N$ actually goes to infinity to understand it. Again, we defer the proofs to App. \ref{app:proofs_approx_expect_pbh}. 
\begin{equation}
  S_f^\gamma(x)\approx_{\delta}^N T_f\left(\frac{\sum_{i=1}^N T_f(y_i) \exp(-f\left(T_f(y_i) \right)/\delta)}{\sum_{i=1}^N \exp(-f\left(T_f(y_i) \right)/\delta)}\right) \text{where }  y_i \sim \mathcal{N}(x,\delta \gamma \Id) .
\end{equation}

\paragraph{Convergence when $N\to+\infty$.}
As for the exact case, we are first concerned with the well-posedness of the operator $m_{N,\delta,\gamma}$ for all $x\in \RR^d$ (line 2 Alg. \ref{alg:approx_expect_Ninfty_pbh}). With standard arguments, we recover that $m_{N,\delta,\gamma} \rightarrow m_{\delta,\gamma}$ when $N\to +\infty$ (App. \ref{app:proofs_approx_expect_pbh}, Lemma \ref{lm:well_posed_N}).
The convergence of $T_f(m_{N,\delta,\gamma})$ to $T_f(m_{\delta,\gamma})$ is however not directly guaranteed as $m_{\delta,\gamma}$ may lie on the boundary of some $\Att(M)$ (where we would lose local continuity of $T_f$), or may lie in the null set, which consists of points from which $T_f$ will not converge to a local minimizer. The first situation is covered by Lemma \ref{lm:m_N_good} and \ref{lm:cluster-points-nearby-components} (App. \ref{app:proofs_approx_expect_pbh}). The null set is handled directly by our algorithm. With our built-in increase of $\gamma$, $m_{\delta,\gamma}$ will fall under the conditions of Lemma \ref{lm:m_N_good}. Indeed, for sufficiently large $\gamma$, and sufficiently small $\delta$, $m_{\delta,\gamma}$ is close to $x_\ast$ (Proposition \ref{prop:barycenter_in_convex_hull}). Hence increasing $N$ sufficiently does the job.
\begin{algorithm}[H]
\caption{Approximated expectation proximal basin hopping}
\label{alg:approx_expect_Ninfty_pbh}
\begin{algorithmic}[1]
\Require $x_0 \in \mathbb{R}^d$, $1<\eta_1<\eta_2$,
$\gamma_0>0$, $\delta_0>0$ (small), $C\geq1$, $N_0 \in \mathbb{N}\setminus\{0\}$,
\For{$k=0,1\dots,$}
    \State $m_{N_k,\delta_k,\gamma_k} \gets \frac{\sum_{i=1}^{N_k}  T_f(y_i) \exp(-f\left(T_f(y_i) \right)/\delta)}{\sum_{i=1}^{N_k}  \exp(-f\left(T_f(y_i) \right)/\delta)}$, $y_1, y_2,\ldots, y_{N_k} \sim \mathcal{N}(x_k,\delta_k \gamma_k \Id)$
    \State $x_{+} \gets T_f\left( m_{{N_k},\delta_k,\gamma_k}\right)$
    \If{$f(x_{+})>f(x_k)$}
        \State $x_{k+1} \gets x_{k}$,  $\gamma_{k+1} \gets \eta_1 \gamma_k$
    \ElsIf{$f(x_+)=f(x_k)$}
        \State $x_{k+1} \gets x_+$, $\gamma_{k+1} \gets \eta_1 \gamma_k$
    \Else
        \State $x_{k+1} \gets x_+$, $\gamma_{k+1} \gets \gamma_k$
    \EndIf
    \State $\delta_{k+1} \gets \frac{\delta_k}{\eta_2}$, $N_{k+1} \gets C \times {N_k} $
\EndFor
\State \Return $x_{\mathrm{end}}$
\end{algorithmic}
\end{algorithm}
\begin{theorem}{\textbf{Convergence of approximate expectation PBH ($N \rightarrow +\infty$).}} \label{th:approx_expect_infty_pbh} Suppose that Assumptions \ref{ass:f_properties1}, \ref{ass:f_properties3}, \ref{ass:Tf}, \ref{ass:Tf_continuity}, \ref{ass:Vx_compact_argmin}, and \ref{ass:growth} hold.
    The iterates $\{x_k\}_{k\in \mathbb N}$ of Algorithm \ref{alg:approx_expect_Ninfty_pbh} (i.e., $C>1$) converge to $x_\ast$ when $k\to +\infty$. 
\end{theorem}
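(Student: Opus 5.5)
The proof should reproduce the argument of Theorem \ref{th:exact_expect_pbh}, with the exact barycenter $m_{\delta_k,\gamma_k}$ replaced by its empirical version $m_{N_k,\delta_k,\gamma_k}$. The sampling error is controlled through $N_k = C^k N_0\to\infty$. Convergence is understood almost surely, or at least in probability; I aim for almost sure convergence.

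\textbf{Step 1: monotonicity and finitely many strict decreases.} By construction $f(x_{k+1})\le f(x_k)$, since a candidate with larger value is rejected. Every accepted iterate $x_{k+1}=x_+$ is an output of $T_f$, hence lies in $\mathcal M_\ast$ almost surely by Assumption \ref{ass:Tf}. The iterates therefore stay in the sublevel set $\{f\le f(x_1)\}$, which is compact by coercivity. By Assumption \ref{ass:f_properties1}, only finitely many components of $\mathcal M$ meet this set, so $f(x_k)$ takes finitely many values and can strictly decrease only finitely many times. After some random index $k_0$, every step either rejects or ties, and in both cases $\gamma_k$ is multiplied by $\eta_1$. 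Hence $\gamma_k\to\infty$ geometrically, while $\delta_k\gamma_k=\delta_0\gamma_{k_0}\eta_1^{k-k_0}\eta_2^{-k}\to 0$ because $\eta_1<\eta_2$. Suppose, for contradiction, that the limit value $f(x_{k})=f(M)>f_{\min}$, so that $x_k$ lies in finitely many non-global components $M$.

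\textbf{Step 2: the exact barycenter approaches $x_\ast$.} By Lemma \ref{lm:gamma_ast}, for $\gamma$ large we have $S_f^\gamma(x)=\{\{x_\ast\}\}$, uniformly over the finitely many candidate points $x$ in the relevant components. Uniqueness follows from the gap $\mu$ of Assumption \ref{ass:f_properties3}: once $\tfrac{1}{2\gamma}\mathrm{dist}(x,\overline{\Att(x_\ast)})^2<\mu$, the global component strictly beats all others. Proposition \ref{prop:barycenter_in_convex_hull} then gives $m_{\delta,\gamma}\to x_\ast$. The difficulty is that here $\delta\to0$ and $\gamma\to\infty$ simultaneously. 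I would therefore make the estimates of Lemma \ref{lm:delta_Fm} and Proposition \ref{prop:measure_concentration} quantitative in the regime $\delta\gamma\to0$, $\delta\to0$, $\gamma\to\infty$, exactly as is presumably done in the proof of Theorem \ref{th:exact_expect_pbh}. I would take that step as given, concluding that $m_{\delta_k,\gamma_k}\to x_\ast$ along the sequence.

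\textbf{Step 3: the empirical barycenter.} By Lemma \ref{lm:well_posed_N}, $m_{N,\delta,\gamma}\to m_{\delta,\gamma}$ as $N\to\infty$ for fixed parameters. This step is the \emph{main obstacle}. The convergence must be uniform along the moving parameters $(\delta_k,\gamma_k)$: the weights $e^{-f/\delta_k}$ degenerate, and the effective sample size could collapse. I would first bound the error by a variance computation for the self-normalized ratio. The numerator and denominator are bounded sums, using the growth condition of Assumption \ref{ass:growth} for the $T_f(y_i)$ term via Proposition \ref{prop:gaussian_tail_poly}. This gives a bound of order $\mathrm{Var}/(N_k\,\nu_{x_k,\delta_k}(\mathcal M_\ast)^2)$. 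Since $N_k$ grows geometrically, it suffices that the relative variance grows subexponentially slower than $C^k$. If it does not, I would fall back to a diagonal argument: pass to the limit along the finitely many fixed points $x$ and choose $N_0$ or $C$ large enough. Then Borel–Cantelli gives $|m_{N_k,\delta_k,\gamma_k}-m_{\delta_k,\gamma_k}|\to0$ almost surely.

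\textbf{Step 4: conclusion.} Combining Steps 2 and 3, $m_{N_k,\delta_k,\gamma_k}\to x_\ast$ almost surely. The point $x_\ast$ lies in $\mathrm{int}(\Att(\{x_\ast\}))$, via Lemmas \ref{lm:m_N_good} and \ref{lm:cluster-points-nearby-components}, which ensure that a neighborhood of $x_\ast$ is attracted to $x_\ast$. By Assumption \ref{ass:Tf_continuity}, $T_f(m_{N_k,\delta_k,\gamma_k})=x_\ast$ for $k$ large. This is a strict decrease after $k_0$, contradicting Step 1. Hence $f(x_k)=f_{\min}$ eventually, and since the minimizer is unique, $x_k=x_\ast$.
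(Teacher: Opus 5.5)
Your skeleton is the paper's own. It bounds the iterates, pushes $\gamma_k$ past $d_f^2/(2\mu)$ so that $\{x_\ast\}$ strictly minimizes $V_x$, uses Proposition \ref{prop:barycenter_in_convex_hull} to put $m_{\delta,\gamma}$ in a ball $\mathbb{B}(x_\ast,r)\subset\Att(\{x_\ast\})$, and ends with a sampling-error step. The paper handles that last step in one sentence (``for $N$ sufficiently large \ldots\ $\Vert m_{N,\delta,\gamma}-m_{\delta,\gamma}\Vert\leq r_2$''), citing the fixed-parameter Lemma \ref{lm:well_posed_N}. It never addresses the coupling between $N_k$ and $(\delta_k,\gamma_k)$ that you correctly flagged, so there is nothing to borrow there. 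Your Step 3 also cannot be closed on its own. Quantify your variance bound: where $\{x_\ast\}$ dominates the weights, $\mathbb{E}[w^2]/B^2\gtrsim 1/p_k$, with $w=e^{-f(T_f(Y_k))/\delta_k}$, $B=\mathbb{E}[w]$ and $p_k:=\mathbb{P}(Y_k\in\Att(\{x_\ast\}))$. So you need $N_kp_k\to\infty$. But the sampling variance is $\delta_k\gamma_k\leq\delta_0\gamma_0(\eta_1/\eta_2)^k$. Hence, for $x_k$ at distance $d>0$ from $\Att(\{x_\ast\})$, $p_k\leq\exp\bigl(-c\,d^2(\eta_2/\eta_1)^k\bigr)$ for large $k$. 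This is doubly exponential decay, which no geometric $N_k=N_0C^k$ beats, whatever $N_0$ and $C$. Your ``subexponential'' condition fails, and the diagonal fallback fails with it.

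Worse, the almost-sure conclusion is false, so no argument can deliver it. Take a non-global isolated minimizer $x_0$ with $\mathbb{B}(x_0,\rho)\subset\Att(\{x_0\})$, e.g.\ the upper well of $f(x)=(x^2-1)^2+x/4$ on $\RR$ with $T_f$ the gradient flow (all assumptions hold). Let $E$ be the event that at every iteration all $N_k$ samples fall in $\mathbb{B}(x_0,\rho)$. On $E$, $m_{N_k,\delta_k,\gamma_k}=x_0$ and $x_+=x_0$ is a tie, so $x_k\equiv x_0$ and $\delta_k\gamma_k=\delta_0\gamma_0(\eta_1/\eta_2)^k$. Hence $\mathbb{P}(E)=\prod_k(1-q_k)^{N_0C^k}$ with $q_k=\mathbb{P}(\Vert\mathcal{N}(0,\delta_k\gamma_k\Id)\Vert>\rho)\leq e^{-c\rho^2(\eta_2/\eta_1)^k}$ eventually. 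Since $\sum_kN_0C^kq_k<\infty$, $\mathbb{P}(E)>0$: Borel--Cantelli runs against you.

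Two results are provable instead. One is convergence with probability at least $1-\epsilon$ once $N_0\geq N(\epsilon)$: couple with the deterministic exact-expectation trajectory over the finitely many steps before it reaches $x_\ast$, using Proposition \ref{prop:high_proba_m} and a union bound, assuming those barycenters avoid attractor boundaries. The other is an almost-sure result for a modified schedule in which $\delta_k\gamma_k$ stays bounded below, or $N_k$ grows like $e^{c/(\delta_k\gamma_k)}$.

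Two smaller points. In Step 1, ``$x_+\in\mathcal{M}_\ast$ a.s.'' does not follow from Assumption \ref{ass:Tf}, because the law of $m_{N_k,\delta_k,\gamma_k}$ is typically atomic (the paper's proof says as much). So the finiteness of strict decreases is not justified. In Step 2, the joint limit $\delta_k\downarrow 0$, $\gamma_k\uparrow\infty$ that you take as given is not carried out in the proof of Theorem \ref{th:exact_expect_pbh}, which applies Proposition \ref{prop:barycenter_in_convex_hull} at fixed $\gamma$. It can be done, but you would have to do it yourself.
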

\paragraph{Convergence when $N<+\infty$.} We now address the convergence of our algorithm when the number of samples remains finite. By quantifying that for $N$ large enough $\Vert m_{N,\delta,\gamma}-m_{\delta,\gamma}\Vert$ is small with high probability, we can use the arguments of the proof of Theorem \ref{th:approx_expect_infty_pbh} to recover the convergence.
\begin{proposition} \label{prop:high_proba_m} Suppose that Assumptions \ref{ass:f_properties1}, \ref{ass:f_properties3}, \ref{ass:Tf}, \ref{ass:Tf_continuity}, \ref{ass:Vx_compact_argmin}, and \ref{ass:growth} hold. Let $x \in \RR^d$, $\gamma,\delta>0$.
    For any $\varepsilon>0$, if $N \in \mathbb{N}\setminus\{0\}$ is large enough there exist $\alpha(\varepsilon)>0$, such that for $N$ i.i.d. samples $y_i$ drawn from $Y_\delta \sim \mathcal{N}(x,\delta \gamma \Id)$,
    \begin{equation}
        \mathbb{P}\left(\left\Vert m_{N,\delta,\gamma}-m_{\delta,\gamma}\right\Vert \leq \varepsilon \right)\geq 1 -\frac{\alpha(\varepsilon)}{N}. 
    \end{equation} 
\end{proposition}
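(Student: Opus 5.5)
The estimate is a Chebyshev-type concentration bound for a ratio of empirical means. Write $W_i := e^{-f(T_f(y_i))/\delta}$ and $Z_i := T_f(y_i)W_i$. Set
\[
A_N := \frac1N\sum_{i=1}^N Z_i,\qquad B_N := \frac1N\sum_{i=1}^N W_i,
\]
so that $m_{N,\delta,\gamma}=A_N/B_N$, and similarly $m_{\delta,\gamma}=a/b$ with $a:=\mathbb E[Z_1]$ and $b:=\mathbb E[W_1]$.

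First I check that everything is well defined with finite second moments. Because $f$ is bounded below (Assumption \ref{ass:growth}, or coercivity plus continuity), we have $0<W_i\le e^{-f_{\min}/\delta}$, so $W_i$ is bounded. Also $b>0$, since $T_f(Y_\delta)\in\mathcal M_\ast$ almost surely (Assumption \ref{ass:Tf}) and $f$ is finite there.

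The delicate part is $\mathbb E\Vert Z_1\Vert^2=\mathbb E[\Vert T_f(Y_\delta)\Vert^2 W_1^2]$. On the event where $\Vert T_f(Y_\delta)\Vert$ is large, the lower bound in Assumption \ref{ass:growth} gives
\[
\Vert T_f(Y)\Vert^2 e^{-2f(T_f(Y))/\delta}\le \Vert T_f(Y)\Vert^2 e^{-2(C_1\Vert T_f(Y)\Vert^\alpha - C_2)/\delta}.
\]
The function $r\mapsto r^2e^{-2C_1r^\alpha/\delta}$ is bounded on $[0,\infty)$, so $\Vert Z_1\Vert$ is in fact almost surely bounded by a constant $K_\delta$. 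This step is where the growth assumption is used; it replaces any tail control on $T_f(Y_\delta)$, which the paper handles elsewhere in Proposition \ref{prop:gaussian_tail_poly}.

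Next I reduce the ratio error to errors on the numerator and denominator:
\[
\frac{A_N}{B_N}-\frac ab=\frac{(A_N-a)-m_{\delta,\gamma}(B_N-b)}{B_N}.
\]
On the event $\{|B_N-b|\le b/2\}$ this gives
\[
\Vert m_{N,\delta,\gamma}-m_{\delta,\gamma}\Vert\le \frac{2}{b}\bigl(\Vert A_N-a\Vert+\Vert m_{\delta,\gamma}\Vert\,|B_N-b|\bigr).
\]
It therefore suffices to make $\Vert A_N-a\Vert\le \varepsilon b/4$ and $|B_N-b|\le \min\{b/2,\ \varepsilon b/(4(1+\Vert m_{\delta,\gamma}\Vert))\}$. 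The quantity $m_{\delta,\gamma}$ is finite by the bound on $\Vert Z_1\Vert$.

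Chebyshev's inequality, applied to the i.i.d. bounded vectors $Z_i$ coordinatewise (or through $\mathbb E\Vert A_N-a\Vert^2=\mathrm{tr}\,\mathrm{Cov}(Z_1)/N\le K_\delta^2/N$) and to the scalars $W_i$, gives
\[
\mathbb P(\Vert A_N-a\Vert>t)\le \frac{K_\delta^2}{Nt^2},\qquad \mathbb P(|B_N-b|>s)\le\frac{e^{-2f_{\min}/\delta}}{Ns^2}.
\]
A union bound then yields the claim with
\[
\alpha(\varepsilon)=\frac{16K_\delta^2}{\varepsilon^2b^2}+\frac{e^{-2f_{\min}/\delta}}{s(\varepsilon)^2},
\]
where $s(\varepsilon)$ is the threshold on $|B_N-b|$ chosen above. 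The requirement "$N$ large enough" only serves to make $1-\alpha(\varepsilon)/N$ nonnegative and meaningful. The constants depend on $x,\gamma,\delta$, which are fixed in the statement.

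The main obstacle is justifying that $Z_1$ is square integrable, because $T_f(Y_\delta)$ itself may have no useful moments. My plan is to absorb the norm into the exponential weight using the coercive lower bound, as above. If that bound were unavailable, I would instead split $\mathbb E\Vert Z_1\Vert^2$ into a bounded set $K$ and its complement, and control the complement with Proposition \ref{prop:gaussian_tail_poly}.
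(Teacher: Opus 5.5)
Your argument is essentially the paper's: the same reduction $\|m_{N,\delta,\gamma}-m_{\delta,\gamma}\|\le \|A_N-A\|/(B-\eta)+\|A\|\,|B_N-B|/(B(B-\eta))$ on the event $\{|B_N-B|\le\eta\}$ (you take $\eta=b/2$), followed by Chebyshev on the numerator and denominator and a union bound. The only real difference is how you get $\mathbb E\|Z_1\|^2<\infty$. You observe that $\|T_f(y)\|e^{-f(T_f(y))/\delta}$ is uniformly bounded, using only the lower growth bound evaluated at $T_f(y)$. The paper instead appeals to Lemma \ref{lm:Tf_bounded}, which also needs the upper polynomial bound and the descent property $f(T_f(z))\le f(z)$; that property is not among the stated assumptions, so your justification is the more self-contained one.
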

In fact, the previous result holds for all $N$, but is vacuous for $N$ too small. It also holds for fixed $\delta$ and $\gamma$ so we need to be careful in bringing $\delta$ to $0$ too fast for theoretical and practical reasons. %
\begin{theorem}
    {\textbf{Convergence of approximate expectation PBH ($N < +\infty$).}} \label{th:approx_expect_finite_pbh} Suppose that Assumptions \ref{ass:f_properties1}, \ref{ass:f_properties3}, \ref{ass:Tf}, \ref{ass:Tf_continuity}, \ref{ass:Vx_compact_argmin}, and \ref{ass:growth} hold.
    Then, the iterates $\{x_k\}_{k\in \mathbb N}$ of Algorithm \ref{alg:approx_expect_Ninfty_pbh} with finite $N$ (i.e., $C= 1$) converge to $x_\ast$ when $k\to +\infty$ with high probability if $N$ is sufficiently large.
\end{theorem}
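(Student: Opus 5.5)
The idea is to reproduce the proof of Theorem \ref{th:approx_expect_infty_pbh} on a high-probability event where the finite-sample barycenter $m_{N,\delta_k,\gamma_k}$ stays close to the exact one $m_{\delta_k,\gamma_k}$. First I isolate what that proof needs. There is a finite index $K_0$, depending on $x_0,\gamma_0,\delta_0,\eta_1,\eta_2$ and $f$, with the following property. From $K_0$ on, $\gamma_k$ has grown beyond $\gamma_{x_k}$ (Lemma \ref{lm:gamma_ast}), so $\mathcal{M}_{x_k}=\{x_\ast\}$, and $\delta_k$ is small enough that $m_{\delta_k,\gamma_k}$ lies in a ball $B(x_\ast,r)\subset \mathrm{int}(\Att(\{x_\ast\}))$ (Proposition \ref{prop:barycenter_in_convex_hull}). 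The radius $r>0$ comes from Lemmas \ref{lm:m_N_good} and \ref{lm:cluster-points-nearby-components}, and $\mathrm{int}(\Att(\{x_\ast\}))$ contains a neighbourhood of $x_\ast$ because $x_\ast$ is an isolated strict minimizer and Assumption \ref{ass:Tf_continuity} holds. The deterministic part of the statement is then: if at iteration $k$ we have $\|m_{N,\delta_k,\gamma_k}-m_{\delta_k,\gamma_k}\|\le \varepsilon_k$ for suitable tolerances $\varepsilon_k$, the iteration behaves as in the $N=\infty$ analysis. The acceptance rule never increases $f(x_k)$. Since Assumption \ref{ass:f_properties3} separates values by $\mu$, and coercivity confines the iterates to the sublevel set $\{f\le f(x_0)\}$, which meets only finitely many components, a strict decrease can happen only finitely often. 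Between decreases $\gamma$ grows geometrically. Hence $x_\ast$ is reached after finitely many iterations, and once reached it is kept, since nothing has a lower value.

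Next I fix the tolerance uniformly. Set $\varepsilon:=r/2$. On the event $\|m_{N}-m_{\delta_k,\gamma_k}\|\le\varepsilon$ at an iteration $k\ge K_0$, the point $m_N$ lies in $B(x_\ast,r)$, so $T_f(m_N)=x_\ast$ by continuity of $T_f$ on the interior of the attractor. Before $K_0$ no accuracy is needed: the rule $x_{k+1}=x_k$ when $f(x_+)>f(x_k)$ guarantees monotonicity whatever the samples are, and $\gamma_k,\delta_k$ evolve by the deterministic rule up to which branch is taken. The only delicate point is bounding $K_0$ along the random path. I would show that $\gamma_k\ge\gamma_0$ always, and that after at most $J$ strict decreases, where $J$ is the number of component values below $f(x_0)$, $\gamma$ is multiplied by $\eta_1$ at every other step. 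So within $J+L$ steps, with $L$ deterministic, $\gamma$ exceeds $\sup_{x\in\{f\le f(x_0)\}}\gamma_x$. That supremum is finite by a compactness version of Lemma \ref{lm:gamma_ast}. This gives a deterministic bound $\bar K$ on the first iteration at which the concentration regime is guaranteed.

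Finally the probability estimate. Proposition \ref{prop:high_proba_m} at iteration $\bar K$, with parameters $(\delta_{\bar K},\gamma_{\bar K})$, gives success probability at least $1-\alpha/N$. Success makes $x_{\bar K+1}=x_\ast$, and after that convergence holds surely. Requiring success at one iteration, or at any of a few iterations after $\bar K$, gives $\mathbb{P}(x_k\to x_\ast)\ge 1-\alpha(\varepsilon)/N$, which tends to $1$ as $N$ grows.

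The main obstacle is the uniformity of $\alpha(\varepsilon)$ in Proposition \ref{prop:high_proba_m}. That constant depends on $x,\delta,\gamma$, while $x_{\bar K}$ and $\gamma_{\bar K}$ are random, although they take only finitely many values on the sublevel set: finitely many components, and $\gamma$ in a finite geometric grid up to $\bar K$. I would therefore take the maximum of $\alpha$ over these finitely many configurations. If the iterate sits in a component $M$ that is not a single point, continuity of $\alpha$ in $x$ together with boundedness of $M$ would be needed. I would try to get it from the variance bound behind the Chebyshev argument of Proposition \ref{prop:high_proba_m}, which depends continuously on $x$ through Gaussian moments.
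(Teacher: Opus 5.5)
Your proposal follows the paper's proof: confine the iterates to the sublevel set, let the growth of $\gamma_k$ on non-improving steps push it past the uniform threshold $d_f^2/(2\mu)$ while $\delta_k\downarrow 0$ brings $m_{\delta_k,\gamma_k}$ near $x_\ast$, then apply Proposition~\ref{prop:high_proba_m} at a single iteration so that $x_+=x_\ast$ with high probability, after which $x_\ast$ is never left. Your deterministic index $\bar K$ and the maximum of $\alpha(\varepsilon)$ over the possible values of $(x_{\bar K},\gamma_{\bar K})$ make explicit a uniformity the paper passes over. Fixing $\bar K$ also requires the threshold on $\delta$ to be uniform in large $\gamma$; this follows from the direct bound $\|m_{\delta,\gamma}-x_\ast\|\leq e^{-\mu/\delta}\,\mathbb{E}\|T_f(Y_\delta)-x_\ast\|/\mathbb{P}(Y_\delta\in\mathbb{B}(x_\ast,r))$ together with $\delta_k\gamma_k\leq\delta_0\gamma_0$. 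You should also require $m_{\delta_k,\gamma_k}\in\mathbb{B}(x_\ast,r/2)$, as the paper's condition $r_1+\varepsilon\leq r$ does, since otherwise $\varepsilon=r/2$ only places $m_N$ in $\mathbb{B}(x_\ast,3r/2)$.
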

The required $N$ is path-dependent, but does not need to hold for all the iterates, only for those that have an underlying $m_{\delta,\gamma}$ able to send them to $x_\ast$. Practical considerations are discussed in App. \ref{app:details_alg}, notably the interplay between Proposition \ref{prop:high_proba_m} and the values of $\delta$ and $\gamma$. Remark that the local minimizations of the $y_i$'s can be parallelized.
\subsection{Proximal basin hopping with approximated expectation and local minimization.} Now, if we also assume that we cannot do complete local minimization but only obtain approximate local solutions, then we have a third level of approximations in our algorithm.
\begin{equation}
  S_f^\gamma(x)\approx_{\delta,\varepsilon}^N T_f^\varepsilon \left(\frac{\sum_{i=1}^N T_f^\varepsilon(y_i) \exp\left(-f\left(T_f^\varepsilon(y_i) \right)/\delta\right)}{\sum_{i=1}^N \exp\left(-f\left(T_f^\varepsilon(y_i) \right)/\delta\right)}\right), \, \text{where }  y_i \sim \mathcal{N}(x,\delta \gamma \Id) 
\end{equation}
where for all $x \in \RR^d,$ $f (T_f^\varepsilon (x))\leq f (T_f(x))+ \varepsilon$, and $\Vert T_f^\varepsilon(x) - T_f(x) \Vert \leq r(\varepsilon)$ for some $\varepsilon\geq0$ and function $r:\RR_+ \to \RR_+$, such that $r(0):=0$. If $T_f:=\Id$ we recover the zeroth-order proximal algorithm, which converges provided that $N\to+\infty$.
A first observation is that to obtain convergence we will need $\frac{\varepsilon}{\delta}\rightarrow 0$ with the iterations as:
$
\exp(-f(T_f^\varepsilon(x))/\delta) \geq \exp\left(-f(T_f(x))/\delta\right) \exp\left(-\frac{\varepsilon}{\delta}\right)$. The right-hand side exponential is approximately the size of the perturbation on the weights. A second observation is that we also need $\Vert T_f^\varepsilon(x) - T_f(x) \Vert$ to go to $0$ uniformly for all $x$ w.r.t. $\varepsilon$; otherwise the weighted average would not behave as $m_{N,\delta,\gamma}$, let alone for the refined guess.

\section{Numerical experiments}
In this section, we benchmark our algorithm against the zeroth-order proximal (ZOP) algorithm presented in \citep{zhang2024inexact}, and basin hopping (BH)  algorithm \citep{wales1997basinhopping}. We argue that these two algorithms are sufficient as benchmarks as other global optimization algorithms lack convergence guarantees, and because ZOP was shown to often be better than them in \citep{zhang2024inexact} as was also shown for BH \citep{baioletti2024performance}.

\paragraph{Comparison setup.} Basin hopping is our Algorithm \ref{alg:approx_expect_Ninfty_pbh} with sample size of $1$. The new guess, however, is accepted with probability proportional to $\exp(\frac{f(x)-f\left(T_f(y_i) \right)}{\delta})$. Zeroth-order Prox is our Algorithm App. Exp. PBH with $T_f = \Id$. As the three algorithms are not comparable w.r.t. to iteration count, we choose to report CPU time instead. We set parameters of all algorithms to the same value at initialization, and update with the rules of our algorithm, keeping the number of samples constant for PBH and ZOP. We did not investigate multisearch, and instead focused on convergence from one initialization. 
After the CPU-time budget has been spent, algorithms are stopped. 

\paragraph{Set of problems.} We propose to test our algorithm on various problems, with different complexities.

\textit{Hard synthetic functions.}
We test our algorithm against the $d$-dimensional Rastrigin, and $d$-dimensional Griewank functions, both satisfying our growth assumption. The global minimizer is at $\mathbf{0}$ for all $d$, with $f(\mathbf{0})=0$. Rastrigin displays a much higher number of local minimizers than the Griewank function. 
We optimized these two functions with dimensions ranging from $2$ to $200$. The local solver uses gradient descent. %

\textit{Lennard--Jones cluster.}
Let $x_1,\dots,x_D \in \mathbb{R}^3$ denote the positions of $D\geq2$ atoms.
The Lennard--Jones energy is defined by
\[
\mathrm{LJ}(x_1,\dots,x_D)
:=
\sum_{1\leq i<j\leq D}
\left(
\frac{1}{\|x_i-x_j\|^{12}}-\frac{2}{\|x_i-x_j\|^{6}}
\right).
\]
The associated global optimization problem is of dimension $d:=3 D$. The energy depends only on pairwise distances, and is therefore invariant under translations, rotations, and permutations. We optimized this function for $3$ to $50$ atoms. The local solver uses LBFGS steps.

\textit{Fitting scaling laws.} Fitting scaling laws is an important problem in deep learning to predict model performance as a function of model size $K$ and the number of tokens $D$. In particular, the choice of dataset mixtures is of crucial importance. 
To derive a good scaling law, the authors of \citep{shukorscaling} collected training runs with different domain weights $h$ (datasets), model sizes $K$, and token budgets $D$, and recorded the resulting loss on the target domain. Three scaling law models were tested in \citep{shukorscaling} to understand optimal data mixtures \citep[Section 2.2]{shukorscaling}. %
The optimal parameters are estimated by minimizing a Huber objective over $p$ observations.
Finally, evaluation is performed on held-out runs with unseen triples $(K,D,h)$ by comparing predicted and observed losses. The local solver uses LBFGS steps.
The code for computing, fitting, and evaluating the scaling laws is reused from \url{https://github.com/apple/ml-scalefit}.

\paragraph{Numerical results.}
All the results displayed in this section are computed using $8$ CPUs. We report final values, and not best achieved values, to highlight convergence capabilities and robustness. For the synthetic functions, we can see that the performance gap increases between PBH and BH when the dimension increases on the Rastrigin function, and stays consistent on the simpler Griewank function. ZOP did not converge on either functions: our hypothesis is that the number of samples needs to increase as $\delta$ decreases \citep[Section 5.2]{zhang2024inexact} in a much more constrained way than PBH. In our experiments, we saw that what matters at initialization is the scaling $\delta \gamma$: reducing $\delta$ at initialization would target smaller function values, but at the cost of exploration, thus $\gamma$ should be increased accordingly.
\begin{figure}
    \centering
    \includegraphics[trim={0em 0em 0em 0em},clip,width=0.32\textwidth]{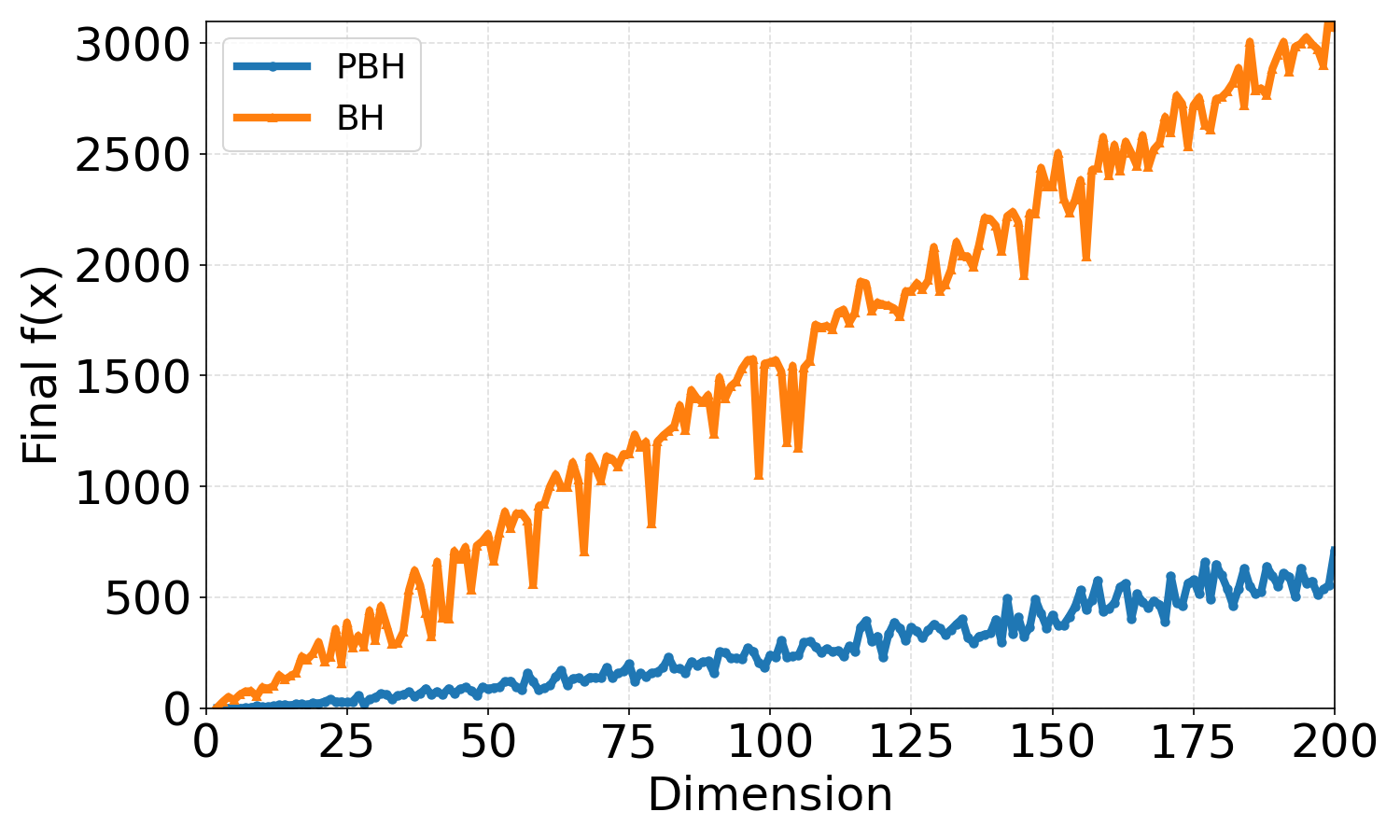}
    \includegraphics[trim={0em 0em 0em 0em},clip,width=0.32\textwidth]{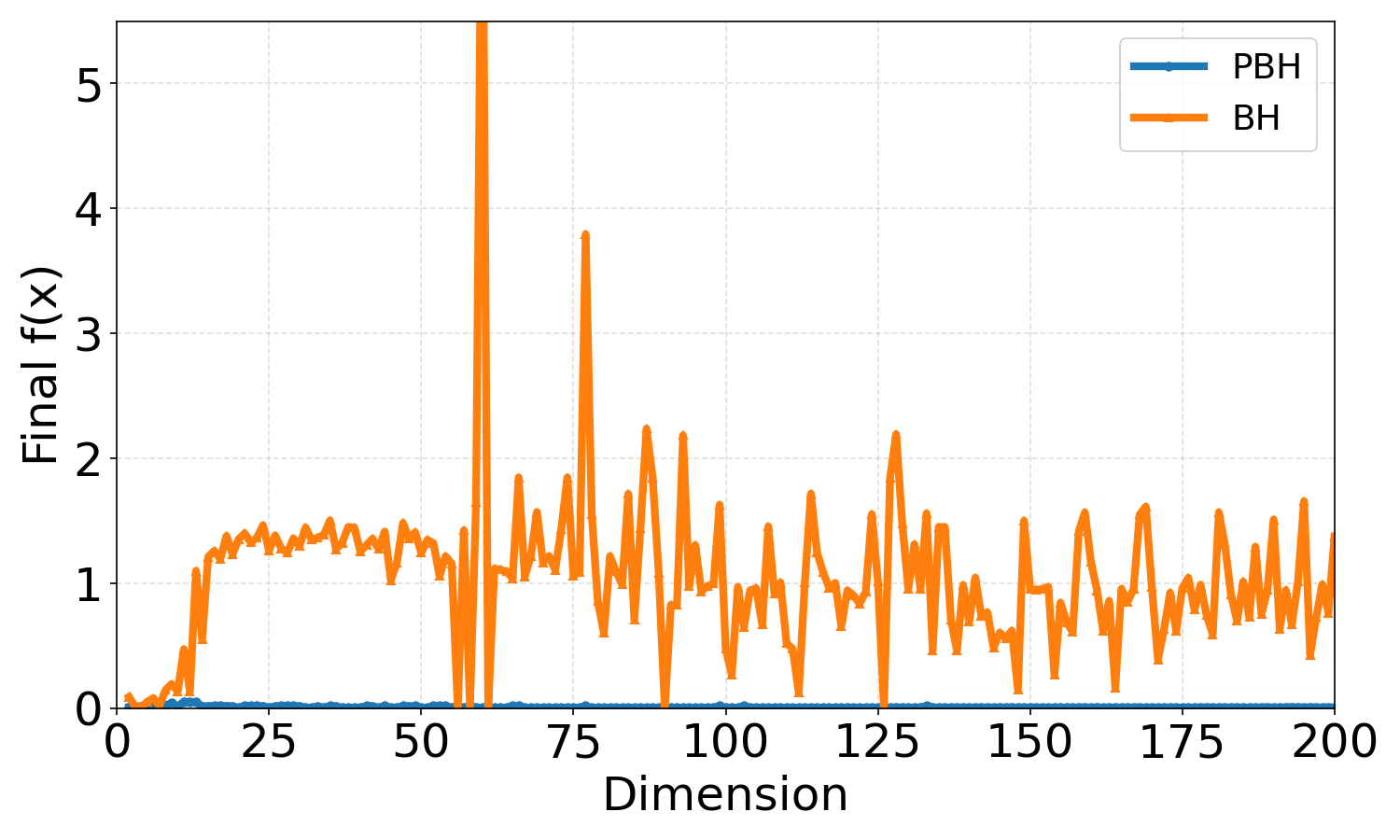}
    \includegraphics[trim={0em 0em 0em 0em},clip,width=0.32\textwidth]{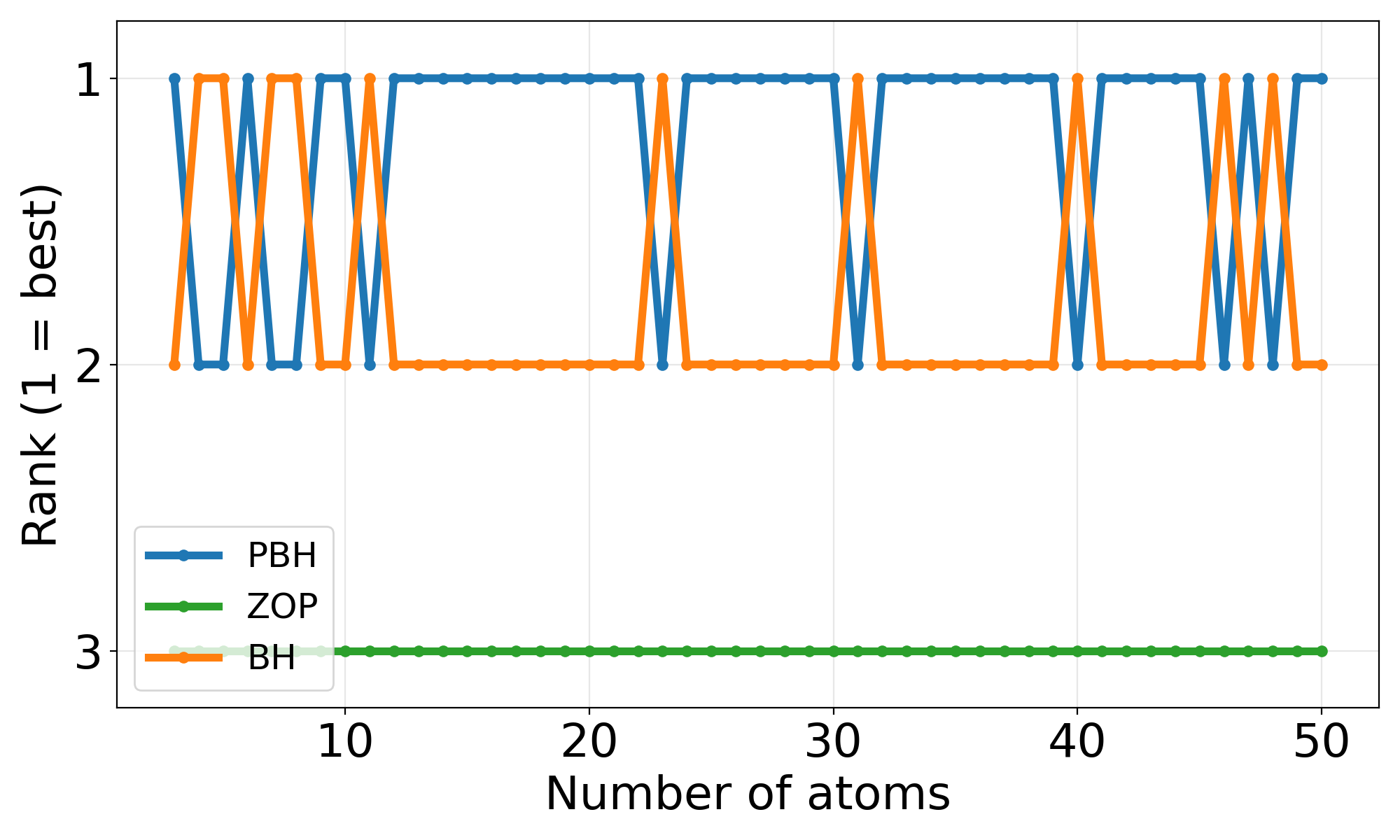} \
    \caption{Comparison of BH (\textcolor{orange}{orange}), PBH (\textcolor{blue}{blue}), and ZOP (\textcolor{ForestGreen}{green}) on (left) Rastrigin, (middle) Griewank, and (right) Lennard--Jones: the ranking between algorithms is plotted ($1$ is best). $N=10\times d$, $10$ local steps, CPU time $=2 \times d$. When it does not appear, ZOP diverged. We set $\gamma = 5$ and $\delta=0.5$ at initialization. Rastrigin experiment ran in $18$h, with $8$ CPUs and $2$GB of memory.}
\end{figure}
In this constrained budget setting, it appears that allocating more time to the exploration is not always useful: it is better to do more iterations with fewer samples. We display the results for the additive scaling law, for the three large deep learning models tested in \citep{shukorscaling}: large language model (LLM), native multi-modal model (NMM), and large vision model (LVM). We report the results for the additive scaling law, that yielded the best test error. For almost all problems PBH with $N=5$ is better than ZOP or BH (Table \ref{tab:add_pbh_mre_by_samples}). Increasing the number of LBFGS steps did not yield better results (for PBH and BH), as increasing the number of samples (keeping CPU number fixed).
\begin{table}[ht]
\centering
\small
\newcommand{\std}[1]{{\scriptsize $\pm$ #1}}
\setlength{\tabcolsep}{2.2pt}
\begin{tabular}{|ll|ccc|cc|}
\hline
Family & Domain & PBH (N=5) & PBH (N=10) & PBH (N=15) & ZOP (N=5) & BH \\
\hline
\multirow{7}{*}{\shortstack{LLM\\($d=19$)}}
& arxiv         & 7.244 \std{4.369}  & \textbf{6.848} \std{4.795}  & 7.336 \std{5.859}  & 7.431 \std{7.533}  & 10.037 \std{5.472} \\
& book          & \textbf{3.671} \std{2.804}  & 4.136 \std{2.778}  & 3.752 \std{3.382}  & 6.158 \std{6.523}  & 4.915 \std{3.298} \\
& c4            & \textbf{4.105} \std{3.225}  & 4.528 \std{3.622}  & 4.168 \std{3.460}  & 7.061 \std{6.521}  & 6.977 \std{4.146} \\
& commoncrawl   & \textbf{3.964} \std{2.848}  & 5.172 \std{3.235}  & 4.009 \std{3.615}  & 5.638 \std{6.520}  & 6.843 \std{3.820} \\
& github        & \textbf{8.597} \std{7.418}  & 9.899 \std{7.707}  & 8.961 \std{8.241}  & 11.131 \std{9.351} & 13.191 \std{8.458} \\
& stackexchange & \textbf{6.109} \std{3.795}  & 7.024 \std{4.270}  & 6.374 \std{5.469}  & 7.651 \std{6.644}  & 8.640 \std{6.598} \\
& wikipedia     & \textbf{8.540} \std{5.088}  & 9.484 \std{5.011}  & 8.554 \std{5.340}  & 10.416 \std{6.524} & 13.898 \std{5.064} \\
\hline
\multirow{4}{*}{\shortstack{LVM\\($d=13$)}}
& alttext       & \textbf{3.083} \std{3.745}  & 6.037 \std{3.585}  & 4.896 \std{3.674}  & 6.354 \std{4.648}  & 6.452 \std{4.459} \\
& highquality1  & 5.753 \std{3.368}  & 7.636 \std{3.903}  & 6.474 \std{3.091}  & 14.093 \std{6.332} & \textbf{5.494} \std{3.799} \\
& highquality2  & \textbf{3.978} \std{2.948}  & 8.189 \std{7.543}  & 4.681 \std{7.894}  & 14.201 \std{12.136} & 5.446 \std{4.691} \\
& synthetic     & 11.384 \std{12.289} & 16.939 \std{13.689} & \textbf{11.302} \std{11.621} & 24.611 \std{15.638} & 14.320 \std{7.138} \\
\hline
\multirow{3}{*}{\shortstack{NMM\\($d=11$)}}
& captions      & \textbf{3.725} \std{3.209}  & 4.526 \std{3.400}  & 4.074 \std{3.891}  & 8.698 \std{7.172}  & 5.741 \std{4.222} \\
& interleaved   & 2.854 \std{1.329}  & 3.299 \std{1.318}  & \textbf{2.693} \std{1.470}  & 6.176 \std{2.263}  & 5.618 \std{2.622} \\
& text          & 2.387 \std{1.185}  & 3.169 \std{1.320}  & \textbf{2.341} \std{1.313}  & 4.500 \std{1.894}  & 7.332 \std{3.689} \\
\hline
\end{tabular}
\caption{Test \(
    \mathrm{MRE}
    =
    \frac{|\text{prediction} - \text{observation}|}{\text{observation}}\) (\%) for fitting additive scaling law. 
(\texttt{lbfgs steps=5}, \texttt{CPU time=300s}).
We set $\gamma = 10$ and $\delta = 10^{-3}$ at initialization. Results averaged over three seeds, we report the standard deviation. On average, PBH yields better test error than other algorithms. The standard deviation is also indicative of the best possible value reachable.
} 
\label{tab:add_pbh_mre_by_samples}
\end{table}

\section{Conclusion, limitations and broader impact}
\paragraph{Conclusion.} In this paper, we present a new theoretical framework, Proximal Basin Hopping, from which we construct a practical algorithm outperforming state-of-the-art methods on various problems. 

\paragraph{Limitations.} With respect to the present convergence analysis, and the subsequent algorithm, a question can be raised about the fact that we aggregate points, instead of simply taking the best point every time. In this setting, we would lose the proximal interpretation which is vital for convergence to the global minimizer through the parameter $\gamma$, but the practical results may be better. 
It has been shown for the basin hopping algorithm that in some cases, better-tailored perturbation could improve optimization \citep{englander2014tuning}. Here, if we steer away from a Gaussian distribution for the stochastic exploration, we also steer away from the proximal interpretation and the convergence analysis of our algorithm. However, a variable metric \citep{chouzenoux2014variable} is possible by using $y \sim \mathcal{N}(x_k, \delta \gamma \Sigma) \text{ instead of } y \sim \mathcal{N}(x_k, \delta \gamma \Id)$. It remains to choose $\Sigma$ properly.  
Also, the impact of the inexactness of the local minimization should be investigated on a per-solver basis. A first step should be to investigate it for gradient flow. Significant work is required as the geometry of attraction basins should be described in far more detail than what we used here. 
Finally, if letting the number of samples be fixed simplifies greatly the computations, it could be interesting to adapt the number of samples on the fly (notably with Proposition \ref{prop:high_proba_m}), to temporarily escape some traps, or reduce the computational load if progress continues.

\paragraph{Broader impact.} The proposed method may help reduce the number of searches required to  obtain global minimizer, and reduce the overall computational load.

\medskip

\bibliographystyle{plainnat}
\bibliography{references}

\newpage
\appendix
\section{Related works}

\paragraph{Global optimization and zeroth-order methods.}
Global optimization has a long history, with most used methods being derived from some metaheuristics \citep{locatelli2021global}. A large part of this literature can be viewed as zeroth-order optimization, as the algorithms only query function values. The simplest one being pure random search, which samples candidate points and keeps the best value found \citep{zabinsky2009random}. More structured deterministic zeroth-order methods include Lipschitz global optimization, but are typically limited by dimension \citep{hansen1992global,bouttier2020regret}. Multi-start local search is another way to combine global exploration and local descent, and its efficiency depends obviously on how starting points are allocated \citep{gyorgy2011efficient}.

Genetic and evolutionary algorithms maintain and transform a population of candidate solutions through selection and variation mechanisms \citep{lambora2019genetic}. Simulated annealing instead constructs a stochastic process whose ability to escape local minima is controlled by a temperature parameter; classical convergence guarantees require sufficiently slow cooling schedules \citep{hajek1988cooling} if the number of states to investigate is finite. Particle swarm optimization updates a population of particles using both individual and collective best positions \citep{kennedy1995particle}. Consensus-based optimization is another population-based approach in which particles concentrate around weighted averages favoring low objective values; it is a more theoretically principled way of doing particles optimization, as global convergence under suitable assumptions was derived \citep{fornasier2024consensus}. Bayesian optimization and related surrogate-based methods are highly effective in low to moderate dimension, but their behavior in high dimension remains delicate \citep{papenmeier2025understanding}. 

\paragraph{Energy landscapes, basins, and local optima networks.}
A complementary line of work, that has attracted most of its attention from chemistry fields, studies the organization of nonconvex landscapes through their local minima and basins of attraction. Local optima networks encode local minimizers as nodes and transitions between them as edges, giving a graph representation of the landscape \citep{ochoa2014local,tomassini2022local}. Such tools have been used to visualize and quantify funneling landscapes, in which local descent tends to guide configurations toward increasingly low-energy regions \citep{wales2010energy}. The structure of attraction basins evidently depends on the local solver: different minimization algorithms may induce different basins on the same objective \citep{asenjo2013visualizing}. This viewpoint is closely related to the study of iteration maps, attractors, and basins of attraction in numerical minimization \citep{levy2018attraction,levy2026analyzing}. 

\paragraph{Basin-hopping algorithms.}
Basin hopping was introduced for the global optimization of atomic cluster energies, in particular Lennard--Jones clusters \citep{wales1997basinhopping}. The central idea is to compose random perturbations with local minimization, thereby transforming the original objective into an energy landscape over local minima. This transformation has proved especially effective in molecular and cluster optimization \citep{prentiss2008protein,alvarez2025review,de2025goat}. Despite its practical success, this scheme has limited general convergence theory and is typically analyzed or tuned empirically.

Several variants of basin hopping have since been created, due to its success. Monotonic basin hopping accepts only moves that improve the objective and has been tuned for trajectory optimization problems \citep{englander2014tuning}. Adaptive hopping strategies modify the perturbation distribution to improve transitions between promising regions \citep{englander2020hopping}. Population-based basin hopping maintains several candidate solutions and uses dissimilarity or diversity mechanisms to explore multiple regions of the landscape \citep{grosso2007population}. Recent comparisons show that basin hopping is competitive with established metaheuristics on global optimization benchmarks, but its performance remains problem dependent \citep{baioletti2024performance}. There is also theoretical work studying the difficulty of moving between distant basins \citep{goodridge2022hopping}. 

\paragraph{Proximal algorithms and stochastic proximal approximations.}
Proximal algorithms are a central tool in convex and variational optimization \citep{combettes2005signal,parikh2014proximal,VarAnalRockafellar}. In convex settings, inexact proximal splitting methods remain convergent when the errors in the proximal evaluations are controlled, for instance by summability assumptions \citep{combettes2005signal}. Recent work also studies Monte Carlo approximations of proximal steps for nonsmooth convex optimization \citep{di2025monte}.

A Hamilton--Jacobi-based zeroth-order approximation of the proximal operator was proposed in \citep{osher2023hamilton}:
\begin{equation}
    \prox_{\lambda f}^{\delta}(x)
    =
    \frac{\mathbb{E}_{y \sim \mathcal{N}(x,\delta \lambda I)}
    \left[y \exp(-f(y)/\delta)\right]}
    {\mathbb{E}_{y \sim \mathcal{N}(x,\delta \lambda I)}
    \left[\exp(-f(y)/\delta)\right]},
    \qquad
    \prox_{\lambda f}(x)=\lim_{\delta\downarrow 0}\prox_{\lambda f}^{\delta}(x).
\end{equation}
Rates for such stochastic proximal and projection estimators have been investigated in \citep{lauga2026prox,morales2026convergence}. This approximation has also been used for zeroth-order global optimization in \citep{zhang2024inexact}, where an adaptive proximal-point scheme encourages exploration near non-global critical points. Their implementation estimates the expectation either by tensor-train methods or by Monte Carlo integration. The convergence argument relies on decreasing the smoothing parameter and increasing the number of samples sufficiently fast so that the accumulated proximal error remains controlled, for example through summability conditions \citep[Corollary 4]{zhang2024inexact}. 

%

\section{Details about our algorithm} \label{app:details_alg}
\paragraph{In practice.}
Instead of decreasing $\delta$ at each iteration, we adjust it on the fly: if the minimum value of $f$ identified by the sampling mechanism has not lead to concentration on this minimum value, we decrease $\delta$ accordingly. This small tweak is a practical tradeoff given that we are using small sample sizes.

Furthermore, for numerical stability, the weighted mean is computed by substracting the maximum weight value, before computing the ratio. It is also done for the ZOP algorithm (for the same reasons).
\begin{proposition} \label{prop:sf_min}
    Suppose that Assumptions \ref{ass:f_properties1}, \ref{ass:f_properties3}, and \ref{ass:Tf} hold. Let $N\in \mathbb{N}\setminus\{0\}$, let $\{m_i\}_{1\leq i \leq N} \in \mathcal{M}_\ast$, and assume that there exists $1\leq i_x \leq N$ such that $f(m_{i_x})<f(m_i)$ for all $i\neq i_x$, with $m_{i_x} \in \mathbb B(m_{i,x},r_{i,x}) \subseteq \Att(m_{i_x})^\circ$. If $\delta$ is small enough, i.e., such that
    \begin{equation}
        e^{-\Delta/\delta} \leq \frac{r_{i_x}}{(N-1)\max_{1\leq i \leq N} \Vert m_i - m_{i_x} \Vert},
     \end{equation}
    where $\Delta = \min_{1\leq i \leq N, i \neq i_x} f(m_i) -f(m_{i_x})$, 
    then
    \begin{equation}
        T_f\left(\frac{\sum_{i=1}^N m_i \exp(-f\left(m_i \right)/\delta)}{\sum_{i=1}^N \exp(-f\left(m_i \right)/\delta)}\right) = m_{i_x}.
    \end{equation}
\end{proposition}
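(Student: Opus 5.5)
The plan is to show that the weighted barycenter $\bar m := \sum_i w_i m_i$, with normalized weights $w_i := e^{-f(m_i)/\delta}/\sum_j e^{-f(m_j)/\delta}$, lies in the ball $\mathbb B(m_{i_x},r_{i_x})$. By hypothesis this ball is contained in $\Att(m_{i_x})^\circ$, and by definition of the attraction set $T_f$ sends every point of it into the component containing $m_{i_x}$. So the whole argument reduces to a deterministic estimate on $\Vert \bar m - m_{i_x}\Vert$; no probability is involved. I read the hypothesis as saying that the ball centered at $m_{i_x}$ with radius $r_{i_x}>0$ is inside the interior of the attraction set of the component containing $m_{i_x}$.

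First I write
\[
\bar m - m_{i_x} = \sum_{i\neq i_x} w_i (m_i - m_{i_x}),
\]
using $\sum_i w_i = 1$. Next I bound each weight for $i\neq i_x$. Since the denominator is at least $e^{-f(m_{i_x})/\delta}$,
\[
w_i \le e^{-(f(m_i)-f(m_{i_x}))/\delta} \le e^{-\Delta/\delta}.
\]
Hence, writing $D:=\max_i \Vert m_i-m_{i_x}\Vert$,
\[
\Vert \bar m - m_{i_x}\Vert \le (N-1)\, e^{-\Delta/\delta}\, D \le r_{i_x},
\]
where the last inequality is exactly the assumed condition on $\delta$. If $D=0$ or $N=1$ there is nothing to prove.

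The main obstacle is the boundary case of equality, where $\bar m$ could land on the sphere $\Vert \bar m - m_{i_x}\Vert = r_{i_x}$. Whether that point is still in $\Att(m_{i_x})^\circ$ depends on whether the ball is taken open or closed. I would handle this in one of two ways. One option is to interpret the ball as closed, which the inclusion in the hypothesis permits. The other is to note that the first inequality $w_i\le e^{-\Delta/\delta}$ is strict, since the denominator contains at least two positive terms when $N\ge 2$, and this gives $\Vert\bar m - m_{i_x}\Vert<r_{i_x}$.

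Finally, Assumption \ref{ass:Tf} makes $T_f$ single-valued, and $\bar m\in \Att(m_{i_x})$ gives $T_f(\bar m)\in$ the component of $m_{i_x}$. The paper writes this output as $m_{i_x}$, identifying the component with its point (trivially so when the component is a singleton, as in the statement's notation). This concludes the argument.
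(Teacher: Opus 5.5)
Your proposal is correct and follows essentially the paper's argument: you write the barycenter minus $m_{i_x}$ as $\sum_{i\neq i_x} w_i(m_i - m_{i_x})$, bound the off-target mass by $(N-1)e^{-\Delta/\delta}$, and conclude using the ball contained in $\Att(m_{i_x})^\circ$. The paper bounds $1-\lambda_{i_x}$ where you bound each weight separately, which gives the same estimate. Your observation that the weight bound is strict for $N\geq 2$ is a small improvement, since it places the barycenter in the open ball, whereas the paper stops at $\leq r_{i_x}$ and leaves the open-versus-closed-ball issue implicit.
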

\begin{proof}
    We note 
    \[m_\delta:=\frac{\sum_{i=1}^N m_i \exp(-f\left(m_i \right)/\delta)}{\sum_{i=1}^N \exp(-f\left(m_i \right)/\delta)} \text{ and } \lambda_i = \frac{\exp(-f\left(m_i \right)/\delta)}{\sum_{i=1}^N \exp(-f\left(m_i \right)/\delta)} \]
     We need to show that $m_\delta \in \Att(m_{i_x})$ to conclude. $m_{i_x}$ being in the interior of $\Att(m_{i_x})$ there exist $r_{i_x}>0$ such that $\mathbb{B}(m_{i_x},r_{i_x}) \subset \Att(m_{i_x})$. By taking this radius as large as possible, we upper bound the value of $\delta$ required. We want:
     \begin{align*}
        \left\Vert \sum_{i=1}^N \lambda_i m_i - m_{i_x} \right\Vert  \leq r_{i_x}\Leftrightarrow  \left\Vert \sum_{i=1, i\neq i_x }^N \lambda_i (m_i - m_{i_x}) \right\Vert  \leq r_{i_x}
     \end{align*}
     Now,
     \begin{equation*}
        \left\Vert \sum_{i=1, i\neq i_x }^N \lambda_i (m_i - m_{i_x}) \right\Vert \leq  \sum_{i=1, i\neq i_x }^N \lambda_i \Vert m_i - m_{i_x} \Vert \leq (1-\lambda_{i_x}) \max_{1\leq i \leq N} \Vert m_i - m_{i_x} \Vert.
     \end{equation*}
    $\Delta$ is stricly greater than $0$ by assumption. We have
     \begin{equation*}
        \lambda_{i_x} = \frac{1}{1+\sum_{i\neq i_x} \exp(-\left(f\left(m_i \right)-f(m_{i_x})\right)/\delta)},
     \end{equation*} 
     hence,
     \begin{equation*}
        1 - \lambda_{i_x} \leq \sum_{i\neq i_x} \exp(-\left(f\left(m_i \right)-f(m_{i_x})\right)/\delta) \leq (N-1)\exp(-\Delta/\delta).
     \end{equation*}
     Thus for concentration to occur, we need $\delta$ small enough so that 
     \begin{equation*}
        e^{-\Delta/\delta} \leq \frac{r_{i_x}}{(N-1)\max_{1\leq i \leq N} \Vert m_i - m_{i_x} \Vert}.
     \end{equation*}
\end{proof}
This threshold on $\delta$ is a worst case, a multiplicity of $m_{i_x}$ reduces the mass on "bad" local minimizers while increasing the mass on the correct local minimizer.
An other aspect that is not directly covered by Proposition \ref{prop:high_proba_m} is the dependence on the dimension for $\alpha(\varepsilon)$. We can at least quantify the probability of eating the basin of attraction for one sample. It is reasonable to assume that we can fit a ball of non zero radius in each basin of attraction and we can estimate the probability of one sample being in this ball explicitly, and subsequently of at least one among $N$. We plot in Figure \ref{fig:proba_closeness_xstar} an estimation of these probabilities with respect to the dimension and the number of samples.
\begin{lemma} \label{lm:proba_basin}Let $x\in \RR^d$, $\sigma>0$ and $Y_\sigma \sim \mathcal{N}(x,\sigma^2 \Id)$. Let $r>0$. We have
    \begin{equation}
        \mathbb{P}(Y_\sigma \in \mathbb{B}(0,r)) = \mathrm{CDF}_{\mathcal{X}'_d(\Vert x \Vert^2/\sigma^2)}(\frac{r^2}{\sigma^2}).
    \end{equation}
Moreover to have a probability $0<\alpha < 1$ of one sample hitting this ball, we need
\begin{equation}
    N \geq \frac{\log(1-\alpha)}{\log(1-\mathbb{P}(Y_\sigma \in \mathbb{B}(0,r)))}.
\end{equation}
\end{lemma}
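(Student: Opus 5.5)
The first claim follows from the standard representation of a non-centered Gaussian norm. The second is a computation with independent trials.

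For the first part, write $Y_\sigma = x + \sigma Z$ with $Z\sim\mathcal N(0,\Id)$. Then
\[
\Vert Y_\sigma\Vert^2/\sigma^2 = \Vert Z + x/\sigma\Vert^2 = \sum_{j=1}^d (Z_j + x_j/\sigma)^2 .
\]
This is a sum of squares of $d$ independent unit-variance Gaussians with means $x_j/\sigma$. By definition it follows a noncentral chi-squared law $\mathcal{X}'_d(\lambda)$ with $d$ degrees of freedom and noncentrality parameter
\[
\lambda=\sum_j x_j^2/\sigma^2=\Vert x\Vert^2/\sigma^2 .
\]
Since $\{Y_\sigma\in\mathbb B(0,r)\}=\{\Vert Y_\sigma\Vert^2/\sigma^2\le r^2/\sigma^2\}$, the probability equals $\mathrm{CDF}_{\mathcal{X}'_d(\Vert x\Vert^2/\sigma^2)}(r^2/\sigma^2)$. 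Whether the ball is open or closed does not matter, because the boundary sphere has Lebesgue measure zero and the law is absolutely continuous.

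For the second part, let $p:=\mathbb P(Y_\sigma\in\mathbb B(0,r))$. The samples $y_1,\dots,y_N$ are i.i.d., so the probability that none of them hits the ball is $(1-p)^N$. The probability that at least one hits it is therefore $1-(1-p)^N$.

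We require $1-(1-p)^N\ge\alpha$, that is, $(1-p)^N\le 1-\alpha$. Taking logarithms gives $N\log(1-p)\le\log(1-\alpha)$. Here $p\in(0,1)$: $p>0$ because the ball has positive measure and the Gaussian density is positive, and $p<1$ because the complement also has positive measure. Hence $\log(1-p)<0$, and dividing by it reverses the inequality, giving $N\ge \log(1-\alpha)/\log(1-p)$.

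The only step needing care is this sign check, so that dividing by $\log(1-p)$ correctly flips the inequality. The rest is routine.
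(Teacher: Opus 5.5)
Your proposal is correct and follows essentially the same route as the paper: you identify $\Vert Y_\sigma\Vert^2/\sigma^2$ as a noncentral chi-squared variable with $d$ degrees of freedom and noncentrality $\Vert x\Vert^2/\sigma^2$, and then solve $1-(1-p)^N\geq\alpha$ for $N$. Your explicit checks that $p\in(0,1)$, so that dividing by $\log(1-p)<0$ reverses the inequality, and that the boundary sphere is a null set, are small additions in rigor that the paper leaves implicit.
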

\begin{proof}
    We have
\begin{align*}
    \mathbb{P}(Y_\sigma \in \mathbb{B}(0,r))  = \mathbb{P}(\Vert Y_\sigma \Vert \leq r) = \mathbb{P}(\Vert Y_\sigma \Vert^2 \leq r^2) = \mathbb{P}(\Vert Y_\sigma \Vert^2/\sigma^2 \leq r^2/\sigma^2)
\end{align*}
Now, $\Vert Y_\sigma \Vert^2/\sigma^2$ is the sum of $d$ independent standard normal distributions, with non zero means. Hence, it is a non central chi-square distribution:
\begin{equation}
    \frac{\Vert Y_\sigma \Vert^2}{\sigma^2} \sim \mathcal{X}'_d(\lambda), \text{ where } \lambda = \sum_{i=1}^{d} \frac{\mu_i^2}{\sigma^2}.
\end{equation}
Therefore, 
\begin{align*}
    \mathbb{P}(Y_\sigma \in \mathbb{B}(0,r))  = \mathrm{CDF}_{\mathcal{X}'_d(\lambda)}(\frac{r^2}{\sigma^2}).
\end{align*}
Setting $\alpha$ as the probability of one sample to be in this ball when drawing $(y_i)_{1\leq i \leq N}$ samples of $Y_\sigma$, it is defined as:
\begin{equation*}
    \mathbb{P}(\exists ~1 \leq i \leq N, y_i \in \mathbb{B}(0,r) ) = 1-(1-\mathbb{P}(Y_\sigma \in \mathbb{B}(0,r)))^N = \alpha,
\end{equation*}
which yields after a few rewriting:
\begin{equation}
    N \geq \frac{\log(1-\alpha)}{\log(1-\mathbb{P}(Y_\sigma \in \mathbb{B}(0,r)))}.
\end{equation}
\end{proof}
For instance, if $\mathbb{P}(Y_\sigma \in \mathbb{B}(0,r))$ is small and $\alpha =0.95$ we obtain that $N$ needs to be of the order
\begin{equation}
    N \gtrsim \frac{3}{\mathbb{P}(Y_\sigma \in \mathbb{B}(0,r))}.
\end{equation}
\begin{figure}
    \begin{centering}
    \includegraphics[width=0.45\textwidth]{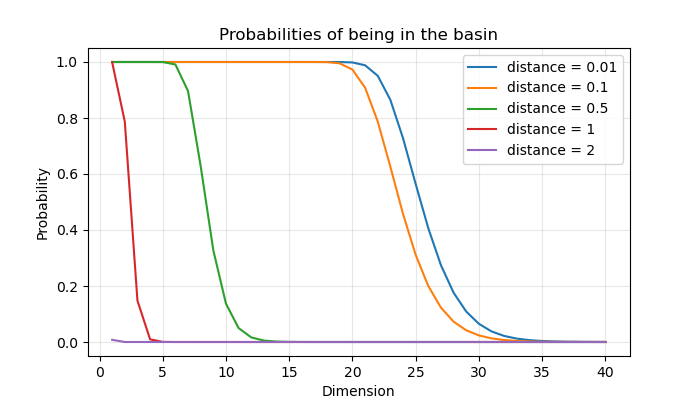} \hspace{1em} \includegraphics[width=0.45\textwidth]{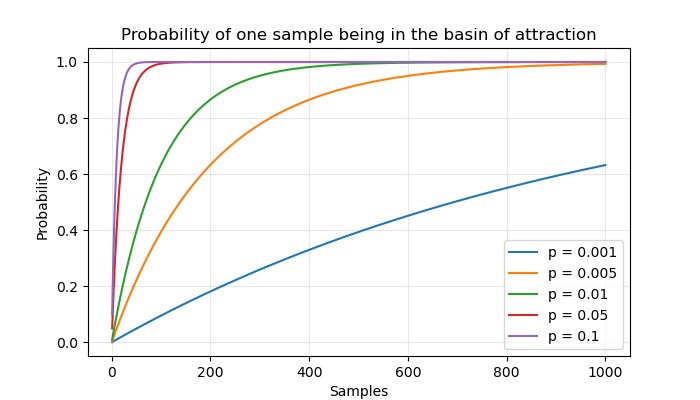}
    \end{centering}
    \caption{Probability of being in a ball in the basin of $x_\ast$ and $r^2/\sigma^2 = 1$. \label{fig:proba_closeness_xstar}}
\end{figure}

Therefore we can guarantee with high probability that we can identify basin of attractions of better local minimizers (from a local minimizer), provided that the number of samples is high enough.  
One should also be wary of the value of $\sigma$ not to spread our samples to much. In order to quantify that more precisely we can use the fact that the cumulative distribution function of the non-central chi-squared distribution can be expressed with the generalized Marcum $Q$-function \citep{sun2010monotonicity}.
\begin{lemma} Let $x\in \RR^d\setminus\{0\}$, $\sigma>0$ and $Y_\sigma \sim \mathcal{N}(x,\sigma^2 \Id)$. Let $r>0$. The probability 
        $\mathbb{P}(Y_\sigma \in \mathbb{B}(0,r))$
    is, with the other parameters fixed, strictly decreasing in the dimension $d$ and the norm $\Vert x \Vert$, and stricly increasing in the ratio $r/\sigma$. On the other hand, with other parameters fixed and $\Vert x \Vert>r$,
    \begin{equation}
        \lim_{\sigma \to 0} \mathbb{P}(Y_\sigma \in \mathbb{B}(0,r)) = 0 \text{ and } \lim_{\sigma \to +\infty} \mathbb{P}(Y_\sigma \in \mathbb{B}(0,r)) = 0,
    \end{equation}
    implying that there exists $\sigma_\ast$ such that 
    \begin{equation}
        (\forall \sigma >0), \, \mathbb{P}(Y_{\sigma_\ast} \in \mathbb{B}(0,r)) \geq \mathbb{P}(Y_\sigma \in \mathbb{B}(0,r))
    \end{equation}
\end{lemma}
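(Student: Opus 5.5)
We work with the noncentral chi-square representation from Lemma \ref{lm:proba_basin}. Write $\lambda=\Vert x\Vert^2/\sigma^2$ and $t=r^2/\sigma^2$, so that
\begin{equation*}
P(d,\Vert x\Vert,\sigma,r):=\mathbb{P}(Y_\sigma\in\mathbb{B}(0,r))=\mathrm{CDF}_{\mathcal{X}'_d(\lambda)}(t)=1-Q_{d/2}\big(\sqrt{\lambda},\sqrt{t}\big),
\end{equation*}
where $Q_\nu(a,b)$ is the generalized Marcum $Q$-function.

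The monotonicity claims follow from known properties of $Q_\nu(a,b)$ \citep{sun2010monotonicity}: it is strictly increasing in $\nu$, strictly increasing in $a$, and strictly decreasing in $b$.
\begin{itemize}
\item Increasing $d$ increases $\nu=d/2$, so $P$ strictly decreases.
\item Increasing $\Vert x\Vert$ with $\sigma$ fixed increases $a$, so $P$ strictly decreases.
\item Increasing $r/\sigma$ with $\Vert x\Vert/\sigma$ fixed increases $b$, so $P$ strictly increases.
\end{itemize}
For the third item, the phrase "other parameters fixed" means fixing $\Vert x\Vert/\sigma$. If one prefers elementary arguments:
\begin{itemize}
\item In $d$, couple $\Vert Y\Vert^2$ in dimension $d+1$ as the dimension-$d$ variable plus an independent $\chi^2_1$ that is a.s.\ positive. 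This gives stochastic domination, strict because the densities are positive.
\item In $\Vert x\Vert$, use Anderson's inequality, or rotation invariance plus the unimodality of the Gaussian density in each coordinate.
\item In $r$, monotonicity is immediate since the Gaussian density is positive everywhere.
\end{itemize}

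Next, the limits, assuming $\Vert x\Vert>r$.
\begin{itemize}
\item As $\sigma\to0$, $Y_\sigma\to x$ in probability. The closed ball $\overline{\mathbb{B}}(0,r)$ is at positive distance $\Vert x\Vert-r$ from $x$, so $P\le\mathbb{P}(\Vert Y_\sigma-x\Vert\ge\Vert x\Vert-r)\to0$, for instance by Chebyshev's inequality.
\item As $\sigma\to\infty$, bound the density by its maximum: $P\le(2\pi\sigma^2)^{-d/2}\,\mathrm{vol}(\mathbb{B}(0,r))\to0$.
\end{itemize}

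Existence of the maximizer $\sigma_\ast$ follows by continuity. The map $\sigma\mapsto P(\sigma)$ is continuous on $(0,\infty)$ by dominated convergence applied to the Gaussian density. It is strictly positive, since the density is positive on a ball of positive measure. Pick $\sigma_0$ with $P(\sigma_0)=c>0$. By the two limits, there exist $0<a<\sigma_0<b$ such that $P<c$ outside $[a,b]$. A continuous function on the compact interval $[a,b]$ attains its maximum there, and this maximum is then a global maximum over $(0,\infty)$.

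The main obstacle is the monotonicity step. The subtlety is stating precisely which parameters are held fixed, and securing strictness. Note that varying $\sigma$ alone moves both $a$ and $b$, which is why $P$ is not monotone in $\sigma$. The first approach is to cite the Marcum-$Q$ monotonicity results; the coupling and Anderson-type arguments are the fallback.
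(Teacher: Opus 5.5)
Your proof is correct and follows the paper's route. Like the paper, you write the probability as $1-Q_{d/2}(\Vert x\Vert/\sigma, r/\sigma)$ and read off the monotonicity from the generalized Marcum-$Q$ results of \citep{sun2010monotonicity}. Beyond that, you make explicit what the paper calls ``immediate'': the two limits (via Chebyshev and the density bound $(2\pi\sigma^2)^{-d/2}$) and the continuity-plus-compactness argument that actually yields $\sigma_\ast$.
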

\begin{proof}
    We have 
    \[
    \mathrm{CDF}_{\mathcal{X}'_d(\Vert x \Vert^2/\sigma^2)}(\frac{r^2}{\sigma^2}) = 1-Q_{\frac{d}{2}}\left(\frac{\Vert x \Vert}{\sigma}, \frac{r}{\sigma}\right).
    \]
    The first result follows from \citep[Theorem 1(a)]{sun2010monotonicity} as all parameters are strictly positive. The second one is immediate from the fact that $\Vert x \Vert >r$.
\end{proof}

\section{Preliminary results and discussion of the assumptions}
\subsection{Assumptions on $f$} \label{app:ass_f}
\paragraph{Finitely many minimizing components in any compact set.} As stated, $f$ is tame, meaning for any compact set $K$
\begin{enumerate}
    \item $K$ is definable in some o-minimal structure over $\RR$, and $\left.f\right|_{K} : K \to \RR$ is definable: the set of local minima is a definable subset of $\RR^d$ \citep[Lemma 1.7]{fernando2020set} and definable subset have finitely many connected components \citep[Section 3.2]{coste1999introduction}.
    \item $K$ is semi-algebraic, and $\left.f\right|_{K}$ is semi-algebraic: semi-algebraic function are definable \citep{fernando2020set}.
    \item $K$ is globally subanalytic, and $\left.f\right|_{K}$ is globally subanalytic: same as above \citep{fernando2020set} 
    \item $\left.f\right|_{K}$ is a Morse function, i.e., it is $C^2$ with only isolated nondegenerate critical points \citep{audin2014morse}.
    \item $\left.f\right|_{K}$ is a Morse-Bott function, i.e., it is $C^2$, and each connected component of minima is a critical submanifold \citep{rebjock2025fast}. This is the case of the Rastrigin function.
\end{enumerate}
\paragraph{Graph of local minimizers.} Following the basin hopping literature \citep{hajek1988cooling,wales2010energy,tomassini2022local}, and some other recent works \citep{azizian2025global}, in practice we somehow assume that local minimizers of $f$ can be represented with a graph. This assumption is however not required to prove convergence. Here, the edges depend on the local solver $T_f$ in the following sense: for any local minimizer $P\in \mathcal{M}$, the distance from $p$ to the attractor of a better local minimizer $Q$ (in the sense that $f(P)>f(Q)$) is uniformly bounded for all $P$.
\begin{assumption} \label{ass:graph}
    For all $P \in \mathcal{M}\setminus \{x_\ast\}$, there exists $Q \in \mathcal{M}$ such that 
    \begin{equation}
        \sup_{x \in P} \mathrm{dist}(x,\overline{\Att(Q)}) \leq d_f <+\infty \text{ and } f(P)> f(Q)%
    \end{equation}
\end{assumption}
Hence, we construct our graph with vertices as local minimizers (or connected components of local minimizers) and a directed edge between two vertices $P$ and $Q$ if $\mathrm{dist}(P,\overline{\Att(Q)}) \leq d_f$. The edge is undirected if also $\mathrm{dist}(Q,\overline{\Att(P)}) \leq d_f$. From this assumption, the global minimizer $x_\ast$ is reachable by "simply" jumping from one local minimizer to a better local minimizer. This assumption is implicitly made in basin hopping optimization so that random perturbations can reach the basin of a better local minimizer \citep{goodridge2022hopping}. %

We can numerically compute an upper bound of $d_f$ (Assumption \ref{ass:graph}) for the $1-D$ Rastrigin function only has isolated local minimizers, and isolated local maximizers:
\begin{equation}
    \mathrm{Ras}(x) = x^2 + 10 ( 1 - \cos (2 \pi x) ), 
\end{equation}
which are solutions of 
\[
\frac{x}{10\pi} = -\sin(2\pi x).
\]
These solutions can be computed numerically and are approximately away from each other by a little less than $0.5$, hence $d_f =\frac{1}{2}$ is a valid upper bound in this case.
\subsection{Properties of the solver} \label{app:ass_Tf} The definition of the solver in the main body of the paper contains only the relevant assumption to derive our analysis, we lay here a more detailed presentation of what these assumptions entail. Assume that $f$ is $C^2$-smooth (with Lipschitz continuous gradient). Let us define $\varphi:[0,+\infty),\RR^d\mapsto\RR^d$ the gradient flow mapping any $x\in \RR^d$ to a local minimizer of $f$:
\begin{equation}
    \left\{\begin{array}{ll}
        \varphi(0,x) & = x \\
        \varphi(t,x) & = -\nabla f( \varphi(t,x))
    \end{array} \right.
\end{equation} 
Define the critical components of $f$ as:
\begin{equation}
    \mathrm{crit }f := \left\{x \in \RR^d, 0=\nabla f(x)\right\}
\end{equation}
and we assume it is composed of infinitely many connected components.
Following this definition, we have for any component $C\in \mathrm{crit }f $ that
\begin{equation}
    \Att(C) := \left\{x \in \RR^d, \lim_{t\to+\infty} \mathrm{dist}(\varphi(t,x),C)=0\right\}.
\end{equation}
\citep[Lemma D.28]{azizian2024long} holds in our context and thus
for any $x\in \RR^d$, there exists $C \in \mathrm{crit }f$ such that 
\begin{equation}
    \lim_{t\to+\infty} d(\varphi(t,x),C)=0.
\end{equation}
The proof is immediate from \citep[Lemma D.28]{azizian2024long}, as $f$ is coercive and the components $C$ are connected. The finiteness of the number of components is not important, their connectedness is.
$\mathcal{M}$ is the set of asymptotically stable critical components \citep[Lemma D.29]{azizian2024long} (again the proof does not rely on finiteness of critical components), implying  that for all $M \in \mathcal{M}$, $\Att(M)$ is open. Under standard assumptions \citep{lee2016gradient} any $C \in \mathrm{crit }f\setminus \mathcal{M}$, $\Att(C)$ is of measure $0$.
\subsection{Some preliminary results related to the local minimizers} We start this presentation by listing some important properties of the local minimizers and their attractor sets.  
\begin{lemma} \label{lm:null_set}
    Suppose that Assumptions \ref{ass:f_properties1}, \ref{ass:Tf}, and \ref{ass:Vx_compact_argmin} hold. The sets $\left\{\Att(M)\right\}_{M \in \mathcal{M}}$ form a measurable partition of $\RR^d$ up to a null set.
\end{lemma}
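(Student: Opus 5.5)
The plan is to check three properties directly from the definitions: each $\Att(M)$ is measurable, the attractors are pairwise disjoint, and their union has full Lebesgue measure.

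\textbf{Measurability.} By Assumption~\ref{ass:f_properties1}, every $M\in\mathcal M$ is Borel. Since $T_f$ is measurable (Assumption~\ref{ass:Tf}), each attractor
\[
\Att(M)=T_f^{-1}(M)
\]
is a measurable set.

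\textbf{Disjointness.} The elements of $\mathcal M$ are pairwise disjoint, as noted after Assumption~\ref{ass:f_properties1}. To justify this, suppose $M\cap M'\neq\varnothing$ with $M\neq M'$. The value of $f$ is then common to both, because $f$ is constant on each component and the two share a point. Each of $M$ and $M'$ is the exact set of minimizers of $f$ on an open neighbourhood, and the strict inequality outside $M$ forbids a point of $M'\setminus M$ near $M$ from having the same value. Since $M\cup M'$ is connected, this forces $M=M'$.

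I expect this verification to be the most delicate step. I will handle it by comparing the two neighbourhoods $\mathcal V\supsetneq M$ and $\mathcal V'\supsetneq M'$ and using connectedness to show that $M'\subset \mathcal V$ would contradict the strict inequality. If that argument becomes messy, I will simply cite the remark in the paper that $\mathcal M$ consists of disjoint components.

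Granting disjointness, $T_f$ is single-valued, so a point $y$ cannot satisfy both $T_f(y)\in M$ and $T_f(y)\in M'$ for $M\neq M'$. Hence the attractors are pairwise disjoint.

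\textbf{Full measure and countability.} Set $N:=\{y: T_f(y)\notin\mathcal M_\ast\}$. This is $T_f^{-1}$ of the complement of $\mathcal M_\ast$, and it is null by Assumption~\ref{ass:Tf}. Every $y\notin N$ has $T_f(y)$ in exactly one $M\in\mathcal M$, which gives
\[
\RR^d = N\sqcup\bigsqcup_{M\in\mathcal M}\Att(M).
\]

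It remains to check that $\mathcal M$ is countable, so that the union is a countable union and $\mathcal M_\ast$ is Borel, which makes $N$ measurable. Write $\RR^d=\bigcup_n \overline{\mathbb B}(0,n)$. Each $M$ is nonempty, so it meets some ball, and by Assumption~\ref{ass:f_properties1} only finitely many elements of $\mathcal M$ meet each ball. Therefore $\mathcal M$ is countable.
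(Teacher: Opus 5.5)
This is essentially the paper's proof: $\Att(M)=T_f^{-1}(M)$ is measurable, the attractors are disjoint because the $M$'s are and $T_f$ is single-valued, and $\bigcup_{M}\Att(M)=T_f^{-1}(\mathcal M_\ast)=\RR^d\setminus N$ with $N$ null by Assumption~\ref{ass:Tf}; your countability argument makes explicit what the paper uses tacitly when it writes the union as $\bigcup_j \Att(M_j)$. One caution: your sketch deriving pairwise disjointness of $\mathcal M$ from the definition does not close, because $M'\cap\mathcal V\subseteq M$ only makes $M\cap M'$ relatively open in $M'$, not relatively closed ($M$ can accumulate on $\partial\mathcal V$); for instance, with $f=\mathrm{dist}(\cdot,[0,1])$ on $\RR$, both $[0,1)$ (with $\mathcal V=(-1,1)$) and $[0,1]$ (with $\mathcal V=(-1,2)$) satisfy the definition and intersect, an example ruled out only by the local-finiteness clause of Assumption~\ref{ass:f_properties1}, which your argument never uses, so take disjointness as given, exactly as the paper's proof does.
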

\begin{proof}
    First, the sets of attractors of critical points who are non local minimizers is of measure $0$. Let us denote by $N$ this set of points which are not attracted by a local minimizer through $T_f$. We have for all $y\in \RR^d\setminus N$, $T_f(y) \in \mathcal{M}_\ast$ and more precisely, there exists a unique $M \in \mathcal{M}$ such that $T_f(y) \in M$ as the $M$ are pairwise disjoints. Hence, for any $P,Q\in\mathcal{M}$, $\Att(P)\cap \Att(Q) = \emptyset$, and 
    \begin{equation}
        \bigcup_{j}\Att(M_j) = T_f^{-1}(\mathcal{M}_\ast) = \RR^d\setminus N.
    \end{equation} 
    As $T_f$ is measurable and every $M$ is Borel by assumption, we get the desired result. 
\end{proof}
\begin{lemma} \label{lm:Vx}
Let $x\in\mathbb R^d$. For every $\eta>0$, there exists a finite subset $\mathcal K_\eta\subset\mathcal M$
such that
\begin{equation}
\inf_{M\in \mathcal M\setminus \mathcal K_\eta} V_x(M)\geq
\inf_{\widetilde M\in\mathcal M} V_x(\widetilde M)+\eta.
\end{equation}
\end{lemma}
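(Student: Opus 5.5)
The plan is to exploit the coercivity of $f$ together with the local finiteness of $\mathcal M$ (Assumption \ref{ass:f_properties1}). Write $\alpha_x=\inf_{\widetilde M\in\mathcal M}V_x(\widetilde M)$, which is finite because $\mathcal M$ is nonempty (it contains $\{x_\ast\}$) and $V_x\ge f_{\min}>-\infty$. The natural candidate for the finite family is
\[
\mathcal K_\eta:=\{M\in\mathcal M:\ V_x(M)<\alpha_x+\eta\}.
\]
By construction every $M\in\mathcal M\setminus\mathcal K_\eta$ satisfies $V_x(M)\ge\alpha_x+\eta$, so the infimum over the complement is at least $\alpha_x+\eta$ (and if the complement is empty the infimum is $+\infty$). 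The inequality is therefore immediate, and the whole proof reduces to showing that $\mathcal K_\eta$ is finite.

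To prove finiteness, I will use the trivial lower bound $V_x(M)\ge f(M)$, since the distance term is nonnegative. Hence every $M\in\mathcal K_\eta$ lies inside the sublevel set
\[
L:=\{z\in\RR^d:\ f(z)\le\alpha_x+\eta\},
\]
because $f$ is constant on $M$ and equal to $f(M)<\alpha_x+\eta$.

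\begin{itemize}
\item[(i)] $L$ is closed by continuity of $f$ and bounded by coercivity, so $L$ is compact.
\item[(ii)] Each $M\in\mathcal K_\eta$ is nonempty (it is an argmin on an open set) and contained in $L$, so it intersects the compact set $L$.
\item[(iii)] By Assumption \ref{ass:f_properties1}, only finitely many elements of $\mathcal M$ intersect a given compact set. Hence $\mathcal K_\eta$ is finite.
\end{itemize}

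The main point needing care is step (ii). One must make sure that $f(M)$ is well defined and that $M\subset L$, not merely $M\cap L\neq\varnothing$. This holds because each $M$ is the argmin set of $f$ on its neighborhood $\mathcal V$, so $f$ is constant on $M$. If the finiteness assumption in Assumption \ref{ass:f_properties1} were read as \emph{contained in} rather than \emph{intersecting}, the inclusion $M\subset L$ still covers that reading.

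The argument never uses the $\mathrm{dist}$ term beyond its nonnegativity, and it does not need Assumption \ref{ass:Vx_compact_argmin}. The lemma is therefore essentially a restatement that sublevel sets of $V_x$ on $\mathcal M$ are finite.
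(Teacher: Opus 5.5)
Your proof is correct and follows the paper's argument: it uses the bound $V_x(M)\ge f(M)$, coercivity of $f$ to confine every competitive component to a compact set, and local finiteness of $\mathcal M$. The only difference is bookkeeping. The paper takes $\mathcal K_\eta$ to be the components meeting a ball $\overline{\mathbb{B}}(0,R_\eta)$ outside of which $f\ge\alpha_x+\eta$, so finiteness is immediate and the inequality is what needs checking. You take $\mathcal K_\eta$ to be the strict sublevel set $\{V_x<\alpha_x+\eta\}$, so the inequality is tautological and finiteness is what needs checking; your set is contained in the paper's.
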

\begin{proof}
Since $\mathrm{dist}(x,\overline{\Att(M)})^2\geq 0$, one has
\[
V_x(M)\geq f(M), \qquad M\in\mathcal M.
\]
Set
\[
\alpha_x:=\inf_{M\in\mathcal M}V_x(M).
\]
By coercivity of $f$, there exists $R_\eta>0$ such that
\[
\|y\|\geq R_\eta \quad\Longrightarrow\quad f(y)\geq \alpha_x+\eta.
\]
Let
\[
\mathcal K_\eta:=\{M\in\mathcal M:\; M\cap \overline{\mathbb{B}}(0,R_\eta)\neq\varnothing\}.
\]
Since $\overline{\mathbb{B}}(0,R_\eta)$ is compact and only finitely many elements of $\mathcal M$
intersect a compact set, $\mathcal K_\eta$ is finite. Let $M\in\mathcal M\setminus \mathcal K_\eta$. Then $M\cap \overline{\mathbb{B}}(0,R_\eta)=\varnothing$,
so for every $m\in M$ one has $\|m\|>R_\eta$. Hence
$f(M)=f(m)\geq \alpha_x+\eta$. Therefore $V_x(M)\geq f(M)\geq \alpha_x+\eta$.
Taking the infimum over $M\in\mathcal M\setminus \mathcal K_\eta$ gives
\[
\inf_{M\in \mathcal M\setminus \mathcal K_\eta}V_x(M)\geq \alpha_x+\eta
=
\inf_{\widetilde M\in\mathcal M}V_x(\widetilde M)+\eta.
\]
\end{proof}
Any finite subset of $\mathcal{M}$ is automatically bounded by boundedness of every elements of $\mathcal{M}$ and they have a positive value gap with $\inf V_x$.
\begin{fact} \label{fact:1}
    Let $x\in \RR^d$, $\alpha_x := \inf_{M \in \mathcal M} V_x(M)$, $\varepsilon>0$, and \[
    \mathcal{M}_x = \bigcup_{M \in S_f^\gamma(x)}M,  \text{ and } (\mathcal{M}_x)_\varepsilon = \left\{ z \in \RR^d, \, \mathrm{dist}(z,\mathcal{M}_x)<\epsilon\right\}.
    \]
    Let $R>0$. On $\mathbb{B}(0,R)$ there exists a positive gap of $V_x$ between $M\in S_f^\gamma(x)$ and every other $M$ that intersects $\mathbb{B}(0,R)$, i.e., there exists $c_{R,\varepsilon}>0$ such that for every $M$ in
    \[
    M_{x,R,\varepsilon}:=\left\{ M \in \mathcal{M}, \, M \cap \mathbb{B}(0,R) \neq \emptyset, \, M\cap\left(\left(\mathcal{M}_x\right)_\varepsilon\right)^\complement\neq \emptyset \right\},
    \]
    \[
    V_x(M) \geq \alpha_x + c_{R,\epsilon}.
    \]
\end{fact}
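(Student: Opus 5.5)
The plan is to combine Lemma \ref{lm:Vx} with a finiteness and compactness argument on the finitely many components that meet $\mathbb{B}(0,R)$. Fix $x$, $R$ and $\varepsilon$. Because $\overline{\mathbb{B}}(0,R)$ is compact, Assumption \ref{ass:f_properties1} guarantees that only finitely many components meet it. Hence $M_{x,R,\varepsilon}$ is a finite family, say $\{M_1,\dots,M_p\}$. If it is empty, the claim is vacuous and any $c_{R,\varepsilon}>0$ works. Otherwise I will set
\[
c_{R,\varepsilon}:=\min_{1\leq j\leq p}\big(V_x(M_j)-\alpha_x\big).
\]
It then suffices to show that each term is strictly positive, since a minimum of finitely many positive numbers is positive.

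So fix $M\in M_{x,R,\varepsilon}$ and suppose for contradiction that $V_x(M)=\alpha_x$. By Assumption \ref{ass:Vx_compact_argmin} the infimum $\alpha_x$ is attained, and $S_f^\gamma(x)$ is by definition the set of components attaining it. Therefore $M\in S_f^\gamma(x)$, which gives $M\subseteq \mathcal{M}_x$. Every point $z\in M$ then satisfies $\mathrm{dist}(z,\mathcal{M}_x)=0<\varepsilon$, so $M\subseteq(\mathcal{M}_x)_\varepsilon$. This contradicts the defining property $M\cap((\mathcal{M}_x)_\varepsilon)^\complement\neq\emptyset$. Hence $V_x(M)>\alpha_x$ for each of the finitely many $M_j$, and the minimum above is positive.

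There is one subtlety: the argument needs $\alpha_x$ to be finite and attained. Finiteness follows from $V_x\geq f\geq \min f>-\infty$, which uses coercivity and continuity. Attainment is exactly the nonemptiness part of Assumption \ref{ass:Vx_compact_argmin}. Lemma \ref{lm:Vx} is not strictly needed, though it confirms that components far from the origin cannot approach $\alpha_x$. The main point is therefore not analytic. It is the observation that the finiteness of the family over a compact set turns a pointwise strict inequality into a uniform gap. I expect no real obstacle beyond checking that the $\varepsilon$-neighbourhood condition excludes exactly the minimizing components.
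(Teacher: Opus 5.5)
Your proof is correct and follows the route the paper has in mind. The paper gives no separate proof of this Fact, only the remark after Lemma~\ref{lm:Vx} that finitely many components meet a bounded set and each non-minimizing one has a positive gap above $\inf V_x$, which is exactly your finiteness-plus-contradiction argument. Two cosmetic points: attainment of $\alpha_x$ is not needed, since $V_x(M)=\alpha_x$ already puts $M$ in $S_f^\gamma(x)$ by definition; and if every $V_x(M_j)=+\infty$ (empty attraction sets) your minimum is $+\infty$, so take $c_{R,\varepsilon}:=\min\{1,\min_j(V_x(M_j)-\alpha_x)\}$.
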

\begin{lemma} \label{lm:Tf_bounded} Suppose that Assumptions \ref{ass:f_properties1}, \ref{ass:f_properties3}, \ref{ass:Tf}, and \ref{ass:Vx_compact_argmin} hold. Let $x\in \RR^d$ and $\delta,\gamma>0$.
    If one of the conditions below hold
    \begin{itemize}
        \item There exist $C_1,C_2,C_3>0$ and $0<\alpha<\beta$  such that 
    \begin{equation}
        (\forall z \in \RR^d), \qquad C_1\Vert z\Vert^\alpha -C_2\leq f(z) \leq C_3(1+\Vert z \Vert)^\beta. 
    \end{equation}
    \item $\mathcal{M}$ has finitely many components.
    \end{itemize}
    Then, $T_f:\RR^d \to \mathcal{M}_\ast$ is integrable and square integrable against the Gaussian distribution, i.e.,
    \begin{equation}
        \int_{\RR^d} \Vert T_f(y) \Vert e^{\frac{-1}{2\gamma\delta} \Vert y - x \Vert^2} dy <+\infty, \text{ and } \int_{\RR^d} \Vert T_f(y) \Vert^2 e^{\frac{-1}{2\gamma\delta} \Vert y - x \Vert^2} dy <+\infty
    \end{equation}
\end{lemma}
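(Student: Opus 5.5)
We need to show $\int \|T_f(y)\|^p e^{-\|y-x\|^2/(2\gamma\delta)}dy<\infty$ for $p=1,2$. Since $\|T_f(y)\|\le 1+\|T_f(y)\|^2$, it suffices to treat $p=2$. The Gaussian weight is integrable, so the task reduces to showing that $\|T_f(y)\|^2$ grows at most polynomially in $\|y\|$, outside a null set.

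\textbf{Finite case.} If $\mathcal{M}$ has finitely many components, each is bounded by Assumption~\ref{ass:f_properties1}. Hence $\mathcal{M}_\ast$ is bounded, say $\mathcal{M}_\ast\subset \mathbb{B}(0,R)$. By Assumption~\ref{ass:Tf}, $T_f(y)\in\mathcal{M}_\ast$ for a.e.\ $y$, so $\|T_f(y)\|\le R$ a.e. The integral is then at most $R^2(2\pi\gamma\delta)^{d/2}$. Measurability of the integrand follows from measurability of $T_f$.

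\textbf{Growth case.} I would use the fact that a local solver is a descent method, so $f(T_f(y))\le f(y)$. The assumptions as stated do not contain this property. I would make it explicit as the natural property of the solvers considered in App.~\ref{app:ass_Tf}: gradient flow and gradient steps with a suitable stepsize both decrease $f$. Granting it, for a.e.\ $y$,
\[
C_1\|T_f(y)\|^\alpha - C_2 \le f(T_f(y)) \le f(y) \le C_3(1+\|y\|)^\beta .
\]
This gives the polynomial bound
\[
\|T_f(y)\| \le \left(\frac{C_2 + C_3(1+\|y\|)^\beta}{C_1}\right)^{1/\alpha}\lesssim 1+\|y\|^{\beta/\alpha}.
\]
Consequently $\|T_f(y)\|^2\lesssim 1+\|y\|^{2\beta/\alpha}$. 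Every polynomial moment of $\mathcal{N}(x,\gamma\delta\Id)$ is finite, for instance by polar coordinates or by $\|y\|^q\le 2^{q}(\|x\|^q+\|y-x\|^q)$. Both integrals are therefore finite.

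\textbf{Main obstacle.} The hard step is justifying $f(T_f(y))\le f(y)$, or an equivalent weaker control $f(T_f(y))\le C(1+\|y\|)^{\beta'}$. Without some link between $y$ and $T_f(y)$, a measurable $T_f$ could send points arbitrarily far away, and no moment bound would hold.

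If one wants to avoid the descent property, a fallback is to use the structure of $\mathcal{M}$ instead. One would use that $y\in\Att(M)$ and that basins are reached from nearby points, in the spirit of Assumption~\ref{ass:Vx_compact_argmin}. This would bound $\|T_f(y)\|$ by the radius needed to contain minimizers $M$ with $f(M)\le f(y)$, combined with the lower growth bound. But this route still rests on a sublevel-type control.

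I would therefore state the descent property as the standing interpretation of "local solver". The rest of the argument is routine.
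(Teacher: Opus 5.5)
Your proof is correct and matches the paper's argument: boundedness of $\mathcal{M}_\ast$ in the finite case, and in the growth case the chain $C_1\Vert T_f(y)\Vert^\alpha - C_2 \le f(T_f(y)) \le f(y) \le C_3(1+\Vert y\Vert)^\beta$, which gives a polynomial bound integrable against the Gaussian. The paper also uses the descent property $f(T_f(z))\le f(z)$ without it being among the stated assumptions, so you are right to make it explicit; without it, a measurable $T_f$ could send basins to arbitrarily distant minimizers and the growth case would fail.
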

\begin{proof}
    Let us consider the second case first. Denote by $J$ the number of components of $\mathcal{M}$. Then $\mathcal{M}_\ast = \bigcup_{j=1}^J \mathcal{M}_j$. By assumption any $M \in \mathcal{M}$ is bounded. Thus there exists some $R>0$ such that $\mathcal{M}_\ast \subset \mathbb{B}(0,R)$. Hence,
    \begin{align*}
        \int_{\RR^d} \Vert T_f(y) \Vert e^{-\frac{\Vert y-x \Vert^2}{2\delta \gamma}}dy \leq R \int_{\RR^d} e^{-\frac{\Vert y-x \Vert^2}{2\delta \gamma}}dy <+\infty.
    \end{align*}
    Now to the first case. We have for all $z \in \RR^d$, $f(z) \geq f(T_f(z))$, therefore
    \[
     C_1 \Vert T_f(z) \Vert^\alpha -C_2 \leq f(T_f(z)) \leq f(z) \leq C_3(1+\Vert z \Vert)^\beta
    \]
    and,
    \[
     \Vert T_f(z) \Vert \leq C_1^{-1/\alpha}\left(C_2 + C_3(1 + \Vert z \Vert^\beta)\right)^{1/\alpha}.
    \]
    Hence,
    \[
    \int_{\RR^d} \Vert T_f(y) \Vert e^{-\frac{\Vert y-x \Vert^2}{2\delta \gamma}}dy \leq \int_{\RR^d} C_1^{-1/\alpha}\left(C_2 + C_3(1 + \Vert y \Vert^\beta)\right)^{1/\alpha} e^{-\frac{\Vert y-x \Vert^2}{2\delta \gamma}}dy < +\infty.
    \]
    The square case follows immediately from the same arguments.
\end{proof}
Assumption \ref{ass:growth} and the first condition in Lemma \ref{lm:Tf_bounded} imply that $f$ has polynomially bounded growth. This is slightly stronger than coercivity and is akin to the notion of radially unbounded function in control theory\footnote{In this setting, $\psi$ is also required to be positive \citep[Section 2]{kellett2014compendium}.} \citep{kellett2014compendium}.
\begin{lemma} \label{lm:far_tail1}Suppose that Assumptions \ref{ass:f_properties1}, \ref{ass:Tf}, \ref{ass:Vx_compact_argmin}, and \ref{ass:growth} hold. Let $x\in \RR^d$, $Y_\delta \sim \mathcal{N}(x,\gamma \delta \Id)$, $q>0$ and $r>0$. We have:
    \begin{equation}
        \limsup_{\delta \downarrow 0} \delta \log \mathbb{E}\left[(1 + \Vert Y_\delta \Vert^q)\mathbf{1}_{\{ \Vert Y_\delta \Vert > r \}} \right] \leq -\frac{(r- \Vert x \Vert^2)_+}{2\gamma}.
    \end{equation}
\end{lemma}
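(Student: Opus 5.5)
The plan is a direct Gaussian tail computation: write $Y_\delta = x + \sqrt{\delta\gamma}\,Z$ with $Z\sim\mathcal N(0,\Id)$ and reduce everything to an explicit radial integral for $\Vert Z\Vert$. Before starting, note that the natural large-deviation rate for the event $\{\Vert Y_\delta\Vert>r\}$ is $\mathrm{dist}(x,\{\Vert y\Vert> r\})^2/(2\gamma) = (r-\Vert x\Vert)_+^2/(2\gamma)$, consistent with Lemma \ref{lm:delta_Fm}. I therefore read the right-hand side as $-(r-\Vert x\Vert)_+^2/(2\gamma)$ and will prove it in that form. The literal expression $(r-\Vert x\Vert^2)_+$ cannot hold in general: for $x=0$ and $r<1$ it would claim a rate $r/(2\gamma)$, faster than the true Gaussian rate $r^2/(2\gamma)$.

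\emph{Case $r\le\Vert x\Vert$.} The bound to prove is $\limsup\leq 0$. For $\delta\le 1$ we have
\[
\mathbb E[(1+\Vert Y_\delta\Vert^q)\mathbf 1_{\{\Vert Y_\delta\Vert>r\}}]\le 1+\mathbb E\big[(\Vert x\Vert+\sqrt{\gamma}\Vert Z\Vert)^q\big]=:K<+\infty,
\]
which is independent of $\delta$. Hence $\delta\log(\cdot)\le \delta\log K\to 0$, and the claim follows.

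\emph{Case $r>\Vert x\Vert$.} Set $s:=r-\Vert x\Vert>0$. By the triangle inequality, $\{\Vert Y_\delta\Vert>r\}\subset\{\sqrt{\delta\gamma}\Vert Z\Vert> s\}$. Using $(a+b)^q\le 2^{q}(a^q+b^q)$ and $\delta\le1$, we get $1+\Vert Y_\delta\Vert^q\le C_q(1+\Vert x\Vert^q+\gamma^{q/2}\Vert Z\Vert^q)$, so that
\[
\mathbb E[(1+\Vert Y_\delta\Vert^q)\mathbf 1_{\{\Vert Y_\delta\Vert>r\}}]\le C\,\mathbb E\big[(1+\Vert Z\Vert^q)\mathbf 1_{\{\Vert Z\Vert>t_\delta\}}\big],\qquad t_\delta:=\frac{s}{\sqrt{\delta\gamma}}\to+\infty .
\]
Passing to polar coordinates, the right-hand side equals
\[
c_d\int_{t_\delta}^\infty(1+\rho^q)\rho^{d-1}e^{-\rho^2/2}\,d\rho .
\]

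The main (though routine) obstacle is to control this polynomially weighted Gaussian tail sharply enough that only the exponential factor $e^{-t_\delta^2/2}$ survives after taking $\delta\log$. I would first try the substitution $\rho=t_\delta+u$ together with $e^{-\rho^2/2}\le e^{-t_\delta^2/2}e^{-t_\delta u}e^{-u^2/2}$. For $t_\delta\ge1$ this gives the bound $P(t_\delta)e^{-t_\delta^2/2}$, with $P$ a polynomial of degree $q+d-1$. Should that be awkward, a fallback is the cruder estimate $\rho^{m}e^{-\rho^2/2}\le C_\epsilon e^{-(1-\epsilon)\rho^2/2}$, which yields the rate $(1-\epsilon)t_\delta^2/2$; one then lets $\epsilon\downarrow0$ at the end.

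\emph{Conclusion.} Taking logarithms,
\[
\delta\log\mathbb E[\cdots]\le \delta\log C+\delta\log P\!\left(\frac{s}{\sqrt{\delta\gamma}}\right)-\frac{s^2}{2\gamma}.
\]
Since $\delta\log P(s/\sqrt{\delta\gamma})=O(\delta\log(1/\delta))\to0$, the $\limsup$ is at most $-s^2/(2\gamma)=-(r-\Vert x\Vert)_+^2/(2\gamma)$. Combined with the first case, this proves the bound for all $x$ and $r>0$.
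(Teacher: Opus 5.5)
Your proof is correct, and so is your reading of the right-hand side as $-(r-\|x\|)_+^2/(2\gamma)$. That is exactly what the paper's own proof concludes in its final display, so the $(r-\|x\|^2)_+$ in the statement is a typo; the same typo carries over into Proposition~\ref{prop:gaussian_tail_poly}. Your counterexample with $x=0$ and $r<1$ also shows that the literal version is false, via the Gaussian large-deviation lower bound $-r^2/(2\gamma)$.

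Your route differs from the paper's in how the polynomial weight is controlled.

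\emph{The paper's route.} It stays in the $y$ variable and splits $\{\|y\|>r\}$ into an annulus $\{r<\|y\|\le R\}$ and a far region $\{\|y\|>R\}$.
\begin{itemize}
\item On the annulus it bounds the weight by $1+R^q$ and the density by its supremum. This gives $(1+R^q)\lambda_d(\mathbb{B}(0,R))(2\pi\delta\gamma)^{-d/2}e^{-(r-\|x\|)_+^2/(2\gamma\delta)}$.
\item On the far region it splits the Gaussian exponent into two halves. One half absorbs the weight into an integral bounded uniformly in $\delta$; the other gives the rate $(R-\|x\|)^2/(4\gamma)$.
\item It then takes the maximum of the two $\limsup$'s and chooses $R$ large enough that the far region is negligible.
\end{itemize}

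\emph{Your route.} You transfer everything to $Z$ via $\|Y_\delta\|\le\|x\|+\sqrt{\delta\gamma}\|Z\|$. This turns the event into $\{\|Z\|>t_\delta\}$ and the weight into a polynomial in $\|Z\|$. You then bound the resulting radial integral explicitly with the shift $\rho=t_\delta+u$.

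\emph{What each buys.} Your version yields a single non-asymptotic estimate of the form $C\,P(t_\delta)\,e^{-(r-\|x\|)^2/(2\gamma\delta)}$. It needs no auxiliary radius and no comparison of two exponential rates. The paper's version needs no explicit computation at all (no polar coordinates), only a volume bound and integrability of a Gaussian moment. Its cost is a harmless $\delta^{-d/2}$ prefactor and the two-region bookkeeping.

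Both arguments rest on the same elementary inequality $\|y-x\|\ge\|y\|-\|x\|$. Neither uses any of the assumptions on $f$ or $T_f$ listed in the statement, since the lemma is purely a Gaussian tail estimate.
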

\begin{proof}
We have
\[
\mathbb{E}\!\left[(1+\|Y_\delta\|^q)\mathbf 1_{\{\|Y_\delta\|>r\}}\right] = \frac{1}{(2\pi\delta\gamma)^{d/2}} \int_{\{\|y\|>r\}} (1+\|y\|^q)\exp\!\left(-\frac{\|y-x\|^2}{2\gamma\delta}\right)\,dy.
\]
Fix $R>r$. Splitting the domain gives $\{\|y\|>r\}=\{r<\|y\|\leq R\}\cup \{\|y\|>R\}$, hence we can split the expectation in
\[
I_\delta(r):=
\mathbb{E}\!\left[(1+\|Y_\delta\|^q)\mathbf 1_{\{\|Y_\delta\|>r\}}\right]
=I_{\delta,1}(r,R)+I_{\delta,2}(R),
\]
where
\[
I_{\delta,1}(r,R):=
\frac{1}{(2\pi\delta\gamma)^{d/2}}
\int_{\{r<\|y\|\leq R\}}
(1+\|y\|^q)e^{-\frac{\|y-x\|^2}{2\gamma\delta}}\,dy
\]
and
\[
I_{\delta,2}(R):=
\frac{1}{(2\pi\delta\gamma)^{d/2}}
\int_{\{\|y\|>R\}}
(1+\|y\|^q)e^{-\frac{\|y-x\|^2}{2\gamma\delta}}\,dy.
\]

On the bounded region $\{r<\|y\|\leq R\}$, we have $1+\|y\|^q\leq 1+R^q$. Moreover, if $\|y\|>r$, then $
\|y-x\|\geq \|y\|-\|x\|\geq (r-\|x\|)_+.
$
Therefore
\[
I_{\delta,1}(r,R)
\leq \frac{(1+R^q)\,\lambda_d(\mathbb \mathbb{B}(0,R))}{(2\pi\delta\gamma)^{d/2}} \exp\!\left(-\frac{(r-\|x\|)_+^2}{2\gamma\delta}\right).
\]
Since $\delta\log(\delta^{-d/2})\to 0$, it follows that
\[
\limsup_{\delta\downarrow 0}\delta\log I_{\delta,1}(r,R)
\leq -\frac{(r-\|x\|)_+^2}{2\gamma}.
\]

Now onto $I_{\delta,2}(R)$. Assume $R>\|x\|$. For $\|y\|>R$, $\|y-x\|\geq R-\|x\|,$
hence $
e^{-\frac{\|y-x\|^2}{2\gamma\delta}}
\leq e^{-\frac{(R-\|x\|)^2}{4\gamma\delta}} e^{-\frac{\|y-x\|^2}{4\gamma\delta}}.
$
Thus
\[
I_{\delta,2}(R) \leq e^{-\frac{(R-\|x\|)^2}{4\gamma\delta}} \frac{1}{(2\pi\delta\gamma)^{d/2}} \int_{\mathbb R^d}(1+\|y\|^q)e^{-\frac{\|y-x\|^2}{4\gamma\delta}}\,dy.
\]
Using $\|y\|^q\leq \max\{1,2^{q-1}\}(1+\|y-x\|^q+\|x\|^q)$, we obtain $1+\|y\|^q\leq C(1+\|y-x\|^q)$
for some constant $C>0$. Hence
\[
I_{\delta,2}(R)
\leq C e^{-\frac{(R-\|x\|)^2}{4\gamma\delta}}
\frac{1}{(2\pi\delta\gamma)^{d/2}}
\int_{\mathbb R^d} (1+\|y-x\|^q)e^{-\frac{\|y-x\|^2}{4\gamma\delta}}\,dy.
\]
Making the change of variables $z=(y-x)/\sqrt{\delta}$, we obtain that the righthandside integral is in $O(\delta^{d/2}) + O(\delta^{(d+q)/2})$ hence bounded uniformly for $\delta\in(0,1]$. Therefore there exists $C_R>0$ such that
\[
I_{\delta,2}(R)\leq C_R e^{-\frac{(R-\|x\|)^2}{4\gamma\delta}},
\]
and thus
\[
\limsup_{\delta\downarrow 0}\delta\log I_{\delta,2}(R)
\le
-\frac{(R-\|x\|)^2}{4\gamma}.
\]
Finally,
\[
\limsup_{\delta\downarrow 0}\delta\log I_\delta(r)
\le
\max\!\left\{
\limsup_{\delta\downarrow 0}\delta\log I_{\delta,1}(r,R),\,
\limsup_{\delta\downarrow 0}\delta\log I_{\delta,2}(R)
\right\}.
\]
Hence
\[
\limsup_{\delta\downarrow 0}\delta\log I_\delta(r)
\le
\max\!\left\{
-\frac{(r-\|x\|)_+^2}{2\gamma},\,
-\frac{(R-\|x\|)^2}{4\gamma}
\right\}.
\]
Choosing $R>r$ large enough so that
\[
\frac{(R-\|x\|)^2}{4\gamma}>
\frac{(r-\|x\|)_+^2}{2\gamma},
\]
we conclude that
\[
\limsup_{\delta\downarrow 0}\delta\log \mathbb{E}\!\left[(1+\|Y_\delta\|^q)\mathbf 1_{\{\|Y_\delta\|>r\}}\right]
\le
-\frac{(r-\|x\|)_+^2}{2\gamma}.
\qedhere
\]
\end{proof}
\begin{remark} \label{remark:2}
    Evolution of the set $\mathcal{M}_x$ with respect to $\gamma$. There exists approximately three regimes. When $\gamma$ is small, $V_x$ heavily penalizes distant local minimizers hence its $\argmin$ is more likely to contain one unique solution. Slowly increasing $\gamma$ makes it more likely that several competitors $M$ can arise. Then as $\gamma$ increases more, $V_x$ is more likely to select a unique better local minimizer.
\end{remark}
The first regime is characterized in the next Lemma.
\begin{lemma}
    Suppose that Assumptions \ref{ass:f_properties1}, \ref{ass:f_properties3}, \ref{ass:Tf}, \ref{ass:graph}, and \ref{ass:Vx_compact_argmin} hold. Let $x\in \RR^d$, $\gamma>0$, and $P\in\mathcal{M}$ the closest (w.r.t. to $\Att(P)$) minimizing component. Denote by $Q$ the closest (w.r.t. $\Att(Q)$) better local minimizing component of $P$ in the sense of Assumption \ref{ass:graph}. If:
    \begin{equation}
        \gamma < \frac{\mathrm{dist}(x,\overline{\Att(Q)})^2-\mathrm{dist}(x,\overline{\Att(P)})^2}{2(f(P)-f(Q))}
    \end{equation}
    then,
    \begin{equation}
        P = \argmin_{M \in \mathcal{M}} V_x(M).
    \end{equation} 
\end{lemma}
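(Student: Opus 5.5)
The plan is a direct comparison of $V_x(P)$ with $V_x(M)$ for every $M\in\mathcal M\setminus\{P\}$. Write $d_M:=\mathrm{dist}(x,\overline{\Att(M)})$. By Assumption \ref{ass:Vx_compact_argmin} this equals $\mathrm{dist}(x,\Att(M))$, so "closest w.r.t. $\Att(\cdot)$" and $d_M$ agree. The hypotheses then read as follows: $d_P\le d_M$ for all $M$; $d_Q\le d_M$ for every $M$ with $f(M)<f(P)$; and $\gamma<\frac{d_Q^2-d_P^2}{2(f(P)-f(Q))}$. Since $f(Q)<f(P)$ and $\gamma>0$, this last condition forces $d_Q>d_P$. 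The identity driving everything is
\[
V_x(M)-V_x(P)=\bigl(f(M)-f(P)\bigr)+\frac{d_M^2-d_P^2}{2\gamma}.
\]
Existence of $Q$ comes from Assumption \ref{ass:graph}, which guarantees a better component whenever $P\neq\{x_\ast\}$. The infimum defining $Q$ is attained by Lemma \ref{lm:Vx}: only finitely many components have $f$ below $f(P)$ inside a large ball. If $P=\{x_\ast\}$, the claim reduces to the first case below.

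\textbf{Case $f(M)\ge f(P)$.} Both terms are nonnegative, so $V_x(M)\ge V_x(P)$. Equality holds only when $f(M)=f(P)$ and $d_M=d_P$. I will read "the closest component" as uniquely closest, so $d_M>d_P$ for $M\neq P$, which gives strict inequality and hence the singleton argmin.

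\textbf{Case $f(Q)\le f(M)<f(P)$.} Here $d_M\ge d_Q$, so
\[
V_x(M)-V_x(P)\ge \frac{d_Q^2-d_P^2}{2\gamma}-\bigl(f(P)-f(M)\bigr)>\bigl(f(P)-f(Q)\bigr)-\bigl(f(P)-f(M)\bigr)=f(M)-f(Q)\ge0 .
\]
This uses the hypothesis on $\gamma$ exactly as stated.

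\textbf{Case $f(M)<f(Q)$ (main obstacle).} The same chain only yields $V_x(M)-V_x(P)>f(M)-f(Q)+\frac{d_M^2-d_Q^2}{2\gamma}$. To conclude I need
\[
d_M^2-d_Q^2\ \ge\ 2\gamma\bigl(f(Q)-f(M)\bigr),
\]
equivalently $\frac{d_M^2-d_P^2}{f(P)-f(M)}\ge\frac{d_Q^2-d_P^2}{f(P)-f(Q)}$.

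My first attempt is to iterate along the chain given by Assumption \ref{ass:graph}. Let $Q'$ be the closest component better than $Q$, and so on down to $x_\ast$, which is reached in finitely many steps by Assumption \ref{ass:f_properties3} and Lemma \ref{lm:Vx}. Along this chain I would exploit that distances are nondecreasing ($d_{Q'}\ge d_Q$, since $Q'$ is also better than $P$) while $f$ is bounded below by $f_{\min}$. The aim is to show that the gap $d_M^2-d_Q^2$ dominates the energy gain $f(Q)-f(M)\le f(Q)-f_{\min}$ at the given $\gamma$.

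I expect this step to be exactly where the argument can fail. Nothing in the stated hypotheses bounds $f(Q)-f(M)$ against $d_M^2-d_Q^2$. If this ratio comparison cannot be derived, the proof must record that the "closest better component" $Q$ has to be understood as the one for which the stated threshold is the binding one among better components. The proof would state this explicitly as the meaning of the hypothesis, rather than silently strengthening it.
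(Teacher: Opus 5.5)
Your Cases 1 and 2 are sound, and your suspicion about the case $f(M)<f(Q)$ is justified. That step cannot be completed, because the lemma as literally stated is false.

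Take $d=1$ and $x=0$, and a coercive $f$ with exactly three strict local minimizers $x_\ast=-2$, $P=\{0\}$, $Q=\{2\}$, and local maxima at $-1.1$ and $1$. The gradient-flow basins are then $\Att(x_\ast)=(-\infty,-1.1)$, $\Att(P)=(-1.1,1)$ and $\Att(Q)=(1,+\infty)$. Set the values $f(x_\ast)=0$, $f(P)=10$, $f(Q)=9$. Then:
\begin{itemize}
\item $P$ is the unique closest component, with $d_P=0$.
\item $Q$ is the closest better component: $d_Q=1<1.1=d_{x_\ast}$, measured from $x$ or from $P$ alike.
\item All listed assumptions hold; Assumption~\ref{ass:graph} forces $d_f\geq \mathrm{dist}(2,\overline{\Att(x_\ast)})=3.1$.
\item The hypothesis reads $\gamma<1/2$.
\end{itemize}
Yet for $\gamma=0.4$,
\[
V_x(x_\ast)=\frac{1.21}{0.8}\approx 1.51<10=V_x(P)<10.25=V_x(Q),
\]
so $P$ is not the minimizer. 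Any $\gamma\in(0.0605,0.5)$ gives the same failure.

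The paper's proof does not get past this case either. It asserts that every $M\neq Q$ with $\mathrm{dist}(x,\overline{\Att(M)})\leq d_f$ has $f(M)>f(Q)$, and that better components near $P$ share the value $f(Q)$. It then only checks $V_x(P)<V_x(Q)$, which is exactly your Case 2 with $M=Q$. But Assumption~\ref{ass:graph} only supplies \emph{some} better component within $d_f$ of $P$. It does not make $Q$ the best one nearby, and it says nothing about components beyond $d_f$. In the example, $x_\ast$ lies at distance $1.1\leq d_f$ from $x$ with $f(x_\ast)<f(Q)$, contradicting the paper's first assertion.

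The repair is the reading you propose, and with it you should simply drop the chain argument. The example shows that nothing along $Q,Q',\dots$ can bound $f(Q)-f(M)$ by $d_M^2-d_Q^2$.

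Components with $f(M)<f(P)$ lie in the bounded sublevel set $\{f\leq f(P)\}$, so by Assumption~\ref{ass:f_properties1} there are finitely many of them. This, rather than Lemma~\ref{lm:Vx}, is why minima over better components are attained. Replace the hypothesis by
\[
\gamma<\min_{M\in\mathcal M,\ f(M)<f(P)}\frac{d_M^2-d_P^2}{2\bigl(f(P)-f(M)\bigr)},
\]
which is positive exactly when $P$ is strictly closer than every better component. Then for each such $M$,
\[
V_x(M)-V_x(P)=\bigl(f(P)-f(M)\bigr)\Bigl(\frac{d_M^2-d_P^2}{2\gamma\bigl(f(P)-f(M)\bigr)}-1\Bigr)>0.
\]
So your Cases 2 and 3 collapse into one line. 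Your Case 1 then gives the singleton argmin, provided $P$ is the unique closest component, which the paper also leaves implicit.
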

\begin{proof}
    We have that for all $M\in \mathcal{M}\setminus \{Q\}$ such that $\mathrm{dist}(x,\overline{\Att(M)})\leq d_f$ then $f(M)>f(Q)$. Furthermore, by Assumption \ref{ass:graph}, any better local minimizer in the vicinity of $P$ has function value equal to that of $Q$ in a radius $d_f$, hence we just need to make sure that $Q$ does not belong in the $\argmin$, giving immediately the positive upper bound on $\gamma$.
\end{proof}
These scenarios where we have a series of equally distant and of equal function values are not possible indefinitely, first because there is a unique global minimizer, and second because the function $f$ is coercive so it needs to grow to infinity, thus contradicting the existence of these increasingly distant local minimizers of decreasing function values.

\section{Proofs of Section 3: Proposed method} \label{app:proofs_sec3}

\subsection{Proofs for ideal basin hopping} \label{app:proofs_ideal_pbh}
\paragraph{Proof of Lemma \ref{lm:Tf_prox}}
\begin{proof}
    We have for all $x\in \RR^d$ 
\begin{align*}
    \prox_{\gamma f \circ T_f}(x) & = \argmin_{y \in \RR^d} \frac{1}{2\gamma} \Vert y-x \Vert^2 + f (T(y)),
\end{align*}
and
\[
S_f^\gamma(x) := \argmin_{M \in \mathcal{M}} \frac{1}{2\gamma} \mathrm{dist}(x,\overline{\Att(M)})^2 + f(M).
\]
Looking more closely at the first optimization problem, we have
\begin{align*}
    \min_{y \in \RR^d} \frac{1}{2\gamma} \Vert y-x \Vert^2 + f (T(y)) = \min_{M \in \mathcal{M}} \frac{1}{2\gamma} \mathrm{dist}(x,\overline{\Att(M)})^2 + f(M).
\end{align*}
Hence, if for every
$
M\in S_f^\gamma(x), \, \mathrm{proj}_{\overline{\Att(M)}}(x)\cap \Att(M)\neq\varnothing,
$
\begin{equation*}
    \prox_{\gamma f \circ T_f}(x) = \bigcup_{ M \in \argmin_{M \in \mathcal{M}} \frac{1}{2\gamma} \mathrm{dist}(x,\overline{\Att(M)})^2 + f(M) } \mathrm{proj}_{\overline{\Att(M)}}(x),
\end{equation*}
where $\mathrm{proj}$ denotes the orthogonal projection. Thus, as $T_f$ maps to $\mathcal{M}$, we recover
\begin{equation*}
     T_f(\prox_{\gamma f \circ T_f}(x)) = \argmin_{M \in \mathcal{M}} \frac{1}{2\gamma} \mathrm{dist}(x,\overline{\Att(M)})^2 + f(M). \qedhere
\end{equation*}
\end{proof}
\paragraph{Proof of Lemma \ref{lm:gamma_ast}}
\begin{proof}
    Let $x\in \RR^d$. We have that for all $\gamma>0$.
    \begin{equation*}
        S_f^\gamma(x) = \argmin_{M \in \mathcal{M}} \frac{1}{2\gamma} \mathrm{dist}(x,\overline{\Att(M)})^2 + f(M).
    \end{equation*}
    Hence, 
    \begin{align*}
        x_\ast \in S_f^\gamma(x)& \Leftrightarrow  \forall M \in \mathcal{M}, \,  \frac{1}{2\gamma} \mathrm{dist}(x,\overline{\Att(x_\ast)})^2 + f(x_\ast) \leq \frac{1}{2\gamma} \mathrm{dist}(x,\overline{\Att(M)})^2 + f(M)\\
        & \Leftrightarrow  \forall M \in \mathcal{M}, \,  \frac{1}{2\mu}\left[ \mathrm{dist}(x,\overline{\Att(x_\ast)})^2 - \mathrm{dist}(x,\overline{\Att(M)})^2 \right] \leq \gamma,
    \end{align*}
    where the last inequality was obtained using Assumption \ref{ass:f_properties3}.
    The minimal required value of $\gamma$ depends on the close and achievable minimizers. Indeed, all minimizing components such that $\mathrm{dist}(x,\overline{\Att(x_\ast)})^2 - \mathrm{dist}(x,\overline{\Att(M)})^2\leq 0$ yields no constraint on $\gamma$.
\end{proof}
\paragraph{Proof of Theorem \ref{th:ideal_pbh}.}
\begin{proof}
    First iterations are well defined as $S_f^\gamma$ is non-empty and bounded (Assumption \ref{ass:Vx_compact_argmin}, Lemma \ref{lm:Tf_prox}). Second, if for any $K\in \mathbb{N}$, $x_K = x_\ast$ then $x_{k} = x_\ast$ for all $k>K$. Finally, by construction $f(x_{k+1})\leq f(x_k)$, hence by coercivity of $f$ (Assumption \ref{ass:f_properties1}), the iterates are bounded. Denote by $B_0$ this bounded set. %
    We can guarantee that there exists only finitely many failed iterations (where $f(x_+)=f(x_{k})$).
    
    Indeed, by boundedness of the iterates, there exists $d_f\geq 0$ such that for all $k \in \mathbb{N}$,
    \[
    \mathrm{dist}(x_k,\overline{\Att(x_\ast)}) \leq d_f. 
    \]
    Let $P \in \mathcal{M}$, such that $P \cap B_0 \neq \emptyset$ and $P \neq \{x_\ast\}$.
    We have for any $x \in P$, $V_x(x_\ast) < V_x(P)$ if
    \[
    \frac{\mathrm{dist}(x,\overline{\Att(x_\ast)})^2}{2\gamma} + f_{\min} < f(P)
    \]
    which happens if $
    \gamma > \frac{d_f^2}{2(f(P)-f_{\min})}.
    $
    Thus, taking the infimum over all $P$ not equal to $\{x_\ast\}$, we have $\gamma_s \geq \frac{d_f^2}{2\mu}$ implies that for all $x \in B_0$, $x_\ast \in S_f^{\gamma_s}(x)$.
    Let $K\in \mathbb{N}$. For any $\gamma>0$,
    \begin{align*}
        & \gamma \eta^K \leq \gamma_s \\
        \Leftrightarrow & K \log(\eta) \leq \log\left(\frac{\gamma_s}{\gamma}\right) \\
        \Leftrightarrow & K \leq \frac{\log\left(\gamma_s/\gamma\right)}{\log(\eta)}
    \end{align*} 
    Therefore, after at most
    $\left\lceil \frac{\log\left(\gamma_s/\gamma\right)}{\log(\eta)}\right\rceil_+
    $ 
    failed iterations, $x_\ast$ belongs to $S_f^\gamma$. 

    Now, by boundedness of the iterates, only finitely many $M \in \mathcal{M}$ can be visited. Hence, we conclude that convergence occurs in a finite number of iterations.
\end{proof}
\subsection{Proofs for exact expectation basin hopping}
\label{app:proofs_exact_expect_pbh}

\paragraph{Proof of Lemma \ref{lm:delta_Fm}.}
\begin{proof}
By the small-noise Gaussian large deviation principle with speed $\delta^{-1}$, and rate function $I_x(y)=\frac{1}{2\gamma}\|y-x\|^2$, for every Borel set $A\subset \mathbb{R}^d$ \citep{denhollander2000large}, \footnote{The small-noise Gaussian rate function can be identified directly from the explicit normal example in \citep[pp.4--5]{kosygina_mountford_ldp_notes} with  $X_n\sim\mathcal N(0,1/n)$ , which yields the rate function $I(x)=x^2/2$, and cf. \citep[Exercise III.9, p.30]{denhollander2000large}.}
\begin{equation*}
-\inf_{y\in A^\circ} I_x(y)
\leq
\liminf_{\delta\downarrow 0}\delta \log \mathbb{P}(Y_\delta\in A)
\leq
\limsup_{\delta\downarrow 0}\delta \log \mathbb{P}(Y_\delta\in A)
\leq
-\inf_{y\in \overline{A}} I_x(y).
\end{equation*}
Applying this with $A=\Att(M)$ yields
\begin{align*}
-\inf_{y\in \Att(M)^\circ}\frac{1}{2\gamma}\|y-x\|^2
& \leq
\liminf_{\delta\downarrow 0}\delta \log \mathbb{P}(Y_\delta\in \Att(M)) \\
& \leq
\limsup_{\delta\downarrow 0}\delta \log \mathbb{P}(Y_\delta\in \Att(M))
\leq
-\inf_{y\in \overline{\Att(M)}}\frac{1}{2\gamma}\|y-x\|^2.
\end{align*}
Which is equivalent to
\begin{align*}
-\frac{1}{2\gamma}\mathrm{dist}(x,\Att(M)^\circ)^2
& \leq
\liminf_{\delta\downarrow 0}\delta \log \mathbb{P}(Y_\delta\in \Att(M))
\\ & \leq
\limsup_{\delta\downarrow 0}\delta \log \mathbb{P}(Y_\delta\in \Att(M))
\leq
-\frac{1}{2\gamma}\mathrm{dist}(x,\overline{\Att(M)})^2.
\end{align*}
By the regularity assumption the lower and upper bounds coincide, and hence
\begin{equation*}
\lim_{\delta\downarrow 0}\delta \log \mathbb{P}(Y_\delta\in \Att(M))
=
-\frac{1}{2\gamma}\mathrm{dist}(x,\overline{\Att(M)})^2.
\end{equation*}
This proves the first claim. The equivalent exponential form follows immediately: if a scalar quantity $a_\delta$ satisfies $\delta \log a_\delta \to -c$,
then $ a_\delta = \exp \left(-\frac{1}{\delta}[c+o(1)]\right)$ (see \citep[Definition 401]{shalizi2006largedeviations}).
\end{proof} 

\paragraph{First case: finite number of minimizers.} If there are finitely many sets of local minimizers, then we can obtain an explicit concentration result inside the convex hull defined by the solutions in $\mathcal{M}_x$ (Fact \ref{fact:1}), when $\delta$ goes to $0$\footnote{This result is similar in spirit to \citep[Corollary 1]{zhang2024inexact} even though proofs are completely different}.
\begin{proposition} \label{prop:concentration_finite} Let $x\in\mathbb{R}^d$, $\gamma>0$, $\delta>0$, and let $
Y_\delta \sim \mathcal{N}(x,\delta\gamma I_d)$.
    Suppose that $\mathcal{M}$ has finitely many components, i.e.,
    \begin{equation}
        \mathcal{M}_\ast = \bigcup_{1\leq j \leq J} M_j
    \end{equation}
    for some $J \in \mathbb N$. 
    Denote by $I_x$ the set of indices of the minimizers of $\left\{f_j + \frac{1}{2\gamma}\mathrm{dist}(x,\Att(M_j))^2\right\}_{1\leq j \leq J}$, i.e., $I_x \subseteq \left\{1,\ldots,J\right\}$. Then, 
    \begin{equation*}
        \frac{\mathbb{E}[T_f(Y_\delta)e^{-f(T_f(Y_\delta))/\delta}]}{\mathbb{E}[e^{-f(T_f(Y_\delta))/\delta}]} \xrightarrow[\delta\downarrow 0]{} \frac{\sum_{i \in I_x} e^{-f_i/\delta} \mathbb{P}(Y_\delta \in \Att(M_i)) \mathbb{E}[T_f(Y_\delta) | Y_\delta\in \Att(M_i) ]}{\sum_{i \in I_x}e^{-f_i/\delta} \mathbb{P}(Y_\delta \in \Att(M_i))}.
    \end{equation*}
    If $I_x$ is a singleton $\{i_x\}$, and $M_{i_x}=\{m_{i_x}\}$ contains an isolated local minimizer then, 
    \begin{equation*}
        \frac{\mathbb{E}[T_f(Y_\delta)e^{-f(T_f(Y_\delta))/\delta}]}{\mathbb{E}[e^{-f(T_f(Y_\delta))/\delta}]} \xrightarrow[\delta\downarrow 0]{} m_{i_x}.
    \end{equation*}
\end{proposition}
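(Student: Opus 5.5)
We split both expectations along the partition $\{\Att(M_j)\}_{1\leq j\leq J}$ given by Lemma \ref{lm:null_set}. Since $f\circ T_f \equiv f_j$ on $\Att(M_j)$ and the exceptional set is null, the ratio becomes
\begin{equation*}
m_\delta=\frac{\sum_{j=1}^J e^{-f_j/\delta}\,p_j(\delta)\, c_j(\delta)}{\sum_{j=1}^J e^{-f_j/\delta}\,p_j(\delta)},
\qquad p_j(\delta):=\mathbb{P}(Y_\delta\in\Att(M_j)),\quad c_j(\delta):=\mathbb{E}[T_f(Y_\delta)\mid Y_\delta\in \Att(M_j)].
\end{equation*}
Because $T_f(Y_\delta)\in M_j$ on $\Att(M_j)$ and $M_j$ is bounded, each $c_j(\delta)$ lies in the compact set $\overline{\mathrm{conv}}(M_j)$. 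Hence all the $c_j(\delta)$ are uniformly bounded by some $R$ (finitely many bounded components). We may assume $p_j(\delta)>0$, otherwise the term vanishes.

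Set $w_j(\delta):=e^{-f_j/\delta}p_j(\delta)$. By Lemma \ref{lm:delta_Fm}, $\delta\log w_j(\delta)\to -\big(f_j+\tfrac{1}{2\gamma}\mathrm{dist}(x,\overline{\Att(M_j)})^2\big)=-V_x(M_j)$. Let $\alpha_x=\min_j V_x(M_j)$. Since $J$ is finite, $c:=\min_{j\notin I_x}V_x(M_j)-\alpha_x>0$. For $j\notin I_x$ and $i\in I_x$, we get $\delta\log(w_j/w_i)\to -(V_x(M_j)-V_x(M_i))=-c_{ij}\le -c$, so $w_j/w_i\le e^{-c/(2\delta)}$ for $\delta$ small. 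Therefore
\begin{equation*}
\frac{\sum_{j\notin I_x} w_j}{\sum_{i\in I_x}w_i}\to 0 .
\end{equation*}
Writing $m_\delta=\big(A_\delta+B_\delta\big)/\big(W_\delta+E_\delta\big)$, with $A_\delta, W_\delta$ the sums over $I_x$ and $B_\delta,E_\delta$ the rest, we have $\|B_\delta\|\le R E_\delta$ and $E_\delta/W_\delta\to0$. Then
\begin{equation*}
\left\|m_\delta-\frac{A_\delta}{W_\delta}\right\|\le \frac{\|B_\delta\|}{W_\delta}+\frac{\|A_\delta\|}{W_\delta}\cdot\frac{E_\delta}{W_\delta+E_\delta}\le 2R\,\frac{E_\delta}{W_\delta}\to0 .
\end{equation*}
This is exactly the first claim, interpreted as the difference between $m_\delta$ and the $I_x$-restricted barycenter tending to $0$. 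The right-hand side still depends on $\delta$, so the statement must be read as asymptotic equivalence, not as a limit to a fixed point.

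For the singleton case, $I_x=\{i_x\}$ makes $A_\delta/W_\delta=c_{i_x}(\delta)$. If $M_{i_x}=\{m_{i_x}\}$, then $T_f\equiv m_{i_x}$ on $\Att(M_{i_x})$ up to a null set, so $c_{i_x}(\delta)=m_{i_x}$ exactly and $m_\delta\to m_{i_x}$.

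The main obstacle is the exponential comparison step. Lemma \ref{lm:delta_Fm} only gives logarithmic asymptotics, so we need the strict gap $c>0$, which finiteness provides, to absorb the $o(1/\delta)$ errors. We also need $p_i(\delta)>0$ for $i\in I_x$; this follows from $\mathrm{dist}(x,\Att(M)^\circ)<\infty$ in Assumption \ref{ass:Vx_compact_argmin}, which forces $\Att(M_i)^\circ\neq\varnothing$.
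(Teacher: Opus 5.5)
Your proof is correct and follows the same route as the paper: you decompose along the attractors $\Att(M_j)$, use Lemma \ref{lm:delta_Fm} to show that the weights $e^{-f_j/\delta}\mathbb{P}(Y_\delta\in\Att(M_j))$ with $j\notin I_x$ are exponentially negligible against those in $I_x$, and conclude from the boundedness of the conditional means $\mathbb{E}[T_f(Y_\delta)\mid Y_\delta\in\Att(M_j)]$. Where the paper only writes ``the rest follows'', you make the details explicit: the uniform gap $c>0$, the bound $2R\,E_\delta/W_\delta$, positivity of $p_i(\delta)$ for $i\in I_x$, and the observation that the first claim can only mean the difference tends to $0$, since its right-hand side depends on $\delta$.
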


\begin{proof}
Since the local minimizers reached by $T_f$ decompose into the finite disjoint union $\mathcal{M}$, the sets $(\Att_j)_{1\leq j \leq J}$ form a measurable partition of the subset of $\RR^d$ on which $T_f$ will land in $\mathcal{M}$. Denoting $f_j$ the value of $f$ attained on $M_j$, we have
\begin{align*}
    \mathbb{E}[T_f(Y_\delta)e^{-f(T_f(Y_\delta))/\delta}]& =
\sum_{j=1}^J
\mathbb{E}[T_f(Y_\delta)e^{-f(T_f(Y_\delta))/\delta}\mathbf 1_{\{Y_\delta\in \Att(M_j)\}}] \\
& = \sum_{j=1}^J e^{-f_j/\delta}
\mathbb{E}[T_f(Y_\delta)\mathbf 1_{\{Y_\delta\in \Att(M_j)\}}] \\
& = \sum_{j=1}^J e^{-f_j/\delta} \mathbb{P}(Y_\delta \in \Att(M_j))
\mathbb{E}[T_f(Y_\delta) | Y_\delta\in \Att(M_j) ].
\end{align*}
Moreover, using Lemma \ref{lm:delta_Fm}, we obtain that
\begin{align*}
   \delta \log \left( e^{-f_j/\delta} \mathbb{P}(Y_\delta \in \Att(M_j)) \right) & =_{\delta \to 0} - f_j - \frac{1}{2\gamma}\mathrm{dist}(x,\Att(M_j))^2,
\end{align*}
implying 
\begin{equation*}
    e^{-f_j/\delta} \mathbb{P}(Y_\delta \in \Att(M_j)) =_{\delta \to 0} \exp\left(-\frac{1}{\delta}\left[f_j + \frac{1}{2\gamma}\mathrm{dist}(x,\Att(M_j))^2 + o(1)\right]\right).
\end{equation*} 
Now, take the set $I_x$. For all $i \in I_x$, and $j \in J\setminus I_x$, we have
\begin{equation*}
    \delta \log \left( \frac{e^{-f_i/\delta} \mathbb{P}(Y_\delta \in \Att(M_i))}{e^{-f_j/\delta} \mathbb{P}(Y_\delta \in \Att(M_j))} \right) \xrightarrow[\delta\downarrow 0]{} f_j + \frac{1}{2\gamma} \mathrm{dist}(x,\Att(M_j))^2 - f_i - \frac{1}{2\gamma}\mathrm{dist}(x,\Att(M_i))^2 > 0
\end{equation*}
implying that
\begin{equation*}
    \lim_{\delta \to 0}\frac{e^{-f_i/\delta} \mathbb{P}(Y_\delta \in \Att(M_i))}{e^{-f_j/\delta} \mathbb{P}(Y_\delta \in \Att(M_j))} =+\infty.
\end{equation*}
Hence, we obtain that
\begin{equation*}
    \lim_{\delta \to 0}\frac{\sum_{j \in J\setminus I_x} e^{-f_j/\delta} \mathbb{P}(Y_\delta \in \Att(M_j))}{\sum_{j \in J} e^{-f_j/\delta} \mathbb{P}(Y_\delta \in \Att(M_j))} = 0.
\end{equation*}
The rest follows from the fact that no connected components of $\mathcal{M}$ is unbounded, guaranteeing that $\mathbb{E}[T_f(Y_\delta) | Y_\delta\in \Att(M_j) ]$ is bounded for all $j$.
\end{proof}

This result highlights the need for the uniqueness of the global minimizer: the barycenter of several global minimizers does not make any sense unless these global minimizers belong to the same connected and convex set.
\paragraph{Proof of Proposition \ref{prop:measure_concentration}.}
\begin{proof}
Fix $\eta>0$, and let $K_\eta\subset \mathcal{M}_\ast$ be a bounded set such that (Lemma \ref{lm:Vx})
\[
\inf_{m\notin K_\eta}V_x(m)\geq \alpha_x+\eta.
\]
Set
\[
A_\eta:=\{y\in\mathbb{R}^d:\ T_f(y)\in K_\eta^\complement\}.
\]
We look at our measure outside this bounded set
\begin{align*}
\nu_{x,\delta}(K_\eta^\complement)
& =
\mathbb{E} \left[
e^{-f(T_f(Y_\delta))/\delta}\mathbf 1_{\{T_f(Y_\delta)\in K_\eta^\complement\}}
\right] \\
& = \frac{1}{(2\pi\delta\gamma)^{d/2}}
\int_{A_\eta}
\exp \left(
-\frac{1}{\delta}
\left[
f(T_f(y))+\frac{\|y-x\|^2}{2\gamma}
\right]
\right)\,dy.
\end{align*}
We first study the term in the exponential. Set $g(y) = f(T_f(y))+\frac{\Vert y-x \Vert^2}{2\gamma}$. 
If $y\in A_\eta$, then $T_f(y)\in M\in K_\eta^\complement$, and since $y\in \Att(M)$,
\[
\frac{\|y-x\|^2}{2\gamma}\geq \frac{\mathrm{dist}(x,\overline{\Att(M)})^2}{2\gamma}.
\]
Hence,
\[
g(y) = f(M)+\frac{\|y-x\|^2}{2\gamma}
\geq
f(M)+\frac{\mathrm{dist}(x,\overline{\Att(M)})^2}{2\gamma} = V_x(M) \geq \alpha_x+\eta.
\]
Therefore, $\inf_{y\in A_\eta}g(y)\geq \alpha_x+\eta.$
Now, to estimate the measure, we split our set with a ball of radius $R>0$ in the following way:
\[
A_\eta=\left(A_\eta\cap \mathbb{B}(0,R) \right)\cup \left(A_\eta\cap \mathbb{B}(0,R)^\complement\right).
\]
The measure on the lefthandside of the union is easily bounded by the previous inequality as we have,
\[
\frac{1}{(2\pi\delta\gamma)^{d/2}}
\int_{A_\eta\cap \mathbb{B}(0,R)} e^{-g(y)/\delta}\,dy
\leq
\frac{\lambda_d(\mathbb{B}(0,R))}{(2\pi\delta\gamma)^{d/2}}e^{-(\alpha_x+\eta)/\delta}.
\]
Since
\[
\delta\log \left(\delta^{-d/2}\right)\xrightarrow[\delta\downarrow 0]{} 0,
\]
it follows that
\[
\limsup_{\delta\downarrow 0}
\delta\log\left[\frac{1}{(2\pi\delta\gamma)^{d/2}}
\int_{A_\eta\cap \mathbb{B}(0,R)} e^{-g(y)/\delta}\,dy\right] \leq -(\alpha_x+\eta).
\]
$f$ is bounded from below on $\mathbb R^d$, hence all $y$,
\[
g(y)\geq f_{\min}+\frac{\|y-x\|^2}{2\gamma}.
\]
Thus
\[
\frac{1}{(2\pi\delta\gamma)^{d/2}}
\int_{A_\eta\cap \mathbb{B}(0,R)^\complement} e^{-g(y)/\delta}\,dy
\leq
e^{-f_{\min}/\delta} \frac{1}{(2\pi\delta\gamma)^{d/2}}\int_{\mathbb{B}(0,R)^\complement} \exp \left(-\frac{\|y-x\|^2}{2\gamma\delta}\right)\,dy.
\]
The last term is exactly
\[
e^{-f_{\min}/\delta}\,\mathbb P(\|Y_\delta\|>R).
\]
We can take $R>\|x\|$, large enough so that
\[
f_{\min}+\frac{(R-\|x\|)^2}{2\gamma}>\alpha_x+\eta.
\]
Since $\|y-x\|\geq R-\|x\|$ whenever $\|y\|>R$, the Gaussian tail satisfies \citep{denhollander2000large}
\[
\limsup_{\delta\downarrow 0}\delta\log \mathbb P(\|Y_\delta\|>R)\leq-\frac{(R-\|x\|)^2}{2\gamma},
\]
and therefore
\[\limsup_{\delta\downarrow 0}\delta\log\left[e^{-f_{\min}/\delta}\mathbb P(\|Y_\delta\|>R)\right]\leq
-f_{\min}-\frac{(R-\|x\|)^2}{2\gamma}<-(\alpha_x+\eta).
\]

Combining the estimates on $A_\eta\cap \mathbb{B}(0,R)$ and $A_\eta\cap \mathbb{B}(0,R)^\complement$, we obtain
$
\limsup_{\delta\downarrow 0}\delta\log \nu_{x,\delta}(K_\eta^\complement) \leq -(\alpha_x+\eta).
$
We now lower bound the denominator. Let $\varepsilon>0$. By definition of $\alpha_x$, there exists $M_\varepsilon\in\mathcal{M}$ such that
\[
V_x(M_\varepsilon)\leq \alpha_x+\varepsilon.
\]
Then
\[
\nu_{x,\delta}(\mathcal{M})
=
\mathbb E \left[e^{-f(T_f(Y_\delta))/\delta}\right]
\geq
e^{-f(m_\varepsilon)/\delta}\mathbb P(Y_\delta\in \Att(M_\varepsilon)).
\]
By Lemma \ref{lm:delta_Fm}, 
\[
\lim_{\delta\downarrow 0}
\delta\log \mathbb P(Y_\delta\in \Att(M_\varepsilon))
=
-\frac{1}{2\gamma}\mathrm{dist}(x,\Att(M_\varepsilon))^2,
\]
hence
\[
\liminf_{\delta\downarrow 0}
\delta\log \nu_{x,\delta}(\mathcal{M})
\geq
-f(M_\varepsilon)-\frac{1}{2\gamma}\mathrm{dist}(x,\Att (M_\varepsilon))^2
=
-V_x(M_\varepsilon)
\geq
-(\alpha_x+\varepsilon).
\]
Letting $\varepsilon\downarrow 0$ gives
\[
\liminf_{\delta\downarrow 0}
\delta\log \nu_{x,\delta}(\mathcal{M})
\geq
-\alpha_x.
\]

Finally,
\[
\mu_{x,\delta}(K_\eta^\complement)
=
\frac{\nu_{x,\delta}(K_\eta^\complement)}{\nu_{x,\delta}(\mathcal{M})},
\]
so
\[
\limsup_{\delta\downarrow 0}
\delta\log \mu_{x,\delta}(K_\eta^\complement)
\le
-(\alpha_x+\eta)-(-\alpha_x)
=
-\eta.
\]
In particular, $\mu_{x,\delta}(K_\eta^\complement)\to 0$,
that is, $\mu_{x,\delta}(K_\eta)\to 1$.
\end{proof}

\paragraph{Proof of Proposition \ref{prop:gaussian_tail_poly}.}
\begin{proposition}\label{prop:gaussian_tail_poly}
Let $x\in \mathbb R^d$, $\gamma>0$, $q\geq 0$, and let
$Y_\delta \sim \mathcal N(x,\delta\gamma I_d)$. 
Let $R>0$. We have
\begin{equation*}
    \limsup_{\delta \downarrow 0} \delta \log \left(\mathbb{E}[\Vert T_f(Y_\delta)\Vert e^{-f(T_f(Y_\delta))/\delta}\mathbf{1}_{\{\Vert T_f(Y_\delta)\Vert>R\}}]\right) \leq -f_{\min} -\frac{\left(\left( \frac{C_1 R^\alpha-C_2-C_3}{C_3}\right)_+^{1/\beta}-\Vert x \Vert^2\right)_+}{2\gamma},
\end{equation*}
and that 
\begin{equation*}
    \liminf_{\delta\downarrow 0} \delta \log \mathbb{E}[e^{-f(T_f(Y_\delta))/\delta}] \geq -\alpha_x.
\end{equation*}
Then,
\begin{equation*}
     \limsup_{\delta \downarrow 0} \delta \log \left(\frac{\mathbb{E}[\Vert T_f(Y_\delta)\Vert e^{-f(T_f(Y_\delta))/\delta}\mathbf{1}_{\{\Vert T_f(Y_\delta)\Vert>R\}}]}{\mathbb{E}[e^{-f(T_f(Y_\delta))/\delta}]} \right) \leq \alpha_x-f_{\min} -\frac{\left(\left( \frac{C_1 R^\alpha-C_2-C_3}{C_3}\right)_+^{1/\beta}-\Vert x \Vert^2\right)_+}{2\gamma}.
\end{equation*}
\end{proposition}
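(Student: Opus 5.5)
Three estimates are needed: an upper bound on the numerator (the weighted tail of $T_f(Y_\delta)$ beyond norm $R$), a lower bound on the denominator, and their combination. The denominator bound is already contained in the proof of Proposition \ref{prop:measure_concentration}. There, for any $\varepsilon>0$, one picks $M_\varepsilon$ with $V_x(M_\varepsilon)\le\alpha_x+\varepsilon$ and bounds $\mathbb{E}[e^{-f(T_f(Y_\delta))/\delta}]\ge e^{-f(M_\varepsilon)/\delta}\mathbb{P}(Y_\delta\in\Att(M_\varepsilon))$. Lemma \ref{lm:delta_Fm} then gives $\liminf_{\delta\downarrow0}\delta\log\mathbb{E}[e^{-f(T_f(Y_\delta))/\delta}]\ge-\alpha_x$. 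The final display follows by subtracting the two $\delta\log$ estimates, since $\limsup(a_\delta-b_\delta)\le\limsup a_\delta-\liminf b_\delta$.

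For the numerator, the first step converts the event $\{\|T_f(Y_\delta)\|>R\}$ into an event on $Y_\delta$ itself. This uses Assumption \ref{ass:growth} together with monotonicity of the solver, $f(T_f(y))\le f(y)$, exactly as in Lemma \ref{lm:Tf_bounded}. If $\|T_f(y)\|>R$, then
\[
C_1R^\alpha-C_2\le f(T_f(y))\le f(y)\le C_3(1+\|y\|)^\beta .
\]
This forces $\|y\|\ge\rho_R:=\big((C_1R^\alpha-C_2)/C_3\big)_+^{1/\beta}-1$, which is the same order as the radius appearing in the statement. I will not fuss over the exact constant, since any monotone reparametrisation with $\rho_R\to\infty$ as $R\to\infty$ serves the purpose. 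Hence $\mathbf 1_{\{\|T_f(Y_\delta)\|>R\}}\le\mathbf 1_{\{\|Y_\delta\|>\rho_R\}}$. Two further bounds are needed. First, $e^{-f(T_f(Y_\delta))/\delta}\le e^{-f_{\min}/\delta}$. Second, by the polynomial bound of Lemma \ref{lm:Tf_bounded}, $\|T_f(Y_\delta)\|\le C(1+\|Y_\delta\|^{q})$ with $q=\beta/\alpha$. Together these give
\[
\mathbb{E}\big[\|T_f(Y_\delta)\|e^{-f(T_f(Y_\delta))/\delta}\mathbf 1_{\{\|T_f(Y_\delta)\|>R\}}\big]\le C e^{-f_{\min}/\delta}\,\mathbb{E}\big[(1+\|Y_\delta\|^q)\mathbf 1_{\{\|Y_\delta\|>\rho_R\}}\big].
\]

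Lemma \ref{lm:far_tail1} with $r=\rho_R$ then bounds the $\limsup$ of $\delta\log$ of the last expectation by $-(\rho_R-\|x\|)_+^2/(2\gamma)$. This is at least as strong as the form written in the statement, whose $\|x\|^2$ I read as the typo-tolerant version of the same bound. The constant $C$ disappears under $\delta\log$. This yields the numerator estimate.

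The main obstacle is matching the exact threshold constant in the statement. The statement's radius has $-C_2-C_3$ inside rather than the $-1$ outside that my derivation produces. I would try to reach it by using $(1+\|y\|)^\beta\le$ something like $1+\|y\|^\beta$-type manipulations, as in Lemma \ref{lm:Tf_bounded}. Failing an exact match, I would state the bound with the naturally obtained radius, noting that the qualitative consequence is unchanged: the right-hand side tends to $-\infty$ as $R\to\infty$, which is what the later barycenter argument needs. The remaining steps are routine.
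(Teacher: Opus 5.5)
Your argument follows the paper's proof step for step. You bound the weight by $e^{-f_{\min}/\delta}$, bound $\|T_f(Y_\delta)\|$ by $C(1+\|Y_\delta\|^{\beta/\alpha})$ via Assumption \ref{ass:growth} and $f(T_f(y))\le f(y)$, convert $\{\|T_f(Y_\delta)\|>R\}$ into a tail event for $\|Y_\delta\|$, apply Lemma \ref{lm:far_tail1}, and subtract the denominator bound from the proof of Proposition \ref{prop:measure_concentration}.

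The constant you flag as the main obstacle is not a defect of your derivation. Do not try to remove it with the $1+\|y\|^\beta$-type manipulation of Lemma \ref{lm:Tf_bounded}. That lemma silently replaces $C_3(1+\|z\|)^\beta$ by $C_3(1+\|z\|^\beta)$, which is legitimate only for $\beta\le 1$. This replacement is exactly where the paper's radius $r_p:=\bigl((C_1R^\alpha-C_2-C_3)_+/C_3\bigr)^{1/\beta}$ comes from.

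Set $A:=(C_1R^\alpha-C_2)/C_3$ and assume $A\ge 1$ (otherwise both radii give the trivial bound). Your radius is then $\rho_R=A^{1/\beta}-1$. Superadditivity, resp. subadditivity, of $t\mapsto t^{1/\beta}$ gives $\rho_R\ge r_p$ when $\beta\le 1$ and $\rho_R\le r_p$ when $\beta\ge 1$.
\begin{itemize}
\item For $\beta\le 1$, your bound implies the stated one, since the right-hand side is nonincreasing in the radius.
\item For $\beta>1$, the paper's inclusion $\{\|T_f(Y_\delta)\|>R\}\subset\{\|Y_\delta\|>r_p\}$ does not follow from Assumption \ref{ass:growth}. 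Take $\beta=2$, $A=3$, $\|y\|=1$: then $(1+\|y\|)^2>A$ while $\|y\|<\sqrt{2}=r_p$. So $\rho_R$ is what the argument actually supports.
\end{itemize}
Both radii grow like $R^{\alpha/\beta}$, which is all that Proposition \ref{prop:barycenter_in_convex_hull} needs when $R\to\infty$.

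One correction. Your exponent $-(\rho_R-\|x\|)_+^2/(2\gamma)$ is not \emph{at least as strong} as the displayed $-(r-\|x\|^2)_+/(2\gamma)$ in general. The two are unordered for moderate $r$: with $\|x\|=1$ and $r=3/2$, the quantities inside are $1/4$ versus $1/2$, so the displayed form would be the stronger one there. The squared form is what the proof of Lemma \ref{lm:far_tail1} actually establishes. The lemma's statement, and this proposition which quotes it, carry what is evidently an $\|x\|^2$ typo. The squared form dominates only for $r$ large, which again suffices downstream. State the result with $(\rho_R-\|x\|)_+^2$; the denominator estimate and the step $\limsup(a_\delta-b_\delta)\le\limsup a_\delta-\liminf b_\delta$ are then fine as written.
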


\begin{proof}
    We have that 
    \begin{equation*}
         \frac{\mathbb{E}[T_f(Y_\delta)e^{-f(T_f(Y_\delta))/\delta}]}{\mathbb{E}[e^{-f(T_f(Y_\delta))/\delta}]}
    \end{equation*}
    is well defined. Indeed, $0\leq\mathbb{E}[e^{-f(T_f(Y_\delta))/\delta}] \leq e^{-f_{\min}/\delta} < +\infty$ and 
    $
    \mathbb{E}[\Vert T_f(Y_\delta)\Vert e^{-f(T_f(Y_\delta))/\delta}] \leq e^{-f_{\min}/\delta}\mathbb{E}[\Vert T_f(Y_\delta)\Vert] < +\infty
    $ by Lemma \ref{lm:Tf_bounded}. Using our measure introduced in Proposition \ref{prop:measure_concentration} we have
    \begin{equation*}
        \frac{\mathbb{E}[T_f(Y_\delta)e^{-f(T_f(Y_\delta))/\delta}]}{\mathbb{E}[e^{-f(T_f(Y_\delta))/\delta}]} =\int_{m\in \mathcal{M}_\ast} m d\mu_{x,\delta}(m) 
    \end{equation*}
    Using similar arguments than in the proof of Lemma \ref{lm:Tf_bounded} we can have that $m\mapsto \Vert m \Vert$ is integrable against $\mu_{x,\delta}$. We have
    \[
        \int_{m\in \mathcal{M}_\ast, \Vert m \Vert > R} \Vert m\Vert  d\mu_{x,\delta}(m) = \frac{\mathbb{E}[\Vert T_f(Y_\delta)\Vert e^{-f(T_f(Y_\delta))/\delta}\mathbf{1}_{\{\Vert T_f(Y_\delta)\Vert>R\}}]}{\mathbb{E}[e^{-f(T_f(Y_\delta))/\delta}]}
    \]
    As
    \[ 
    \mathbb{E}[\Vert T_f(Y_\delta)\Vert e^{-f(T_f(Y_\delta))/\delta}\mathbf{1}_{\Vert T_f(Y_\delta)\Vert>R}] \leq e^{-f_{\min}/\delta}\mathbb{E}[C_1^{-1/\alpha}\left(C_2 +C_3(1+\Vert Y_\delta\Vert^\beta)\right)^{1/\alpha}\mathbf{1}_{\Vert T_f(Y_\delta)\Vert>R}],
    \]
    and, 
    \begin{align*}
        & R < C_1^{-1/\alpha}\left(C_2 +C_3(1+\Vert Y_\delta\Vert^\beta)\right)^{1/\alpha} \\
        \Leftrightarrow & \left(\frac{C_1 R^\alpha-C_2-C_3}{C_3}\right)^{1/\beta}< \Vert Y_\delta\Vert,
    \end{align*}
    implying that 
    \[
    \left\{ \Vert T_f(Y_\delta) \Vert >R \right\} \subset \left\{ \Vert Y_\delta \Vert >  \left( \frac{(C_1 R^\alpha-C_2-C_3)_+}{C_3}\right)^{1/\beta} \right\}.
    \]
    Therefore,
    \begin{align*}
        &\mathbb{E}[\Vert T_f(Y_\delta)\Vert e^{-f(T_f(Y_\delta))/\delta}\mathbf{1}_{\Vert T_f(Y_\delta)\Vert>R}]
        \\ & \leq e^{-f_{\min}/\delta}\mathbb{E}\left[C_1^{-1/\alpha}\left(C_2 +C_3(1+\Vert Y_\delta\Vert^\beta)\right)^{1/\alpha}\mathbf{1}_{\left\{ \Vert Y_\delta \Vert >  \left( \frac{R^\alpha-C_2-C_3}{C_3}\right)^{1/\beta} \right\}}\right]
    \end{align*}
    We can show that there exists $C_{2,\alpha},C_{3,\alpha}>0$ such that
    \[
    C_1^{-1/\alpha}\left(C_2 +C_3(1+\Vert Y_\delta\Vert^\beta)\right)^{1/\alpha} \leq C_{2,\alpha} + C_{3,\alpha} (1+\Vert Y_\delta \Vert^{\beta/\alpha}).
    \]
    Now, using Lemma \ref{lm:far_tail1} with $q=\beta/\alpha$ and $r= \left( \frac{C_1 R^\alpha-C_2-C_3}{C_3}\right)^{1/\beta}$ we have
    \begin{align*}
         \limsup_{\delta \downarrow 0} \delta \log \mathbb{E}\left[C_1^{-1/\alpha}\left(C_{2} +C_{3}(1+\Vert Y_\delta\Vert^\beta)\right)^{1/\alpha}\mathbf{1}_{\left\{ \Vert Y_\delta \Vert >  r \right\}}\right]  & \leq -\frac{(r-\Vert x \Vert^2)_+}{2\gamma}. 
    \end{align*}
    The lower bound was already shown in Proposition \ref{prop:measure_concentration}. Hence the result.
\end{proof}

\begin{lemma} \label{lm:inside_ball}
    Let $x\in\RR^d$, $\gamma,\delta>0$, $\varepsilon>0$ and $Y_\delta\sim\mathcal N(x,\delta\gamma I_d)$. Let $R>0$. We have
    \begin{equation}
        \limsup_{\delta \downarrow 0}\delta \log \mu_{x,\delta}\left(((\mathcal{M}_x)_\varepsilon)^\complement \cap \mathbb{B}(0,R)\right) \leq -c_{R,\varepsilon}.
    \end{equation}
    Therefore for sufficiently large $R$
    \begin{equation}
        \limsup_{\delta \downarrow 0}\delta \log \mu_{x,\delta}\left(((\mathcal{M}_x)_\varepsilon)^\complement \right) <0.
    \end{equation}
    hence,
    \begin{equation}
        \mu_{x,\delta}\left(((\mathcal{M}_x)_\varepsilon)^\complement \right) \xrightarrow[\delta\downarrow 0]{} 0, \text{ and } \mu_{x,\delta}\left(((\mathcal{M}_x)_\varepsilon)\right) \xrightarrow[\delta\downarrow 0]{} 1
    \end{equation}
\end{lemma}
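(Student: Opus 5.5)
The plan is to mirror the proof of Proposition \ref{prop:measure_concentration}: bound the numerator $\nu_{x,\delta}$ on the bad set from above on the exponential scale, and reuse the lower bound on the denominator already established there.

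\textbf{Step 1 (bad set inside the ball).} Set $B:=((\mathcal{M}_x)_\varepsilon)^\complement\cap\mathbb{B}(0,R)$. Write
\[
\nu_{x,\delta}(B)=\frac{1}{(2\pi\delta\gamma)^{d/2}}\int_{\{T_f(y)\in B\}}e^{-g(y)/\delta}\,dy,
\qquad g(y)=f(T_f(y))+\tfrac{1}{2\gamma}\|y-x\|^2 .
\]
If $T_f(y)\in B$, then $T_f(y)\in M$ for some $M\in\mathcal M$. This $M$ meets $\mathbb{B}(0,R)$ and meets the complement of $(\mathcal{M}_x)_\varepsilon$, so $M\in M_{x,R,\varepsilon}$. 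Since $y\in\Att(M)$, we get
\[
g(y)\ge f(M)+\tfrac{1}{2\gamma}\mathrm{dist}(x,\overline{\Att(M)})^2=V_x(M)\ge\alpha_x+c_{R,\varepsilon}
\]
by Fact \ref{fact:1}.

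The region of integration in $y$ is unbounded, so I split it with a second ball $\mathbb{B}(0,R')$, exactly as in Proposition \ref{prop:measure_concentration}:
\begin{itemize}
\item On $\{\|y\|\le R'\}$, the bound on $g$ together with $\delta\log\delta^{-d/2}\to0$ gives exponent $-(\alpha_x+c_{R,\varepsilon})$.
\item On $\{\|y\|>R'\}$, I use $g\ge f_{\min}+\|y-x\|^2/(2\gamma)$ and the Gaussian tail. Choosing $R'$ so that $f_{\min}+(R'-\|x\|)^2/(2\gamma)>\alpha_x+c_{R,\varepsilon}$ makes this piece negligible.
\end{itemize}
Hence $\limsup_{\delta\downarrow0}\delta\log\nu_{x,\delta}(B)\le-(\alpha_x+c_{R,\varepsilon})$.

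Proposition \ref{prop:measure_concentration} already gives $\liminf_{\delta\downarrow0}\delta\log\nu_{x,\delta}(\mathcal M_\ast)\ge-\alpha_x$. Dividing yields the first claim.

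\textbf{Step 2 (outside the ball).} Pick $\eta>0$. Lemma \ref{lm:Vx} provides a finite, hence bounded, family $\mathcal K_\eta$ with $V_x\ge\alpha_x+\eta$ off $\mathcal K_\eta$. Take $R$ large enough that $\mathbb{B}(0,R)$ contains $K_\eta:=\bigcup_{M\in\mathcal K_\eta}M$. The proof of Proposition \ref{prop:measure_concentration} shows $\limsup_{\delta\downarrow0}\delta\log\mu_{x,\delta}(K_\eta^\complement)\le-\eta$.

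Then
\[
((\mathcal{M}_x)_\varepsilon)^\complement\subset B\cup K_\eta^\complement,
\]
and $\delta\log$ of a sum of two terms has limsup equal to the maximum of the two limsups. This gives
\[
\limsup_{\delta\downarrow0}\delta\log\mu_{x,\delta}\big(((\mathcal{M}_x)_\varepsilon)^\complement\big)\le-\min(c_{R,\varepsilon},\eta)<0 .
\]

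\textbf{Step 3 (conclusion).} A negative exponential rate forces $\mu_{x,\delta}(((\mathcal{M}_x)_\varepsilon)^\complement)\to0$. Since $\mu_{x,\delta}$ is a probability measure on $\mathcal M_\ast$, the complement tends to $1$.

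\textbf{Main obstacle.} The delicate point is the dependence of $c_{R,\varepsilon}$ on $R$ in Step 2. When $R$ is enlarged to contain $K_\eta$, the set $M_{x,R,\varepsilon}$ grows, so $c_{R,\varepsilon}$ may shrink. It nevertheless stays strictly positive, because only finitely many components meet a compact set, and each of them has $V_x>\alpha_x$ unless it lies in $S_f^\gamma(x)$. A component of $S_f^\gamma(x)$ is contained in $\mathcal{M}_x$, so it cannot meet $((\mathcal{M}_x)_\varepsilon)^\complement$. This finiteness argument, which justifies Fact \ref{fact:1}, is what I would verify carefully. The rest follows the template of Proposition \ref{prop:measure_concentration}.
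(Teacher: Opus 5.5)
Your proof is correct and has the same overall structure as the paper's. Both bound $\nu_{x,\delta}$ of the bad set inside $\mathbb{B}(0,R)$ from above on the exponential scale. Both take the lower bound $\liminf_{\delta\downarrow0}\delta\log\nu_{x,\delta}(\mathcal{M}_\ast)\geq-\alpha_x$ from Proposition \ref{prop:measure_concentration}. Both finish by splitting into the ball and its complement.

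The real difference is in Step 1. The paper uses that $M_{x,R,\varepsilon}$ is finite and bounds $\nu_{x,\delta}\big(((\mathcal{M}_x)_\varepsilon)^\complement\cap\mathbb{B}(0,R)\big)$ by $\sum_{M\in M_{x,R,\varepsilon}}e^{-f(M)/\delta}\,\mathbb{P}(Y_\delta\in\Att(M))$. It then applies Lemma \ref{lm:delta_Fm} to each term, so the rate is the largest of finitely many values $-V_x(M)\leq-(\alpha_x+c_{R,\varepsilon})$. You instead bound the exponent $g(y)$ pointwise on $\{T_f(y)\in B\}$ and repeat the $y$-space ball split with a Gaussian tail, following the template of Proposition \ref{prop:measure_concentration}.

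The paper's route is shorter once finiteness is available. Yours needs no per-component asymptotics in the numerator, only the uniform gap $c_{R,\varepsilon}$. It would therefore still work with infinitely many bad components sharing a uniform gap; as you note, finiteness then enters only through the positivity of $c_{R,\varepsilon}$.

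Your Step 2 is also more careful than the paper's. Proposition \ref{prop:measure_concentration} as stated gives only $\mu_{x,\delta}(K)\to1$, with no rate. The claim $\limsup_{\delta\downarrow0}\delta\log(\cdot)<0$ therefore needs the $-\eta$ rate extracted from its proof, and $R$ chosen so that $\mathbb{B}(0,R)\supset K_\eta$. Fact \ref{fact:1} then supplies $c_{R,\varepsilon}>0$ for that $R$. This is exactly the order of quantifiers you use: first $\eta$, then $R$, then $c_{R,\varepsilon}$.
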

\begin{proof}
    We have 
    \[
    \mu_{x,\delta}\left(((\mathcal{M}_x)_\varepsilon)^\complement \cap \mathbb{B}(0,R)\right) = \frac{\mathbb{E}\left[e^{-f(T_f(Y_\delta))/\delta} \mathbf{1}_{\left\{ T_f(Y_\delta) \in ((\mathcal{M}_x)_\varepsilon)^\complement \cap \mathbb{B}(0,R)\right\}}\right]}{\mathbb{E}\left[e^{-f(T_f(Y_\delta))/\delta}\right]}
    \]
    and, there exists some set $\mathcal{M}_{x,R,\varepsilon}$ (Fact \ref{fact:1}) such that 
    \[
    \left\{ T_f(Y_\delta) \in ((\mathcal{M}_x)_\varepsilon)^\complement \cap \mathbb{B}(0,R)\right\} \subset \mathcal{M}_{x,R,\varepsilon}.
    \]
    Hence,
    \[
    \mathbb{E}\left[e^{-f(T_f(Y_\delta))/\delta} \mathbf{1}_{\left\{ T_f(Y_\delta) \in ((\mathcal{M}_x)_\varepsilon)^\complement \cap \mathbb{B}(0,R)\right\}}\right] \leq \sum_{M \in \mathcal{M}_{x,R,\varepsilon}} e^{-f(M)/\delta} \mathbb{P}(Y_\delta \in \Att(M)),
    \]
    which is sufficient to conclude for the first inequality by applying Lemma \ref{lm:delta_Fm}. The second inequality comes from the fact that 
    \[
    \mu_{x,\delta}\left(((\mathcal{M}_x)_\varepsilon)^\complement \right) \leq \mu_{x,\delta}\left(((\mathcal{M}_x)_\varepsilon)^\complement \cap \mathbb{B}(0,R) \right) + \mu_{x,\delta}(\mathbb{B}(0,R)^\complement),
    \]
    where the second complement is taken in $\mathcal{M}_\ast$. We apply Proposition \ref{prop:measure_concentration} to obtain the desired result.
\end{proof}
If $R$ is large enough then it means that the mass outside the ball $\mathbb{B}(0,R)$ goes to $0$ when $\delta \downarrow 0$.
\paragraph{Proof of Proposition \ref{prop:barycenter_in_convex_hull}.}
\begin{proof}
Fix $\varepsilon>0$ and $R>0$. For convenience, we write 
\[
m_\delta := \frac{\mathbb{E}[T_f(Y_\delta)e^{-f(T_f(Y_\delta))/\delta}]}{\mathbb{E}[e^{-f(T_f(Y_\delta))/\delta}]}
\]
Since the distance to a closed convex set is a convex function, Jensen's inequality gives
\begin{align*}
\mathrm{dist}\left(m_\delta,\mathrm{conv}(\mathcal{M}_x)\right)
& = \mathrm{dist} \left(\int_{\mathcal{M}_\ast} z\,\mu_{\delta,x}(dz),\mathrm{conv}(\mathcal{M}_x)\right) \\
& \leq
\int_{\mathcal{M}_\ast}\mathrm{dist}\left(z,\mathrm{conv}(\mathcal{M}_x)\right)\,\mu_{\delta,x}(dz).
\end{align*}
We split the integral over the three regions $(\mathcal{M}_x)_\varepsilon\cap \mathbb{B}(0,R)$, $\mathbb{B}(0,R)\setminus (\mathcal{M}_x)_\varepsilon$, and $\mathbb{B}(0,R)^\complement$.

If $z\in (\mathcal{M}_x)_\varepsilon$, then since $\mathcal{M}_x\subset \mathrm{conv}(\mathcal{M}_x)$, $\mathrm{dist}\left(z,\mathrm{conv}(\mathcal{M}_x)\right)\leq \mathrm{dist}(z,\mathcal{M}_x)<\varepsilon.$

If $z\in \mathbb{B}(0,R)\setminus (\mathcal{M}_x)_\varepsilon$, then $\mathrm{dist}\left(z,\mathrm{conv}(\mathcal{M}_x)\right)\leq \|z\|\leq R.$

If $\|z\|>R$, then $\mathrm{dist}\left(z,\mathrm{conv}(\mathcal{M}_x)\right)\leq \|z\|.$
Therefore,
\[
\mathrm{dist}\left(m_\delta,\mathrm{conv}(\mathcal{M}_x)\right)
\leq
\varepsilon
+R\,\mu_{\delta,x}\left(((\mathcal{M}_x)_\varepsilon)^\complement\right)
+\int_{\mathcal{M}_\ast}\|z\|\mathbf 1_{\{\|z\|>R\}}\,\mu_{\delta,x}(dz).
\]

Now let $\delta\downarrow0$. By Lemma \ref{lm:inside_ball} ,
\[
\mu_\delta\left(((\mathcal{M}_x)_\varepsilon)^\complement\right)\to 0,
\]
hence
\[
\limsup_{\delta\downarrow0}\mathrm{dist}\left(m_\delta,\mathrm{conv}(\mathcal{M}_x)\right)
\leq \varepsilon +\limsup_{\delta\downarrow0}
\int_{\mathcal{M}_\ast}\|z\|\mathbf 1_{\{\|z\|>R\}}\,\mu_{\delta,x}(dz).
\]
Next let $R\to\infty$ and use Lemma \ref{prop:gaussian_tail_poly} on the righthandside integral to obtain
\[
\limsup_{\delta\downarrow0}\mathrm{dist}\left(m_\delta,\mathrm{conv}(\mathcal{M}_x)\right)\leq \varepsilon.
\]
Since $\varepsilon>0$ is arbitrary, this proves
\[
\mathrm{dist}\left(m_\delta,\mathrm{conv}(\mathcal{M}_x)\right)\xrightarrow[\delta \downarrow 0]{}0.
\]
The limit in the isolated local minimizer case follows immediately.
\end{proof}

\paragraph{Proof of Theorem \ref{th:exact_expect_pbh}.}
\begin{proof}
    First, by rejecting worse update and coercivity of $f$ (Assumption \ref{ass:f_properties1}), iterates are bounded and stay in some bounded set $B \subset \RR^d$. Thus 
    \[
    d_f := \sup_{x \in B} \mathrm{dist}(x, \overline{\Att(x_\ast)}) <+\infty.
    \]
    Set $\gamma>0$. For every $x\in B$, 
    \[
    V_x(x_\ast) \leq \frac{1}{2\gamma}d_f^2 +f_{\min} 
    \]
    If $\gamma> \frac{d_f^2}{2\mu}$ then, for all $M \neq x_\ast$,
    \[
    V_x(x_\ast) < f_{\min} + \mu \leq V_x(M).
    \]
    Meaning that for $\gamma$ sufficiently large, by Proposition \ref{prop:barycenter_in_convex_hull} there exists $\overline \delta>0$ and $r>0$ such that for all $\delta \leq \overline \delta$
    \[
    \Vert m_{\delta,\gamma} - x_\ast \Vert \leq r.
    \]
    Therefore, if there exists $r>0$, such that $\mathbb{B}(x_\ast,r) \subset \Att(x_\ast)$, then $x_+ = x_\ast$. Such statement holds in our context as $x_\ast$ is an isolated local minimizer, reached by $T_f$. We thus obtain immediate convergence if $\gamma$ is large enough and $\delta$ small enough.
    
    Now there are several scenarios:
    \begin{itemize}
        \item if progress (i.e., strict decrease) is made at every iteration, as iterates are bounded, we visit finitely many $M\in \mathcal{M}$ until convergence (see proof of Theorem \ref{th:ideal_pbh}).
        \item As derived just before, we can only "not improve" the next iterate finitely many times before reaching $\gamma_k$ large enough, hence we obtain convergence as soon as $\delta$ is small enough which happens eventually.
    \end{itemize} 
    Hence, we obtain convergence of the iterates to the global minimizer.
\end{proof}

\subsection{Proofs for approximate expectation PBH} \label{app:proofs_approx_expect_pbh}

\paragraph{Proof of Lemma \ref{lm:well_posed_N}.}
\begin{lemma}\label{lm:well_posed_N} Let $x\in \RR^d$, $\delta, \gamma >0$ and i.i.d. samples $\{y_i\} \sim Y_\delta \sim \mathcal{N}(x,\delta \gamma \Id)$. Then, almost surely,
\begin{equation}
    m_{N,\delta,\gamma}:=\frac{\sum_{i=1}^N T_f(y_i) \exp(-f\left(T_f(y_i) \right)/\delta)}{\sum_{i=1}^N \exp(-f\left(T_f(y_i) \right)/\delta)} \xrightarrow[N\rightarrow+\infty]{} m_{\delta,\gamma}.
\end{equation}
\end{lemma}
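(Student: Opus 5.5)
The plan is to write $m_{N,\delta,\gamma}$ as a ratio of two empirical means and apply the strong law of large numbers to the numerator and the denominator separately. Set
\[
A_N := \frac{1}{N}\sum_{i=1}^N T_f(y_i)\, e^{-f(T_f(y_i))/\delta}, \qquad B_N := \frac{1}{N}\sum_{i=1}^N e^{-f(T_f(y_i))/\delta},
\]
so that $m_{N,\delta,\gamma} = A_N/B_N$. The normalizing factor $1/N$ cancels in the ratio. The target is $m_{\delta,\gamma} = A/B$, where $A := \mathbb{E}[T_f(Y_\delta)e^{-f(T_f(Y_\delta))/\delta}]$ and $B := \mathbb{E}[e^{-f(T_f(Y_\delta))/\delta}]$.

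\textbf{Integrability.} The summands are i.i.d. because the $y_i$ are i.i.d. and $T_f$ and $f$ are measurable (Assumptions \ref{ass:f_properties1} and \ref{ass:Tf}).
\begin{itemize}
\item For the denominator, $f\geq f_{\min}$ on $\mathcal{M}_\ast$, so $0\leq e^{-f(T_f(y))/\delta}\leq e^{-f_{\min}/\delta}$ almost everywhere. The summands are therefore bounded and in particular integrable.
\item For the numerator, $\|T_f(y)\| e^{-f(T_f(y))/\delta}\leq e^{-f_{\min}/\delta}\|T_f(y)\|$, and $\|T_f\|$ is integrable against the Gaussian by Lemma \ref{lm:Tf_bounded} (using Assumption \ref{ass:growth}, or finiteness of $\mathcal{M}$).
\end{itemize}
The SLLN, applied componentwise for the vector-valued numerator, then gives $A_N\to A$ and $B_N\to B$ almost surely. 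These are two countable families of almost sure events, so their intersection still has probability one.

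\textbf{Positivity of $B$.} To pass to the ratio, I need $B>0$, which I expect to be the only step needing care. By Lemma \ref{lm:null_set} the sets $\Att(M)$ partition $\RR^d$ up to a null set. The Gaussian has full support and a density, so some $M$ satisfies $\mathbb{P}(Y_\delta\in\Att(M))>0$. On that event the weight equals $e^{-f(M)/\delta}>0$, hence $B\geq e^{-f(M)/\delta}\,\mathbb{P}(Y_\delta\in\Att(M))>0$. (One could also invoke the lower bound from the proof of Proposition \ref{prop:measure_concentration}, but this direct argument works for each fixed $\delta$.)

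\textbf{Conclusion.} Since $B_N\to B>0$ almost surely, we have $B_N>0$ for all large $N$ on that event; in fact $B_N>0$ for every $N$ almost surely, since each weight is positive almost surely. The map $(a,b)\mapsto a/b$ is continuous on $\RR^d\times(0,\infty)$, so $m_{N,\delta,\gamma}=A_N/B_N\to A/B=m_{\delta,\gamma}$ almost surely.
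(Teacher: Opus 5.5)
Your proposal is correct and follows essentially the same route as the paper. Like the paper, you write $m_{N,\delta,\gamma}$ as a ratio of empirical means, apply the strong law of large numbers to the bounded denominator and to the numerator via Lemma \ref{lm:Tf_bounded}, and pass to the ratio; your explicit check that $B>0$, which is needed for continuity of $(a,b)\mapsto a/b$, is left implicit in the paper and is a worthwhile addition.
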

\begin{proof}
    We have first that for all $N\in\mathbb{N}$,
    $
    \sum_{i=1}^N \exp(-f\left(T_f(y_i) \right)/\delta)>0,
    $ each term is integrable against the Gaussian measure, 
    and thus by the strong law of large numbers $
    \frac{1}{N}\sum_{i=1}^N \exp(-f\left(T_f(y_i) \right)/\delta) \xrightarrow[N\rightarrow+\infty]{} \mathbb{E}\left[e^{-f(T_f(Y_\delta))/\delta}\right]
    $ a.s..
    On the other end, by Lemma \ref{lm:Tf_bounded}, each term in $\sum_{i=1}^N T_f(y_i) \exp(-f\left(T_f(y_i) \right)/\delta)$ is integrable against the Gaussian measure, and thus by the strong law of large numbers $
    \frac{1}{N}\sum_{i=1}^N T_f(y_i)\exp(-f\left(T_f(y_i) \right)/\delta) \xrightarrow[N\rightarrow+\infty]{} \mathbb{E}\left[T_f(Y_\delta)e^{-f(T_f(Y_\delta))/\delta}\right]
    $ a.s.. Hence, $m_{N,\delta,\gamma} \xrightarrow[N\to +\infty]{} m_{\delta,\gamma}$ a.s..
\end{proof}

\paragraph{Proof of Lemma \ref{lm:m_N_good}.}
\begin{lemma}\label{lm:m_N_good}Let $x\in \RR^d$, $\delta, \gamma >0$ and i.i.d. samples $\{y_i\} \sim Y_\delta \sim \mathcal{N}(x,\delta \gamma \Id)$. 
    Suppose that there exists $M \in \mathcal{M}$ such that $\mathrm{dist}(m_{\delta,\gamma},\partial \Att(M))>0$.
    Then, almost surely,
    \begin{equation}
    T_f(m_{N,\delta,\gamma})\xrightarrow[N\rightarrow+\infty]{} T_f(m_{\delta,\gamma}).
\end{equation}
\end{lemma}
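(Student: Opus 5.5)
The plan is to combine the almost sure convergence of Lemma~\ref{lm:well_posed_N} with the local continuity of $T_f$ from Assumption~\ref{ass:Tf_continuity}. First I will fix how to read the hypothesis. The intended meaning is that $m_{\delta,\gamma}\in\Att(M)$ and that $r:=\mathrm{dist}(m_{\delta,\gamma},\partial\Att(M))>0$. A point of $\Att(M)$ at positive distance from $\partial\Att(M)$ is not a boundary point, so it lies in $\Att(M)^\circ$. Moreover, the open ball $\mathbb{B}(m_{\delta,\gamma},r)$ is connected, contains a point of $\Att(M)^\circ$, and does not meet $\partial\Att(M)$, so it is contained in $\Att(M)^\circ$. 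If the hypothesis were instead read with $m_{\delta,\gamma}\notin\Att(M)$, then $m_{\delta,\gamma}$ would lie in the interior of the complement, and the statement would carry no information about $T_f$. I will therefore state explicitly that $m_{\delta,\gamma}\in\Att(M)$ is part of the hypothesis.

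Second, I will invoke Lemma~\ref{lm:well_posed_N}. This gives an event $\Omega_0$ of probability one on which $m_{N,\delta,\gamma}\to m_{\delta,\gamma}$. I will check that this lemma only needs the integrability statements of Lemma~\ref{lm:Tf_bounded}, which hold under Assumption~\ref{ass:growth}, and that the denominator of $m_{N,\delta,\gamma}$ is positive. Hence, for each $\omega\in\Omega_0$, there is $N_0(\omega)$ such that $m_{N,\delta,\gamma}(\omega)\in\mathbb{B}(m_{\delta,\gamma},r)\subset\Att(M)^\circ$ for all $N\geq N_0(\omega)$.

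Third, on $\Att(M)^\circ$ the restriction of $T_f$ is continuous by Assumption~\ref{ass:Tf_continuity}. Since the tail of the sequence $m_{N,\delta,\gamma}(\omega)$ lies in $\Att(M)^\circ$ and converges to the point $m_{\delta,\gamma}$, which also lies in $\Att(M)^\circ$, sequential continuity gives $T_f(m_{N,\delta,\gamma}(\omega))\to T_f(m_{\delta,\gamma})$. This holds for every $\omega\in\Omega_0$, which is the claimed almost sure convergence.

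The main obstacle is the first step, namely making the hypothesis precise. The condition $\mathrm{dist}(\cdot,\partial\Att(M))>0$ alone does not place $m_{\delta,\gamma}$ inside $\Att(M)$, so the statement must be read with $m_{\delta,\gamma}\in\Att(M)$, equivalently $m_{\delta,\gamma}\in\Att(M)^\circ$. A second point is that Assumption~\ref{ass:Tf_continuity} is not listed in the lemma, but it has to be used; I will add it, as in Theorem~\ref{th:approx_expect_infty_pbh}. The remaining steps are routine.
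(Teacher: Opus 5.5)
Your proof is correct and is essentially the paper's argument: the almost sure convergence $m_{N,\delta,\gamma}\to m_{\delta,\gamma}$ from Lemma~\ref{lm:well_posed_N}, combined with the continuity of $T_f$ on $\Att(M)^\circ$ given by Assumption~\ref{ass:Tf_continuity}. Your extra precisions are sound and make explicit what the paper's one-line proof uses tacitly: reading the hypothesis as $m_{\delta,\gamma}\in\Att(M)$, hence $m_{\delta,\gamma}\in\Att(M)^\circ$, and adding both Assumption~\ref{ass:Tf_continuity} and the growth condition needed for Lemma~\ref{lm:Tf_bounded}.
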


\begin{proof}
    We have by Assumption \ref{ass:Tf_continuity} that $T_f$ is continuous on any interior of $\Att(M)$, hence the result follows from Lemma \ref{lm:well_posed_N}.
\end{proof}

\paragraph{Proof of Lemma \ref{lm:cluster-points-nearby-components}.}
\begin{lemma}\label{lm:cluster-points-nearby-components}
Let $(m_N)_{N\geq 1}$ be a sequence in $\RR^d$ such that
$m_N \rightarrow m, \quad N\rightarrow + \infty$.
Assume that for sufficiently large $N$, $m_N$ does not belong to $\mathcal{N}_s$ the measurable null set not included in the partition of $\RR^d$ defined by $\bigcup_{M \in \mathcal{M}} \Att(M)$ (Lemma \ref{lm:null_set}).
Then every accumulation point of the sequence $(T_f(m_N))_{N\geq 1}$ belongs to
\begin{equation}
\bigcup_{Q\in\mathrm{Adj}(m)} Q,
\qquad
\mathrm{Adj}(m):=\{Q\in\mathcal M:\ m\in \overline{\Att(Q)}\}.
\end{equation}
\end{lemma}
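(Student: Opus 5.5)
The argument is a compactness one: first the accumulation points are confined to a finite family of components, and then each is matched to a component whose attractor closure contains $m$.

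\textbf{Step 1 (a finite family of candidates).} Since $m_N\to m$, the sequence $(m_N)$ is bounded; say $\|m_N\|\le \rho$ for all $N$. For $N$ large, $m_N\notin\mathcal N_s$, so $T_f(m_N)\in\mathcal M_\ast$. Hence there is a unique $Q_N\in\mathcal M$ with $T_f(m_N)\in Q_N$, or equivalently $m_N\in\Att(Q_N)$. Next we bound $T_f(m_N)$. If $T_f$ does not increase $f$, i.e. $f(T_f(y))\le f(y)$ as used in Lemma \ref{lm:Tf_bounded}, then $f(T_f(m_N))\le \sup_{\|y\|\le\rho}f(y)<+\infty$ by continuity of $f$. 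Coercivity then puts $T_f(m_N)$ in a fixed compact ball $\overline{\mathbb B}(0,R)$. Under Assumption \ref{ass:growth} this bound is explicit, as in the proof of Lemma \ref{lm:Tf_bounded}. By Assumption \ref{ass:f_properties1}, only finitely many components $Q^1,\dots,Q^L$ of $\mathcal M$ meet $\overline{\mathbb B}(0,R)$. Thus, for $N$ large, $Q_N\in\{Q^1,\dots,Q^L\}$, and $(T_f(m_N))$ is bounded, so accumulation points exist.

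\textbf{Step 2 (identifying each accumulation point).} Let $p$ be an accumulation point, with $T_f(m_{N_k})\to p$. Since there are only finitely many labels, after a further extraction we may assume $Q_{N_k}=Q$ for a fixed $Q\in\{Q^1,\dots,Q^L\}$. Then $T_f(m_{N_k})\in Q$ for all $k$. We need $Q$ to be closed. Each element of $\mathcal M$ is a set of local minimizers on which $f$ is constant, equal to $\min_{\mathcal V} f$ on an open $\mathcal V\supsetneq Q$, with $f>f(Q)$ on $\mathcal V\setminus Q$. Hence $Q=\{z\in\mathcal V: f(z)=f(Q)\}$ is relatively closed in $\mathcal V$. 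Because $Q$ is bounded and $f$ is continuous, a limit point of $Q$ lying in $\mathcal V$ stays in $Q$. To make sure limit points stay in $\mathcal V$, I would shrink to a compact neighbourhood, or argue that $\overline Q\setminus Q$ is a boundary point where $f=f(Q)$, which contradicts strict separation along a path inside $\mathcal V$. This gives $p\in Q$. Finally, $m_{N_k}\in\Att(Q)$ and $m_{N_k}\to m$, so $m\in\overline{\Att(Q)}$, that is $Q\in\mathrm{Adj}(m)$. Therefore $p\in\bigcup_{Q\in\mathrm{Adj}(m)}Q$.

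\textbf{Main obstacle.} The delicate point is the closedness of $Q$, i.e. making sure the limit $p$ does not escape to $\partial Q\setminus Q$. I would handle it as sketched above: $f\equiv f(Q)$ on $\overline Q$ by continuity, and a point of $\overline Q\setminus Q$ would still lie in the open set $\mathcal V$, because $Q$ is a closed subset of the compact set $\{f\le f(Q)\}\cap\overline{\mathcal V'}$ for a slightly smaller neighbourhood $\mathcal V'$. There it would be a minimizer not in $Q$, contradicting the definition of $\mathcal M$. If this argument resists, I would add closedness of the components $M$ as a mild explicit hypothesis; it is automatic in all the tame examples of App. \ref{app:ass_f}. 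A secondary point is the bound $f(T_f(y))\le f(y)$, which I will take from the descent property of the solver already used in Lemma \ref{lm:Tf_bounded}.
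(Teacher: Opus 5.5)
Your Step 2 is the paper's proof. You extract a subsequence converging to the accumulation point, use local finiteness of $\mathcal M$ on a compact set to freeze one $Q$ with $T_f(m_{N_k})\in Q$ (so $m_{N_k}\in\Att(Q)$), and pass to the limit to get $m\in\overline{\Att(Q)}$.

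Step 1 is unnecessary, and it costs you a hypothesis the lemma does not have. The descent property $f(T_f(y))\le f(y)$ is not among the stated assumptions: Assumption \ref{ass:Tf} only asks for measurability and landing in $\mathcal M_\ast$ a.e. The paper only uses that $(T_f(m_{N_k}))_k$ is bounded because it converges. That already gives the compact set on which local finiteness applies. Since the claim is about accumulation points, you also need not show that any exist.

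The genuine gap is the closedness of $Q$. You are right that it is needed to get $p\in Q$; the paper's proof stops at $Q\in\mathrm{Adj}(m)$ and uses $\overline m\in Q$ tacitly. Your argument for it does not work. Relative closedness of $Q$ in $\mathcal V$ is fine. But whether a point $z\in\overline Q\setminus Q$ lies in $\mathcal V$ is exactly what is in question. Choosing $\mathcal V'$ with $\overline{\mathcal V'}\subset\mathcal V$ and $Q$ closed in $\{f\le f(Q)\}\cap\overline{\mathcal V'}$ presupposes $\overline Q\subset\mathcal V$, which is what you want to prove.

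The definition of $\mathcal M$ alone does not give closedness. In $d=1$, take $f(x)=-1+(x+1)^2$ for $x\le -1$, $f(x)=x$ on $[-1,0]$, $f=0$ on $[0,1]$, and $f(x)=(x-1)^2$ for $x\ge 1$. Then $Q=(0,1]$ with $\mathcal V=(0,2)$ meets every clause of the definition. Yet $0\in\overline Q\setminus Q$ and $0\notin\mathcal V$, so there is no ``strict separation inside $\mathcal V$'' to contradict.

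What excludes this is the local finiteness in Assumption \ref{ass:f_properties1}: here every $(a,1]$, $a\in[0,1)$, is also in $\mathcal M$ (with $\mathcal V=(a,2)$). More generally, let $z\in\overline Q\setminus Q$, so $z\notin\mathcal V$. Then $Q\setminus\overline{\mathbb B}(z,\varepsilon)$ is exactly the set of minimizers of $f$ on the open set $\mathcal V\setminus\overline{\mathbb B}(z,\varepsilon)$, with strict inequality elsewhere. Whenever these truncations stay connected, they are infinitely many distinct elements of $\mathcal M$ containing a fixed $q_0\in Q$. That contradicts local finiteness on the compact set $\{q_0\}$.

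So either derive closedness this way, or state it as an explicit hypothesis as in your fallback. The latter is what the paper's own proof implicitly relies on.
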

\begin{proof}
Let $\overline m$ be a cluster point of $(T_f(m_N))_{N\geq 1}$. Then there exists a
subsequence $(m_{N_k})_{k\geq 1}$ such that
\[
T_f(m_{N_k})\rightarrow \overline m.
\]
Since $(T_f(m_{N_k}))_{k\geq 1}$ converges, it is bounded. Hence there exists a compact set $K\subset \RR^d$ such that $
T_f(m_{N_k})\in K \text{ for all }k.
$
By assumption, only finitely many components of $\mathcal M$ intersect $K$.
Therefore, after extracting a further subsequence, there exists a component
$Q\in\mathcal M$ such that
\[
T_f(m_{N_{k_\ell}})\in Q \text{ for all }\ell \geq 1.
\]
On the other hand, for $k$ large enough one has $m_{N_k}\notin \mathcal N$, and therefore $m_{N_k}\in \Att(M_k)$ for a unique component $M_k\in\mathcal M$.
Because $T_f(m_{N_k})\in Q$ and $T_f(y)\in M$ whenever $y\in\Att(M)$, we must have $M_k=Q$. Hence for all sufficiently large $k$, $m_{N_k}\in \Att(Q).$
Passing to the limit and using $m_{N_k}\to m$, we obtain $m\in \overline{\Att(Q)}$.
Thus $Q\in \mathrm{Adj}(m)$.
\end{proof}

\paragraph{Proof of Theorem \ref{th:approx_expect_infty_pbh}.}
\begin{proof}
    First, as the number of samples grows to infinity, by Lemma \ref{lm:well_posed_N}, all $m_{N,\delta,\gamma}$ are close to $m_{\delta,\gamma}$, hence we the behavior of the algorithm is governed asymptotically by the behavior of Algorithm \ref{alg:exact_expect_pbh}.

    By rejecting worse update and coercivity of $f$ (Assumption \ref{ass:f_properties1}), iterates are bounded and stay in some bounded set $B \subset \RR^d$. Thus 
    \[
    d_f := \sup_{x \in B} \mathrm{dist}(x, \overline{\Att(x_\ast)}) <+\infty.
    \]
    Set $\gamma>0$. For every $x\in B$, 
    \[
    V_x(x_\ast) \leq \frac{1}{2\gamma}d_f^2 +f_{\min} 
    \]
    If $\gamma> \frac{d_f^2}{2\mu}$ then, for all $M \neq x_\ast$,
    \[
    V_x(x_\ast) < f_{\min} + \mu \leq V_x(M).
    \]
    Meaning that for $\gamma$ sufficiently large, by Proposition \ref{prop:barycenter_in_convex_hull} there exists $\overline \delta>0$ and $r_1>0$ such that for all $\delta \leq \overline \delta$
    \[
    \Vert m_{\delta,\gamma} - x_\ast \Vert \leq r_1.
    \]
    Moreover, for $N$ sufficiently large, there exists $r_2>0$ such that
    \[
    \Vert m_{N,\delta,\gamma} - m_{\delta,\gamma} \Vert \leq r_2.
    \]
    Therefore, as there exists $r>0$, such that $\mathbb{B}(x_\ast,r) \subset \Att(x_\ast)$, and $r_1+r_2\leq r$, then $x_+ = x_\ast$. We thus obtain immediate convergence if $\gamma$ is large enough, $N$ is large enough and $\delta$ small enough.
    
    Now as in Theorem \ref{th:exact_expect_pbh}, there are several scenarios:
    \begin{itemize}
        \item if progress (i.e., strict decrease) is made at every iteration, as iterates are bounded, we visit finitely many $M\in \mathcal{M}$ until convergence (see proof of Theorem \ref{th:ideal_pbh}). We can however visit non minimizing point as $m_{N,\delta,\gamma}$ may sit with non zero probability in the null set of the partition defined by $\Att(M)$ for all $M\in\mathcal M$, hence finite convergence cannot be guaranteed to happen.
        \item As derived just before, we can only "not improve" the next iterate finitely many times before reaching $\gamma_k$ large enough, hence we obtain convergence as soon as $\delta$ is small enough, and $N$ is big enough which happens eventually.
    \end{itemize} 
    Hence, we obtain convergence of the iterates to the global minimizer.
\end{proof}

\paragraph{Proof of Proposition \ref{prop:high_proba_m}.}
\begin{proof}
Denote 
$m_{N,\delta,\gamma}=\frac{A_N}{B_N}$,
$m_{\delta,\gamma}=\frac{A}{B}$,
where
\[
A_N=\frac{1}{N}\sum_{i=1}^N T_f(y_i)\exp(-f(T_f(y_i))/\delta), \text{ and } A=\mathbb E[T_f(Y_\delta)\exp(-f(T_f(Y_\delta))/\delta)],
\] 
while 
\[
B_N=\frac{1}{N}\sum_{i=1}^N \exp(-f(T_f(y_i))/\delta), \text{ and } B=\mathbb E[\exp(-f(T_f(Y_\delta))/\delta)].
\]
We want to control 
\[ 
\left\Vert \frac{A_N}{B_N} - \frac{A}{B} \right\Vert = \left\Vert \frac{B(A_N - A)-A(B_N-B)}{B B_N}\right\Vert,  
\]
w.r.t. quantities that we control, i.e., $\| A_N - A \|$ and $|B_N-B|$. 
We first look at the denominator. Set $0<\eta<B$ and the event $\{|B_N-B|\leq \eta\}$. On this event, we have $B_N\ge B-\eta>0$, implying by the triangular inequality that
\[
\left\Vert m_{N,\delta,\gamma}-m_{\delta,\gamma}\right\Vert = \left\|\frac{B(A_N-A)-A(B_N-B)}{BB_N}\right\| \leq \frac{\|A_N-A\|}{B-\eta} + \frac{\|A\|\,|B_N-B|}{B(B-\eta)}.
\]
Now, to obtain $\|m_{N,\delta,\gamma}-m_{\delta,\gamma}\| \leq \varepsilon,$ we bound the two terms on the righthandside of this inequality w.r.t. $B-\eta$. If $\|A_N-A\|\leq \frac{\varepsilon}{2}(B-\eta)$, and 
$|B_N-B| \leq \min\left(\eta,\frac{\varepsilon B(B-\eta)}{2\|A\|}\right)$,
then we obtain the desired bound. It remains to quantify the probability of these two bounds holding together. Indeed, we have 
\[
\left\{ \|m_{N,\delta,\gamma}-m_{\delta,\gamma}\| \leq \varepsilon\right\} \subset \left\{ \|A_N-A\|\leq \frac{\varepsilon}{2}(B-\eta) \right\} \bigcap \left\{ |B_N-B| \leq \min\left(\eta,\frac{\varepsilon B(B-\eta)}{2\|A\|}\right)\right\}.
\]
The union bound on the complements then tells us that 
\[
\mathbb P \left(\|m_{N,\delta,\gamma}-m_{\delta,\gamma}\|\le \varepsilon\right)
\geq 1 - \mathbb P \left(\|A_N-A\|>\frac{\varepsilon}{2}(B-\eta)\right) - \mathbb P \left(|B_N-B|>\min\left(\eta,\frac{\varepsilon B(B-\eta)}{2\|A\|}\right)\right).
\]
The two righthandside probabilities can be controlled through Markov's and Chebyshev's inequality. We can apply it through the growth control on $f$ and Lemma \ref{lm:Tf_bounded} as $\Vert A_N -A \Vert$ and $|B_N-B|$ are both square integrable against the Gaussian measure. We have
\[
\mathbb P(\| A_N-A \|>t)\leq \frac{ \mathbb E[\| A_N - A \|^2] }{t^2}, \text{ and } \mathbb P(|B_N-B|>s)\leq \frac{\mathrm{Var}(B_N)}{s^2}.
\]
As 
\begin{align*}
    \mathbb E[\| A_N - A \|^2] & = \mathbb E \left[ \frac{1}{N^2} \left\langle \sum_{i=1}^N A^i_N -A, \sum_{i=1}^N A^i_N -A \right\rangle \right] \\
    & = \mathbb E \left[ \frac{1}{N^2} \sum_{i,j=1}^N\left\langle  A^i_N -A, A^j_N -A \right\rangle \right] \\
    & = \frac{1}{N^2} \sum_{i,j=1}^N \mathbb E \left\langle  A^i_N -A, A^j_N -A \right\rangle 
\end{align*}
For $i\neq j$ by independence, we have $\mathbb E \left\langle  A^i_N -A, A^j_N -A \right\rangle = 0$. Remain the $N$, $i=j$ terms, giving 
\[
\mathbb E \left[\|A_N-A\|^2\right] = \frac{1}{N} \mathbb E\|T_f(y_1)\exp(-f(T_f(y_1))/\delta)-A\|^2, 
\] 
while
\[
\mathrm{Var}(B_N)=\frac{\mathrm{Var}(\exp(-f(T_f(y_1))/\delta))}{N},
\]
Therefore, we obtain
\[
\mathbb P \left(\|m_{N,\delta,\gamma}-m_{\delta,\gamma}\| \leq \varepsilon\right)
\geq 1- \frac{4\,\mathbb E\|T_f(y_1)\exp(-f(T_f(y_1))/\delta)-A\|^2}{N\varepsilon^2(B-\eta)^2}
- \frac{\mathrm{Var}(\exp(-f(T_f(y_1))/\delta))}{N\min\left(\eta,\frac{\varepsilon B(B-\eta)}{2\|A\|}\right)^2}.
\]
We deduce then the existence of $\alpha$ by setting $N$ large enough.
\end{proof}
The dependence on $\delta$ may appear highly problematic, but also note that in practice $f(T_f(y_1))$ should be of the order of $\delta$ for our best samples.
\paragraph{Proof of Theorem \ref{th:approx_expect_finite_pbh}.}
\begin{proof}
    The number of samples is finite, by Lemma \ref{lm:well_posed_N}, all $m_{N,\delta,\gamma}$ are close to $m_{\delta,\gamma}$, hence we the behavior of the algorithm is governed asymptotically by the behavior of Algorithm \ref{alg:exact_expect_pbh}.

    By rejecting worse update and coercivity of $f$ (Assumption \ref{ass:f_properties1}), iterates are bounded and stay in some bounded set $B \subset \RR^d$. Thus 
    \[
    d_f := \sup_{x \in B} \mathrm{dist}(x, \overline{\Att(x_\ast)}) <+\infty.
    \]
    Set $\gamma>0$. For every $x\in B$, 
    \[
    V_x(x_\ast) \leq \frac{1}{2\gamma}d_f^2 +f_{\min} 
    \]
    If $\gamma> \frac{d_f^2}{2\mu}$ then, for all $M \neq x_\ast$,
    \[
    V_x(x_\ast) < f_{\min} + \mu \leq V_x(M).
    \]
    Meaning that for $\gamma$ sufficiently large, by Proposition \ref{prop:barycenter_in_convex_hull} there exists $\overline \delta>0$ and $r_1>0$ such that for all $\delta \leq \overline \delta$
    \[
    \Vert m_{\delta,\gamma} - x_\ast \Vert \leq r_1.
    \]
    For $N$ sufficiently large (Proposition \ref{prop:high_proba_m}), there exists $\varepsilon>0$ and $0<\alpha<1$ such that
    \[
    \mathbb P \left(\Vert m_{N,\delta,\gamma} - m_{\delta,\gamma} \Vert \leq \varepsilon \right) \geq \alpha.
    \]
    Therefore, as there exists $r>0$, such that $\mathbb{B}(x_\ast,r) \subset \Att(x_\ast)$, we need $\varepsilon$ small enough so that and $r_1+\varepsilon\leq r$, then with probability superior or equal to $\alpha$, $x_+ = x_\ast$. Thus, if $N$ is large enough so this $\varepsilon$ is small enough, then the high probability follows.%
    
    The rest of the proof is the same as in Theorem \ref{th:approx_expect_infty_pbh}.
    Hence, we obtain convergence of the iterates to the global minimizer with high probability.
\end{proof}
The trick here is that we do not need an uniform control on the probability on all the iterates. We simply need to wait for $\gamma$ to be sufficiently large so that one of the iterates theoretically has $x_\ast$ as one of its minimizers of the potential $V_x$, then if we have enough samples we reach the correct solution with high probability. 

\section{Additional results}
We display additional results for the full scaling law (all parameters available to be fitted \cite{shukorscaling}). The results are worse (in terms of prediction) than those of the additive law, in average. It was expected given the results of \cite{shukorscaling}. Our algorithm stil outperforms the others.
\begin{table}[ht]
\centering
\label{tab:full_pbh_mre_by_samples_lbfgs10}

\small
\setlength{\tabcolsep}{2.6pt}
\begin{tabular}{|ll|ccccccc|cc|}
\hline
Family & Domain & $N=5$ & $N=10$ & $N=15$ & $N=20$ & $N=30$ & $N=40$ & $N=50$ & ZOP & BH \\
\hline
\multirow{7}{*}{\shortstack{LLM\\($d=47$)}}
& arxiv         & 12.085 & 12.303 & \textbf{11.681} & 12.358 & 11.861 & 12.177 & 12.694 & 12.646 & 12.379 \\
& book          & \textbf{10.046} & 10.949 & 10.173 & 10.722 & 10.718 & 10.264 & 10.265  & 10.800 & 10.649 \\
& c4            & 10.937 & 10.949 & 10.874 & 10.936 & 11.115 & 11.472 & \textbf{10.716}  & 11.309 & 11.314 \\
& commoncrawl   & \textbf{10.445} & 11.038 & 11.834 & 11.710 & 11.450 & 11.516 & 11.256  & 11.677 & 11.464 \\
& github        & 17.323 & 17.385 & 18.258 & \textbf{17.305} & 18.434 & 17.899 & 18.470  & 19.882 & 18.847 \\
& stackexchange & \textbf{12.488} & 12.891 & 12.943 & 13.162 & 13.110 & 13.024 & 13.073  & 13.620 & 13.374 \\
& wikipedia     & \textbf{19.658} & 19.859 & 20.044 & 19.664 & 19.915 & 19.739 & 19.749  & 20.529 & 19.827 \\
\hline
\multirow{4}{*}{\shortstack{LVM\\($d=29$)}}
& alttext       & \textbf{4.117} & 6.703 & 11.270 & 10.317 & 12.237 & 14.245 & 14.062 & 13.176 & 10.130 \\
& highquality1  & \textbf{7.634} & 7.758 & 9.393 & 12.712 & 18.447 & 16.315 & 19.492 & 21.979 & 11.756 \\
& highquality2  & \textbf{7.536} & 10.969 & 9.483 & 11.711 & 15.700 & 18.769 & 19.177  & 21.367 & 12.207 \\
& synthetic     & 11.435 & 14.736 & \textbf{11.070} & 13.664 & 14.441 & 16.844 & 21.105 & 29.138 & 15.884 \\
\hline
\multirow{3}{*}{\shortstack{NMM\\($d=23$)}}
& captions      & \textbf{10.944} & 12.190 & 11.921 & 12.111 & 12.098 & 12.596 & 12.347  & 11.351 & 11.317 \\
& interleaved   & 10.839 & \textbf{8.768} & 10.925 & 11.097 & 10.969 & 10.586 & 10.706  & 10.178 & 10.907 \\
& text          & \textbf{6.565} & 9.392 & 8.811 & 9.057 & 9.313 & 9.174 & 9.131 & 8.774 & 8.120 \\
\hline
\end{tabular}
\caption{Test \(
    \mathrm{MRE}
    =
    \frac{|\text{prediction} - \text{observation}|}{\text{observation}}\) (\%) for fitting full scaling law. 
(\texttt{lbfgs steps=10}, \texttt{CPU time=300s}). 
We set $\gamma = 10$ and $\delta = 10^{-3}$ at initialization. Results averaged over three seeds.
} 
\end{table}
\begin{table}[ht]
\centering
\small
\setlength{\tabcolsep}{2.6pt}
\begin{tabular}{|ll|ccccccc|cc|}
\hline
Family & Domain & $N=5$ & $N=10$ & $N=15$ & $N=20$ & $N=30$ & $N=40$ & $N=50$ & ZOP & BH \\
\hline
\multirow{7}{*}{\shortstack{LLM\\($d=47$)}}
& arxiv         & 12.373 & 12.445 & \textbf{11.820} & 12.478 & 12.318 & 12.561 & 12.387  & 12.646 & 12.269 \\
& book          & \textbf{9.910} & 10.947 & 10.529 & 10.590 & 10.451 & 10.295 & 10.256 & 10.800 & 10.629 \\
& c4            & 11.226 & 11.266 & 11.571 & 11.161 & 11.201 & 11.203 & \textbf{10.992}  & 11.309 & 11.060 \\
& commoncrawl   & \textbf{10.650} & 11.254 & 11.733 & 11.249 & 11.321 & 11.679 & 11.331  & 11.677 & 11.474 \\
& github        & \textbf{16.579} & 18.328 & 18.947 & 16.835 & 18.686 & 18.750 & 18.329  & 19.882 & 18.243 \\
& stackexchange & \textbf{11.735} & 12.606 & 13.342 & 12.858 & 13.802 & 13.620 & 13.517  & 13.620 & 13.339 \\
& wikipedia     & \textbf{19.285} & 19.407 & 19.909 & 19.398 & 19.976 & 20.077 & 19.930  & 20.529 & 19.744 \\
\hline
\multirow{4}{*}{\shortstack{LVM\\($d=29$)}}
& alttext       & \textbf{4.438} & 6.761 & 13.017 & 10.932 & 13.403 & 14.336 & 14.217  & 13.176 & 10.094 \\
& highquality1  & \textbf{7.228} & 7.850 & 10.172 & 12.248 & 18.583 & 17.110 & 19.230  & 21.979 & 11.735 \\
& highquality2  & \textbf{7.305} & 10.519 & 9.581 & 12.259 & 17.326 & 18.470 & 19.405  & 21.367 & 12.217 \\
& synthetic     & \textbf{9.318} & 11.464 & 10.898 & 15.507 & 18.410 & 17.687 & 20.299  & 29.138 & 18.764 \\
\hline
\multirow{3}{*}{\shortstack{NMM\\($d=23$)}}
& captions      & \textbf{10.727} & 11.847 & 11.867 & 12.172 & 12.224 & 12.383 & 12.094  & 11.351 & 11.308 \\
& interleaved   & 10.561 & \textbf{8.522} & 10.151 & 11.040 & 10.564 & 10.552 & 10.649  & 10.178 & 10.906 \\
& text          & \textbf{7.202} & 8.772 & 8.902 & 9.023 & 9.787 & 9.147 & 9.147 & 8.774 & 8.120 \\
\hline
\end{tabular}
\caption{Test \(
    \mathrm{MRE}
    =
    \frac{|\text{prediction} - \text{observation}|}{\text{observation}}\) (\%) for fitting full scaling law. 
(\texttt{lbfgs steps=5}, \texttt{CPU time=300s}). 
We set $\gamma = 10$ and $\delta = 10^{-3}$ at initialization. Results averaged over three seeds.
}
\label{tab:full_pbh_mre_by_samples}
\end{table}
\clearpage

\end{document}